%
\documentclass[runningheads]{llncs}
\usepackage[T1]{fontenc}
%
\usepackage{graphicx}
%
%
\usepackage{newfloat}
\usepackage{listings}
\usepackage{algorithm}
\usepackage{algorithmic}
\usepackage{graphicx}
\usepackage{amsmath,amssymb}
\usepackage{mathrsfs}
\newtheorem{prop}{Proposition}

\usepackage{stfloats} 
\usepackage{wrapfig}
\usepackage{miniplot}
\usepackage{multirow}
\usepackage{booktabs}
\usepackage{makecell}
\usepackage{hhline}
\usepackage{multicol}
\usepackage{caption}
\usepackage{subcaption}
\usepackage{graphicx}
\captionsetup{compatibility=false}
\usepackage{wrapfig}
\usepackage[misc]{ifsym}
\usepackage{appendix}
\usepackage{etoc}

\begin{document}
\title{Exploring the Training Robustness of Distributional Reinforcement Learning against Noisy State Observations}
%
%
\author{Ke Sun\inst{1} \and Yingnan Zhao\inst{2} \and
Shangling Jui\inst{3} \and 
Linglong Kong\inst{1 \thanks{Corresponding Author}}(\Letter)}
\authorrunning{K. Sun et al.}
%
\institute{University of Alberta, Edmonton, Alberta, Canada \\
\email{\{ksun6,lkong\}@ualberta.ca}\\
\and Harbin Engineering University, China \\
\email{zhaoyingnan@hrbeu.edu.cn}\\
\and Huawei Kirin Solution\\
\email{jui.shangling@huawei.com}}
\maketitle              
\begin{abstract}
	In real scenarios, state observations that an agent observes may contain measurement errors or adversarial noises, misleading the agent to take suboptimal actions or even collapse while training. In this paper, we study the training robustness of distributional Reinforcement Learning~(RL), a class of state-of-the-art methods that estimate the whole distribution, as opposed to only the expectation, of the total return. Firstly, we validate the contraction of distributional Bellman operators in the State-Noisy Markov Decision Process~(SN-MDP), a typical tabular case that incorporates both random and adversarial state observation noises. In the noisy setting with function approximation, we then analyze the vulnerability of least squared loss in expectation-based RL with either linear or nonlinear function approximation. By contrast, we theoretically characterize the bounded gradient norm of distributional RL loss based on the categorical parameterization equipped with the Kullback–Leibler~(KL) divergence. The resulting stable gradients while the optimization in distributional RL accounts for its better training robustness against state observation noises. Finally, extensive experiments on the suite of environments verified that distributional RL is less vulnerable against both random and adversarial noisy state observations compared with its expectation-based counterpar~\footnote{Code is available in \url{https://github.com/datake/RobustDistRL}.}.
	
	\keywords{Distributional Reinforcement Learning  \and State Observation Noise \and Robustness.}
\end{abstract}

\section{Introduction}
Learning robust and high-performance policies for continuous state-action
reinforcement learning~(RL) domains is crucial to enable the successful adoption of deep RL in robotics, autonomy, and control problems. However, recent works have demonstrated that deep RL algorithms are vulnerable either to model uncertainties or external disturbances~\cite{huang2017adversarial,pattanaik2017robust,ilahi2020challenges,chen2019adversarial,zhang2020robust,shen2020deep,singh2020improving,guan2020robust}. Particularly, model uncertainties normally occur in a noisy reinforcement learning environment where the agent often encounters systematic or stochastic measurement errors on state observations, such as the inexact locations and velocity obtained from the equipped sensors of a robot. Moreover, external disturbances are normally adversarial in nature. For instance, the adversary can construct adversarial perturbations on state observations to degrade the performance of deep RL algorithms. These two factors lead to noisy state observations that influence the performance of algorithms, precluding the success of RL algorithms in real-world applications.

Existing works mainly focus on improving the robustness of algorithms in the \textit{test environment} with noisy state observations. Smooth
Regularized Reinforcement Learning~\cite{shen2020deep} introduced a regularization to enforce smoothness in the learned policy, and thus improved its robustness against measurement errors in the test environment. Similarly, the State-Adversarial Markov Decision Process~(SA-MDP)~\cite{zhang2020robust} was proposed and the resulting principled policy regularization enhances the adversarial robustness of various kinds of RL algorithms against adversarial noisy state observations. However, both of these works assumed that the agent can access \textit{clean} state observations \textit{during the training}, which is normally not feasible when the environment is inherently noisy, such as unavoidable measurement errors. Hence, the maintenance and formal analysis of policies robust to noisy state observations  \textit{during the training} is a worthwhile area of research.

Recent distributional RL algorithms, e.g., C51~\cite{bellemare2017distributional}, Quantile-Regression DQN (QRDQN)~\cite{dabney2017distributional}, Implicit Quantile Networks~(IQN)~\cite{dabney2018implicit} and Moment-Matching DQN~(MMD)~\cite{nguyen2020distributional}, constantly set new records in Atari games, gaining huge attention in the research community. Existing literature mainly focuses on the performance of distributional RL algorithms, but \textit{other benefits, including the robustness in the noisy environment, of distributional RL algorithms are less studied}. As distributional RL can leverage additional information about the return distribution that captures the uncertainty of the environment more accurately, it is natural to expect that distributional RL with this better representation capability can be less vulnerable to the noisy environment while training, which motivates our research. In this paper, we probe the robustness superiority of distributional RL against various kinds of state observation noises during the training process. Our contributions can be summarized as follows:

\begin{itemize}
	\item \textbf{Tabular setting}. We firstly analyze a systematical noisy setting, i.e., State-Noisy Markov Decision Process~(SN-MDP), incorporating both random and adversarial state observation noises. Theoretically, we derive the convergence of distributional Bellman operator in SN-MDP.
	
	\item \textbf{Function approximation setting}. We elaborate the additional convergence requirement of linear Temporal difference~(TD) when exposed to noisy state observations. To clearly compare with distributional RL, we attribute its robustness advantage to the bounded gradients norms regarding state features based on the categorical parameterization of return distributions. This stable optimization behavior is in contrast to the potentially unbounded gradient norms of expectation-based RL.
	
	\item \textbf{Experiments}. We demonstrate that distributional RL algorithms potentially enjoy better robustness under various types of noisy state observations across a wide range of classical and continual control environments as well as Atari games. Our conclusion facilitates the deployment of  distributional RL algorithms in more practical noisy settings.
	
\end{itemize}

\section{Background: Distributional RL}\label{Section:background}

In the tabular setting without noisy states, the interaction of an agent with its environment can be naturally modeled as a standard Markov Decision Process~(MDP), a 5-tuple ($\mathcal{S}, \mathcal{A}, R, P, \gamma$). $\mathcal{S}$ and $\mathcal{A}$ are the state and action spaces, $P: \mathcal{S} \times \mathcal{A} \times \mathcal{S} \rightarrow [0, 1]$ is the environment transition dynamics, $R: \mathcal{S} \times \mathcal{A} \times \mathcal{S} \rightarrow \mathbb{R}$ is the reward function and $\gamma \in (0,1)$ is the discount factor.

\paragraph{Value Function vs Return Distribution.} Firstly, we denote the \textit{return} as  $Z^{\pi}(s)=\sum_{k=0}^{\infty} \gamma^k r_{t+k+1}$, where $s_t=s$, representing the cumulative rewards following a policy $\pi$, and $r_{t+k+1}$ is reward scalar obtained in the step $t+k+1$. In the algorithm design, classical expectation-based RL normally focuses on \textit{value function} $V^{\pi}(s)$, the expectation of the random variable $Z^{\pi}(s)$:
\begin{eqnarray}\begin{aligned}\label{value_function}
		V^{\pi}(s):=\mathbb{E}\left[Z^{\pi}(s)\right]=\mathbb{E}\left[\sum_{k=0}^{\infty} \gamma^{k} r_{t+k+1} \mid s_{t}=s\right].
\end{aligned}\end{eqnarray}
In distributional RL we focus on the \textit{return distribution}, the full distribution of $Z^{\pi}(s)$, and the \textit{state-action return distribution} $Z^{\pi}(s, a)$ in the control case where $s_t=s, a_t=a$. Both of these distributions can better capture the uncertainty of returns in the MDP beyond just its expectation~\cite{dabney2018implicit,mavrin2019distributional}.

\paragraph{Distributional Bellman Operator.} In expectation-based RL, we update the value function via the Bellman operator $\mathcal{T}^\pi$, while in distributional RL, the updating is applied on the return distribution via the \textit{distributional Bellman operator} $\mathfrak{T}^{\pi}$. To derive $\mathfrak{T}^{\pi}$, we firstly define the transition operator $P^{\pi}:\mathcal{Z}\rightarrow \mathcal{Z}$:
\begin{equation}\begin{aligned}\label{eq_distributionbellman_transition}
		\mathcal{P}^{\pi} Z(s, a) : \overset{D}{=} Z\left(S^{\prime}, A^{\prime}\right), S^{\prime} \sim P(\cdot | s, a), A^{\prime} \sim \pi\left(\cdot | S^{\prime}\right),
\end{aligned}\end{equation}
where we use capital letters $S^{\prime}$ and $A^{\prime}$ to emphasize the random nature of both, and $:\overset{D}{=}$ indicates convergence in distribution. For simplicity, we denote $Z^{\pi}(s, a)$ by $Z(s, a)$. Thus, the distributional Bellman operator $\mathfrak{T}^{\pi}$ is defined as:	
\begin{equation}\begin{aligned}\label{eq_distributionbellman}
		\mathfrak{T}^{\pi} Z(s, a) : \overset{D}{=} R(s, a, S')+\gamma \mathcal{P}^{\pi} Z(s, a).
\end{aligned}\end{equation}
$\mathfrak{T}^{\pi}$ is still a contraction for policy evaluation under  the maximal form of either Wasserstein metric $d_p$~(Quantile Regression distributional RL)~\cite{bellemare2017distributional,dabney2017distributional} or the categorical parameterization equipped with Kullback–Leibler~(KL) divergence~(Categorical distributional RL)~\cite{bellemare2017distributional} over the target and current return distributions. 

\section{Tabular Case: State-Noisy MDP}\label{Section:SNMDP}

In this section, we extend State-Adversarial Markov Decision Process~(SA-MDP)~\cite{zhang2020robust} to a more general State-Noisy Markov Decision Process~(SN-MDP) by incorporating both random and adversarial state noises, and particularly provide a proof of the convergence and contraction of distributional Bellman operators in this setting.

\begin{wrapfigure}[12]{r}{0.6\textwidth}
	\centering
	\includegraphics[width=0.6\textwidth,trim=10 170 250 220,clip]{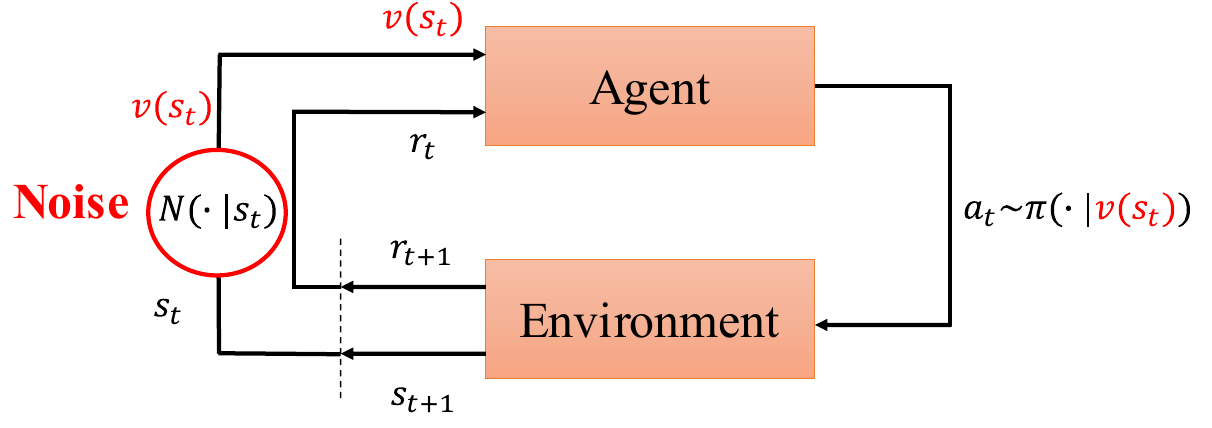}
	\caption{State-Noisy Markov Decision Process. $v(s_t)$ is perturbed by the noise mechanism $N$.}
	\label{Figure_SNMDP}
\end{wrapfigure}

\paragraph{Definitions.} SN-MDP is a 6-tuple ($\mathcal{S}, \mathcal{A}, R, P, \gamma, N$), as exhibited in Figure~\ref{Figure_SNMDP}, where the noise generating mechanism $N(\cdot|s)$ maps the state from $s$ to $v(s)$ using either random or adversarial noise with the Markovian and stationary probability $N(v(s)|s)$. It is worthwhile to note that the explicit definition of the noise mechanism $N$ here is based on discrete state transitions, but the analysis can be naturally extended to the continuous case if we let the state space go to infinity. Moreover, let $\mathcal{B}(s)$ be the set that contains the allowed noise space for the noise generating mechanism $N$, i.e., $v(s)\in \mathcal{B}(s)$.	

Following the setting in \cite{zhang2020robust}, we only manipulate state observations but do not change the underlying environment transition dynamics based on $s$ or the agent's actions directly. As such, our SN-MDP is more suitable to model the random measurement error, e.g., sensor errors and equipment inaccuracies, and adversarial state observation perturbations in safety-critical scenarios. This setting is also aligned with many adversarial attacks on state observations~\cite{huang2017adversarial,lin2017tactics}. The following contractivity analysis regarding value function or distribution is directly based the state $s$ rather than $v(s)$ as it is more natural and convenient to capture the uncertainty of MDP.

\subsection{Analysis of SN-MDP for Expectation-based RL}
We define the value function $\tilde{V}_{\pi \circ N}$ given $\pi$ in SN-MDP. The Bellman Equations regarding the new value function $\tilde{V}_{\pi \circ N}$ are given by:	
\begin{equation}\label{eq:BE_SNMDP} \begin{aligned}
		\tilde{V}_{\pi \circ N}(s) = \sum_{a} \sum_{v(s)} N(v(s)|s) \pi(a|v(s)) \sum_{s^\prime}p(s^{\prime}|s,a)  \left[R(s,a,s^{\prime})+\gamma \tilde{V}_{\pi \circ N}(s^\prime) \right],
\end{aligned}\end{equation}
where the random noise transits $s$ into $v(s)$ with a certain probability and the adversarial noise is the special case of $N(v(s)|s)$ where $N(v^{*}(s)|s)=1$ if $v^{*}(s)$ is the optimal adversarial noisy state given $s$, and $N(v(s)|s)=0$ otherwise. We denote Bellman operators under random noise mechanism $N^r(\cdot|s)$ and adversarial noise mechanism $N^*(\cdot|s)$ as $\mathcal{T}^{\pi}_r$ and $\mathcal{T}^{\pi}_a$, respectively. This implies that $\mathcal{T}^{\pi}_r \tilde{V}_{\pi \circ N} = \tilde{V}_{\pi \circ N^r}$ and $\mathcal{T}^{\pi}_a \tilde{V}_{\pi \circ N} = \tilde{V}_{\pi \circ N^*}$. We extend Theorem 1 in \cite{zhang2020robust} to both random and adversarial noise scenarios, and immediately obtain that both $\mathcal{T}^{\pi}_r$ and $\mathcal{T}^{\pi}_a$ are contraction operators in SN-MDP. We provide a rigorous description in Theorem~\ref{theorem:SNMDP} with the proof in Appendix~\ref{appendix:theorem:SNMDP}. 

The insightful and pivotal conclusion from Theorem~\ref{theorem:SNMDP} is $\mathcal{T}^{\pi}_a \tilde{V}_{\pi \circ N} = \min_{N} \tilde{V}_{\pi \circ N}$. This implies that the adversary attempts to minimize the value function, forcing the agent to select the worse-case action among the allowed transition probability space $N(\cdot|s)$ for each state $s$. The crux of the proof is that Bellman updates in SN-MDP result in the convergence to the value function for another ``merged'' policy $\pi^\prime$ where $\pi^{\prime}(a|s)=\sum_{v(s)} N(v(s)|s) \pi(a|v(s))$. Nevertheless, the converged value function corresponding to the merged policy might be far away from that for the original policy $\pi$, which is more likely to worsen the performance of RL algorithms.

\subsection{Analysis of SN-MDP in distributional RL}

In the SN-MDP setting for distributional RL, the new distributional Bellman equations use new transition operators in place of  $\mathcal{P}^{\pi}$ in Eq.~\ref{eq_distributionbellman_transition}. The new transition operators $\mathcal{P}^{\pi}_r$ and $\mathcal{P}^{\pi}_a$, for the random and adversarial settings, are defined as:	
\begin{equation}\begin{aligned}
		\mathcal{P}^{\pi}_r Z_N(s, a) : &\overset{D}{=}  Z_{N^r}(S^{\prime}, A^{\prime}), A^{\prime} \sim \pi(\cdot | V(S^{\prime})), \text{and} \\ \ \mathcal{P}^{\pi}_a Z_N(s, a) : &\overset{D}{=} Z_{N^*}(S^{\prime}, A^{\prime}), A^{\prime} \sim \pi(\cdot | V^*(S^{\prime})),  
\end{aligned}\end{equation}
where $V(S^\prime)\sim N^r(\cdot|S^{\prime})$ is the state random variable after the transition, and $V^*(S^{\prime})$ is attained from $N^*(\cdot|S^{\prime})$ under the optimal adversary. Besides, $S^{\prime} \sim P(\cdot | s, a)$. Therefore, the corresponding new distributional Bellman operators $\mathfrak{T}^{\pi}_r$ and $\mathfrak{T}^{\pi}_a$ are formulated as:
\begin{equation}\label{eq:distributionalBE_SNMDP}\begin{aligned}
		\mathfrak{T}^{\pi}_r Z_N(s, a) : &\overset{D}{=} R(s, a, S')+\gamma \mathcal{P}^{\pi}_r Z_N(s, a),  \text{and} \\ \ \mathfrak{T}^{\pi}_a Z_N(s, a) : &\overset{D}{=} R(s, a, S')+\gamma \mathcal{P}^{\pi}_a Z_N(s, a).
\end{aligned}\end{equation}
In this sense, four sources of randomness define the new compound distribution in the SN-MDP: (1) randomness of reward, (2) randomness in the new environment transition dynamics $\mathcal{P}^{\pi}_r$ or $\mathcal{P}^{\pi}_a$ that additionally includes (3) the stochasticity of the noisy transition $N$, and (4) the random next-state return distribution $Z(S^{\prime}, A^{\prime})$. As our first theoretical contribution, we now show that the new derived distribution Bellman Operators defined in Eq.~\ref{eq:distributionalBE_SNMDP} in SN-MDP setting are convergent and contractive for policy evaluation in Theorem~\ref{theorem:SNMDP_dRL}.

\begin{theorem}\label{theorem:SNMDP_dRL}(Convergence and Contraction of Distributional Bellman operators in the SN-MDP) Given a policy $\pi$, we define the distributional Bellman operators $\mathfrak{T}^{\pi}_r$ and $\mathfrak{T}^{\pi}_a$  in Eq.~\ref{eq:distributionalBE_SNMDP}, and consider the Wasserstein metric $d_p$, the following results hold.
	
	\noindent	(1) $\mathfrak{T}^{\pi}_r$ is a contraction under the maximal form of $d_p$.
	
	\noindent	(2) $\mathfrak{T}^{\pi}_a$ is also a contraction under the maximal form of $d_p$ when $p=1$, following the greedy adversarial rule, i.e., $N^*(\cdot|s^\prime) = \mathop{\arg\min}_{N(\cdot|s^\prime)} \mathbb{E}\left[Z(s^\prime, a^\prime)\right]$ where $a^\prime\sim\pi(\cdot|V(s^\prime))$ and $V(s^\prime)\sim N(\cdot|s^\prime)$.
	
\end{theorem}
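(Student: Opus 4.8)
The plan is to run, for both operators, the three–step reduction underlying the classical distributional contraction of Bellemare et al. First I would fix the maximal metric $\bar{d}_p(Z_1,Z_2)=\sup_{s,a} d_p(Z_1(s,a),Z_2(s,a))$ and record the three facts about $d_p$ I will lean on: translation invariance $d_p(R+U,\,R+U')\le d_p(U,U')$ under a common reward shift, homogeneity $d_p(\gamma U,\,\gamma U')=\gamma\,d_p(U,U')$, and the coupling/mixture bound, which says that if a stochastic map draws its next index by a rule that is \emph{identical} for the two inputs, the transported distance is at most the supremum of the per-index distances. With these, applying $\mathfrak{T}^{\pi}_{r}$ or $\mathfrak{T}^{\pi}_{a}$ to $Z_1,Z_2$ reduces $d_p(\mathfrak{T}^{\pi}_{\bullet}Z_1(s,a),\mathfrak{T}^{\pi}_{\bullet}Z_2(s,a))$ to $\gamma$ times the transport distance of the corresponding $\mathcal{P}^{\pi}_{\bullet}$, which I then aim to bound by $\gamma\,\bar{d}_p(Z_1,Z_2)$.

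For part (1) this goes through verbatim. The random mechanism $N^{r}(\cdot\mid s)$ is stationary, Markovian and, crucially, independent of the value distribution, so $\mathcal{P}^{\pi}_r$ is a fixed averaging over the compound randomness $(S',V(S'),A')$ with $S'\sim P(\cdot\mid s,a)$, $V(S')\sim N^{r}(\cdot\mid S')$ and $A'\sim\pi(\cdot\mid V(S'))$. Coupling $Z_1$ and $Z_2$ on identical draws of this triple and invoking the mixture bound gives $d_p(\mathcal{P}^{\pi}_r Z_1(s,a),\mathcal{P}^{\pi}_r Z_2(s,a))\le \sup_{s',a'} d_p(Z_1(s',a'),Z_2(s',a'))=\bar{d}_p(Z_1,Z_2)$, and the shift/scaling facts close the bound at $\gamma\,\bar{d}_p(Z_1,Z_2)$. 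Taking the supremum over $(s,a)$ yields the contraction.

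Part (2) follows the same template, but the greedy rule makes the adversary $N^{*}$ depend on the value distribution through $\mathbb{E}[Z(s',a')]$: the minimiser of this linear functional places all its mass on a single observation $v^{*}(s')$, and this target may differ for $Z_1$ and $Z_2$. To compare the two outputs I would couple both inputs through a common adversary, say $N^{*}_1$ induced by $Z_1$, and split by the triangle inequality into (i) a term comparing $Z_1$ and $Z_2$ under the \emph{same} routing, bounded by $\gamma\,\bar{d}_p(Z_1,Z_2)$ exactly as in part (1), and (ii) a cross term of the form $d_p\!\left(Z_2(s',v^{*}_1(s')),\,Z_2(s',v^{*}_2(s'))\right)$, measuring how far the greedy target of $Z_2$ has moved relative to that of $Z_1$ (here $Z_2(s',v)$ abbreviates the $\pi(\cdot\mid v)$–mixture of $Z_2(s',\cdot)$). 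Term (ii) is the main obstacle: the adversary is selected by the expectation alone, yet the term lives in the full $d_p$ metric, and the swap argument that makes $\min$ non-expansive for the scalar value function does not transfer to distributions — this is exactly the mechanism by which adaptive greedy, optimality–type operators can fail to contract in $\bar{d}_p$.

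To close (ii) in accordance with the greedy rule, I would exploit that the selection criterion is an expectation and that the induced expectation operator $\mathcal{T}^{\pi}_a$ is a $\gamma$–contraction in the sup norm by Theorem~\ref{theorem:SNMDP}. Hence the scalars $g_Z(s',v)=\sum_{a'}\pi(a'\mid v)\,\mathbb{E}[Z(s',a')]$ converge geometrically along the iteration, so under a strict–gap condition separating the minimising observation the set $\argmin_{v\in\mathcal{B}(s')} g_Z(s',v)$ is locally constant in $Z$; this forces $v^{*}_1(s')=v^{*}_2(s')$ on the relevant states, making term (ii) vanish and reducing (2) to the fixed–adversary bound of part (1) with modulus $\gamma$. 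The delicate step is precisely this stabilisation of the $\argmin$, which I expect to be the crux: it is the adversarial analogue of the unique–greedy–policy hypothesis needed for distributional optimality operators, and without the strict gap one can only guarantee contraction of the expectation together with convergence of the distributional part in a weaker, non-contractive sense.
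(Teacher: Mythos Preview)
Your treatment of part (1) is essentially identical to the paper's: both invoke translation invariance, homogeneity, and the coupling/mixture bound for $d_p$, observe that $N^r$ is fixed and independent of $Z$, and obtain the $\gamma$-contraction in $\bar d_p$ directly.

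For part (2) you take a genuinely different, more cautious route than the paper. You correctly flag that the greedy rule makes $N^*$ a function of the input $Z$, so that the adversarial observations $v^*_1(s')$ and $v^*_2(s')$ may differ, and you propose a triangle-inequality split into a common-routing term plus a cross term, closing the latter via a strict-gap/stabilisation argument on the $\argmin$. The paper does none of this. Its proof simply declares that ``$N^*(\cdot\mid s')$ yields a deterministic state $s^*$'' and then writes the coupling bound with a \emph{single} $A^*\sim\pi(\cdot\mid s^*)$ common to both $Z^1$ and $Z^2$; from there the argument is literally the fixed-mechanism contraction of part (1), expanded once more through the partition lemma over $s'$ and over $a'_*\sim\pi(\cdot\mid s^*)$. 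In other words, the paper effectively treats the greedy adversary as a fixed merged policy $\pi^*(\cdot\mid s)=\pi(\cdot\mid s^*)$ and never confronts the cross term you isolate.

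What each approach buys: the paper's argument is short and yields a clean $\gamma$-contraction, but only under the tacit reading that $N^*$ is the same for both operands (i.e.\ the adversary is fixed once the greedy rule is evaluated, not re-optimised per iterate). Your approach is honest about the adaptive case and identifies exactly the obstruction---the distributional analogue of why greedy optimality operators need a unique-maximiser hypothesis---but the strict-gap condition you introduce is an additional assumption the paper neither states nor uses. If your goal is to match the paper, drop the split and the stabilisation argument and instead run the part-(1) bound with the single common $s^*$ and $A^*$; if your goal is rigour under a genuinely $Z$-adaptive adversary, your caveat in the final paragraph is well placed and goes beyond what the paper proves.
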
 
We provide the proof in Appendix~\ref{appendix:theorem:SNMDP_dRL}. Similar to the convergence conclusions in classical RL, Theorem~\ref{theorem:SNMDP_dRL}  justified that distributional RL is also capable of converging in this SN-MDP setting. The contraction and convergence of distributional Bellman operators in the SN-MDP is one of our main contributions. This result allows us to deploy distributional RL algorithms comfortably in the tabular setting even with noisy state observations. 

\section{Function Approximation Case}\label{Section:Flexible}

In the tabular case, both expectation-based and distributional RL have convergence properties. However, in the function approximation case, we firstly show linear TD requires more conditions for the convergence, and point out the vulnerability of expectation-based RL against noisy states even under the bounded rewards assumption. In contrast, we analyze that distributional RL with the categorical representation for the return distributions, is more robust against noisy state observations due to its bounded gradient norms.

\subsection{Convergence of Linear TD under Noisy States}

In classical RL with function approximation, the value estimator $\hat{v}: \mathcal{S}\times \mathbb{R}^d\rightarrow \mathbb{R}$ parameterized by $\mathbf{w}$ is expressed as $\hat{v}(s, \mathbf{w})$. The objective function is \textit{Mean Squared Value Error}~\cite{sutton2018reinforcement} denoted as $\overline{\mathrm{VE}}$:
\begin{equation}\begin{aligned}\label{eq:VE}
		\overline{\mathrm{VE}}(\mathbf{w}) \doteq \sum_{s \in \mathcal{S}} \mu(s)\left[v_{\pi}(s)-\hat{v}(s, \mathbf{w})\right]^{2},
\end{aligned}\end{equation}
where $\mu$ is the state distribution. In linear TD, the value estimate  is formed simply as the inner product between state features $\mathbf{x}(s)$ and weights $\mathbf{w}\in \mathbb{R}^d$, given by $\hat{v}(s, \mathbf{w}) \stackrel{\text { def }}{=} \mathbf{w}^{\top} \mathbf{x}(s)$. At each step, the state feature can be rewritten as $\mathbf{x}_{t} \stackrel{\text { def }}{=} \mathbf{x}\left(S_{t}\right) \in \mathbb{R}^{d}$. Thus, the TD update at step $t$ is:	
\begin{equation}\label{eq:TD_update}\begin{aligned}
		\mathbf{w}_{t+1} \leftarrow \mathbf{w}_{t} + \alpha_t (R_{t+1} + \gamma \mathbf{w}_t^{\top} \mathbf{x}_{t+1} - \mathbf{w}_t^{\top} \mathbf{x}_{t}) \mathbf{x}_{t}
\end{aligned}\end{equation}		
where $\alpha_t$ is the step size at time $t$. Once the system has reached the steady state for any $\mathbf{w}_{t}$, then the expected next weight vector can be written as $\mathbb{E}[\mathbf{w}_{t+1}|\mathbf{w}_{t}]=\mathbf{w}_{t} + \alpha_t (\mathbf{b}-\mathbf{A} \mathbf{w}_{t})$, where $\mathbf{b}=\mathbb{E}(R_{t+1}\mathbf{x}_t)\in \mathbb{R}^d$ and $\mathbf{A} \doteq \mathbb{E}\left[\mathbf{x}_{t} d_t^{\top}\right] \in \mathbb{R}^{d \times d}$. The TD fixed point $\mathbf{w}_{\text{TD}}$ to the system satisfies $\mathbf{A} \mathbf{w}_{\text{TD}} = \mathbf{b}$. From \cite{sutton2018reinforcement}, we know that the matrix $\mathbf{A}$ determines the convergence in the linear TD setting. In particular, $\mathbf{w}_t$ converges with probability one to the TD fixed point if $\mathbf{A}$ is positive definite. However, if we add state noises $\eta$ on $\mathbf{x}_t$ in Eq.~\ref{eq:TD_update}, the convergence condition will be different. As shown in Theorem~\ref{theorem:TD0}~(a more formal version with the proof is given in Appendix~\ref{appendix:TD}), linear TD under noisy state observations requires additional positive definiteness condition.

\begin{theorem}\label{theorem:TD0}(Covergence Conditions for Linear TD under Noisy State Observations) Define $\mathbf{P}$ as the $|\mathcal{S}| \times |\mathcal{S}|$ matrix forming from the state transition probability $p(s^\prime|s)$, $\mathbf{D}$ as the $|\mathcal{S}| \times |\mathcal{S}|$ diagonal matrix with $\mu(s)$ on its diagonal, and $\mathbf{X}$ as the $|\mathcal{S}| \times d$ matrix with $\mathbf{x}(s)$ as its rows, and $\mathbf{E}$ is the $|\mathcal{S}| \times d$ perturbation matrix with each perturbation vector $\mathbf{e}(s)$ as its rows. $\mathbf{w}_t$ converges to TD fixed point \textbf{when both $\mathbf{A}$ and $(\mathbf{X}+\mathbf{E})^\top \mathbf{D} \mathbf{P} \mathbf{E}$ are positive definite.}
	
\end{theorem} 

However, directly analyzing the convergence conditions of distributional linear TD and then comparing with them in Theorem~\ref{theorem:TD0} for classical linear TD is tricky in theory.	As such, we additionally provide a sensitivity comparison of both expectation-based and distributional RL through the lens of their gradients regarding state features as follows.

\subsection{Vulnerability of Expectation-based RL}\label{sec:vulnerability}

We reveal that the vulnerability of expectation-based RL can be attributed to its unbounded gradient characteristics in both linear and nonlinear approximation settings.

\paragraph{Linear Approximation Setting.} To solve the \textit{weighted} least squared minimization in Eq.~\ref{eq:VE}, we leverage Stochastic Gradient Descent~(SGD) on the empirical version of $\overline{\mathrm{VE}}$, which we denote as $g_{\overline{\mathrm{VE}}}$. \textit{We focus on the gradient norm of $g_{\overline{\mathrm{VE}}}$ regarding the state features $\mathbf{x}(s)$~(or $\mathbf{x}_{t}$) as the gradient of loss w.r.t state observations is highly correlated with the sensitivity or robustness of algorithms against the noisy state observations.} For a fair comparison with distributional RL in next section, we additionally bound the norm of $\mathbf{w}$, i.e., $\Vert \mathbf{w} \Vert \leq l$, which can also be easily satisfied by imposing $\ell_1$ or $\ell_2$ regularization. Therefore, we derive the upper bound of gradient norm of $g_{\overline{\mathrm{VE}}}$ as
\begin{equation}\begin{aligned}\label{eq:expectationRL_linear}
		\Vert \frac{\partial g_{\overline{\mathrm{VE}}(\mathbf{w})}}{\partial \mathbf{x}_t}  \Vert = | U_t-\mathbf{w}_{t}^{\top} \mathbf{x}_{t}| \Vert \mathbf{w}_{t} \Vert \leq | U_t-\mathbf{w}_{t}^{\top} \mathbf{x}_{t}| l,
\end{aligned}\end{equation}
where the target $U_t$ can be either an unbiased estimate via Monte Carlo method with $U_t=\sum_{k=0}^\infty \gamma^k r_{t+k+1}$, or a biased estimate via TD learning with $U_t=r_{t+1}+\gamma \mathbf{w}_{t}^{\top} \mathbf{x}_{t+1}$. However, this upper bound $| U_t-\mathbf{w}_{t}^{\top} \mathbf{x}_{t}| l$ heavily depends on the perturbation size or noise strength. Even under the bounded rewards assumption, i.e., $r \in [R_{\text{min}}, R_{\text{max}}]$, we can bound $U_t$ as $U_t=\sum_{k=0}^\infty \gamma^k r_{t+k+1} \in [\frac{R_{\text{min}}}{1-\gamma}, \frac{R_{\text{max}}}{1-\gamma}]$. However, this upper bound can be arbitrarily large if we have no restriction on the noise size, leading to a potentially huge vulnerability against state observation noises.

\paragraph{Nonlinear Approximation Setting.} The potentially unbounded gradient norm issue of expectation-based RL in the linear case still remains in the nonlinear approximation setting. We express the value estimate $\hat{v}$ as $\hat{v}(s; \mathbf{w}, \theta)=\phi_{\mathbf{w}}(\mathbf{x}(s))^\top\theta$, where $\phi_{\mathbf{w}}(\mathbf{x}(s))$ is the representation vector of the state feature $\mathbf{x}(s)$ in the penultimate layer of neural network-based value function approximator. Correspondingly, $\theta$ would be the parameters in the last layer of this value neural network. We simplify $\phi_{\mathbf{w}}(\mathbf{x}(s))_t$ as $\phi_{\mathbf{w}, t}$ in the step $t$ update. As such, akin to the linear case, we derive the upper bound of gradient norm of $g_{\overline{\mathrm{VE}}}$ as 
\begin{equation}\begin{aligned}\label{eq:expectationRL_nonlinear}
		\Vert \frac{\partial g_{\overline{\mathrm{VE}}(\mathbf{w}, \theta)}}{\partial \mathbf{x}_t}  \Vert &= | U_t-\phi_{\mathbf{w}, t}^\top\theta_t| \Vert \nabla_{\mathbf{x}_t}\phi_{\mathbf{w}, t}^\top\theta_t \Vert \leq | U_t-\phi_{\mathbf{w}, t}^\top\theta_t| lL,
\end{aligned}\end{equation}
where we assume the function $\phi_{\mathbf{w}}(\cdot)$ is $L$-Lipschitz continuous regarding its input state feature $\mathbf{x}(s)$, and $\Vert \theta \Vert \leq l$ as well for a fair comparison with distributional RL. It turns out that $| U_t-\phi_{\mathbf{w}, t}^\top\theta_t|$ still depends on the perturbation size, and can be still arbitrarily large if there is no restriction on the noise size. In contrast, we further show that gradient norms in distributional RL can be upper bounded \textbf{regardless of the perturbation size or noise strength}.

\subsection{Robustness Advantage of Distributional RL}\label{sec:lipschitz}

We analyze the distributional loss in distributional RL can potentially lead to bounded gradient norms regarding state features regardless of the perturbation size or noise strength, yielding its training robustness against state noises. In distributional RL our goal is to minimize a distribution loss $\mathcal{L} \left(Z_{\mathbf{w}}, \mathfrak{T} Z_{\mathbf{w}} \right)$ between the current return distribution of $Z_{\mathbf{w}}$ and its target return distribution of $\mathfrak{T} Z_{\mathbf{w}}$.

Our robustness analysis is based on the categorical parameterization~\cite{imani2018improving} on the return distribution with the KL divergence, a typical choice also used in the first distributional RL branch, i.e., C51~\cite{bellemare2017distributional}. Specifically, we uniformly partition the support of $Z_{\mathbf{w}}(s)$ into $k$ bins, and let the histogram function $f: \mathcal{X} \rightarrow[0,1]^{k}$ provide k-dimensional vector $f(\mathbf{x}(s))$ of the coefficients indicating the probability the target is in that bin given $\mathbf{x}(s)$. We use \textit{softmax} to output the $k$ probabilities of $f(\mathbf{x}(s))$. Therefore,  the categorical distributional RL loss $\mathcal{L}(Z_{\mathbf{w}}(s), \mathfrak{T} Z_{\mathbf{w}}(s))$, denoted as $\mathcal{L}_\mathbf{w}$, equipped with KL divergence between $Z_{\mathbf{w}}$ and $\mathfrak{T} Z_{\mathbf{w}}$ can be simplified as
\begin{equation}\begin{aligned}\label{eq:histogram}
		\mathcal{L}(Z_{\mathbf{w}}(s), \mathfrak{T} Z_{\mathbf{w}}(s)) &\propto -\sum_{i=1}^{k} p_{i} \log f_{i}^{\mathbf{w}}(\mathbf{x}(s)),
\end{aligned}\end{equation} 
where we use $\mathbf{w}$ to parameterize the function $f$ in the distributional loss  $\mathcal{L}_\mathbf{w}$, and the target probability $p_i$ is the cumulative probability increment of target distribution $\mathfrak{T} Z_{\mathbf{w}}$ within the $i$-th bin. Detailed derivation about the simplification of categorical distributional loss is in Appendix~\ref{appendix:lipschitz}. 

\paragraph{Linear Approximation Setting.} We leverage $\mathbf{x}(s)^{\top} \mathbf{w}_{i}$ to express the $i$-th output of $f$, i.e., $f_{i}(\mathbf{x}(s))=\exp \left(\mathbf{x}(s)^{\top} {\mathbf{w}_{i}}\right) / \sum_{j=1}^{k} \exp \left(\mathbf{x}(s)^{\top} \mathbf{w}_{j}\right)$, where all parameters are $\mathbf{w}=\{\mathbf{w}_1, ..., \mathbf{w}_{k}\}$. Based on this categorical distributional RL loss, we obtain Proposition~\ref{theorem:dRL_bounded_linear}~(proof in Appendix~\ref{appendix:lipschitz}), revealing that value-based categorical distributional RL loss can result in bounded gradient norms regarding state features $\mathbf{x}(s)$.	

\begin{prop}\label{theorem:dRL_bounded_linear}(Gradient Property of distributional RL in Linear Approximation) Consider the categorical distributional RL loss $\mathcal{L}_{\mathbf{w}}$ in Eq.~\ref{eq:histogram} with the linear approximation. Assume $\Vert \mathbf{w}_i \Vert \leq l$ for $\forall i=1,..,k$, then $\left\|\frac{\partial \mathcal{L}_\mathbf{w}}{\partial \mathbf{x}(s)}  \right\| \leq kl$.
\end{prop} 

In contrast with the unbounded gradient norm in Eq.~\ref{eq:expectationRL_linear} of classical RL, we have a restricted upper bound in distributional RL loss with a linear approximator, i.e., $kl$, which is independent of the perturbation size or noise strength.

\paragraph{Nonlinear Approximation Setting.} Similar to the nonlinear form in classical expectation-based RL as analyzed in Section~\ref{sec:vulnerability},  we express the $i$-th output probabilities of $f(\mathbf{x}(s))$ as $f_i^{\mathbf{w}, \theta}(\mathbf{x}(s))=\exp \left(\phi_{\mathbf{w}}(\mathbf{x}(s))^\top \theta_i\right) / \sum_{j=1}^{k} \exp \left(\phi_{\mathbf{w}}(\mathbf{x}(s))^\top \theta_j\right)$ in distributional RL, where the last layer parameter $\theta=\{\theta_1, ..., \theta_{k}\}$ and $\phi_{\mathbf{w}}(\mathbf{x}(s))$ is still the representation vector of $\mathbf{x}(s)$. In Proposition~\ref{theorem:dRL_bounded_nonlinear}, we can still attain a bounded gradient norm of distributional RL loss in the nonlinear case.

\begin{prop}\label{theorem:dRL_bounded_nonlinear}(Gradient Property of distributional RL in Nonlinear Approximation) Consider the categorical distributional RL loss $\mathcal{L}_{\mathbf{w}, \theta}$ in Eq.~\ref{eq:histogram} with the nonlinear approximation. Assume $\Vert \theta_i \Vert \leq l$ for $\forall i=1,..,k$ and $\phi_{\mathbf{w}}(\cdot)$ is $L$-Lipschitz continuous, then $\left\|\frac{\partial \mathcal{L}_{\mathbf{w}, \theta}}{\partial \mathbf{x}(s)}  \right\| \leq klL$.
\end{prop} 

Please refer to Appendix~\ref{appendix:lipschitz} for the proof. For a fair comparison with nonlinear approximation in classical RL, we still assume the function $\phi_{\mathbf{w}}(\cdot)$ to be $L$-Lipschitz continuous and $\Vert \theta_i \Vert \leq l$. \textit{Interestingly, the bounded gradient norm of the distributional RL loss is independent of the noise size, which is in stark contrast to the potentially unrestricted gradients in classical RL in Eq.~\ref{eq:expectationRL_nonlinear} that heavily depends on the noise size.} Based on Theorems~\ref{theorem:dRL_bounded_linear} and \ref{theorem:dRL_bounded_nonlinear}, we conclude that the bounded gradient behaviors of distributional RL could reduce its sensitivity to state noises, and thus mitigate the interference of the state observation noises compared with expectation-based RL, potentially leading to better training robustness.

\paragraph{Extension of TD Convergence and Sensitivity Analysis.} As supplementary, we also conduct the analysis on different TD convergence conditions under the unbalanced perturbations on either the current or next state observations. Please refer to Theorem~\ref{theorem:TD} with the detailed explanation in Appendix~\ref{appendix:TD}. In addition, we also conduct a sensitivity analysis from the perspective of the \textit{influence function} to characterize the impact of state noises on an estimator. We provide the details in Theorem~\ref{theorem:IF} of Appendix~\ref{appendix:IF}.

\section{Experiments}\label{Section:experiment}

We make a comparison between expectation-based and distributional RL algorithms against various noisy state observations across \textbf{classical and continuous control environments as well as Atari games, including Cartpole and MountainCar~(classical control), Ant, Humanoidstandup and Halfcheetah~(continuous control), Breakout and Qbert~(Atari games)}. For the continuous control environment, we use Soft Actor Critic~\cite{haarnoja2018soft} and Distributional Soft Actor Critic~\cite{ma2020dsac} with C51 as the critic loss and thus we denote them as SAC and DAC~(C51), respectively. For the classical control and Atari games, we utilize DQN~\cite{mnih2015human} as the baseline, and C51~\cite{bellemare2017distributional}, QRDQN~\cite{dabney2017distributional} as its distributional counterparts. The training robustness of C51 could be consistent with our theoretical analysis, while QRDQN, the more commonly-used one, is also applied to  demonstrate that our robustness analysis can also be empirically applicable to broader distributional RL algorithms.

\paragraph{Implementation and Experimental Setup.} For the continuous control environment, we modified our algorithm based on released implementation of \cite{ma2020dsac}. For classical control and Atari games, we followed the procedure in \cite{ghiassian2020gradient,zhang2019quota}. All the experimental settings, including parameters, are identical to the distributional RL baselines implemented by \cite{deeprl,dabney2017distributional}. We perform 200 runs on both Cart Pole and Mountain Car and 3 runs on Breakout and Qbert. Reported results are averaged with shading indicating the standard error. The learning curve is smoothed over a window of size 10 before averaging across runs. Please refer to Appendix~\ref{appendix:setting} for more details about the experimental setup.

\paragraph{Evaluation of Training Robustness.} Due to final performance difference between expectation-based and distributional RL, for a fair comparison we calculate \textit{the ratio between final average returns under random or adversarial state noises with different noise strengths and the original level without any state noises}. This ratio can be used to measure the robustness maintenance after the agent gets exposed to noisy state observations. 

\begin{figure*}[t!]
	\centering
	\begin{subfigure}[t]{0.42\textwidth}
		\centering
		\includegraphics[width=\textwidth,trim=20 0 0 30,clip]{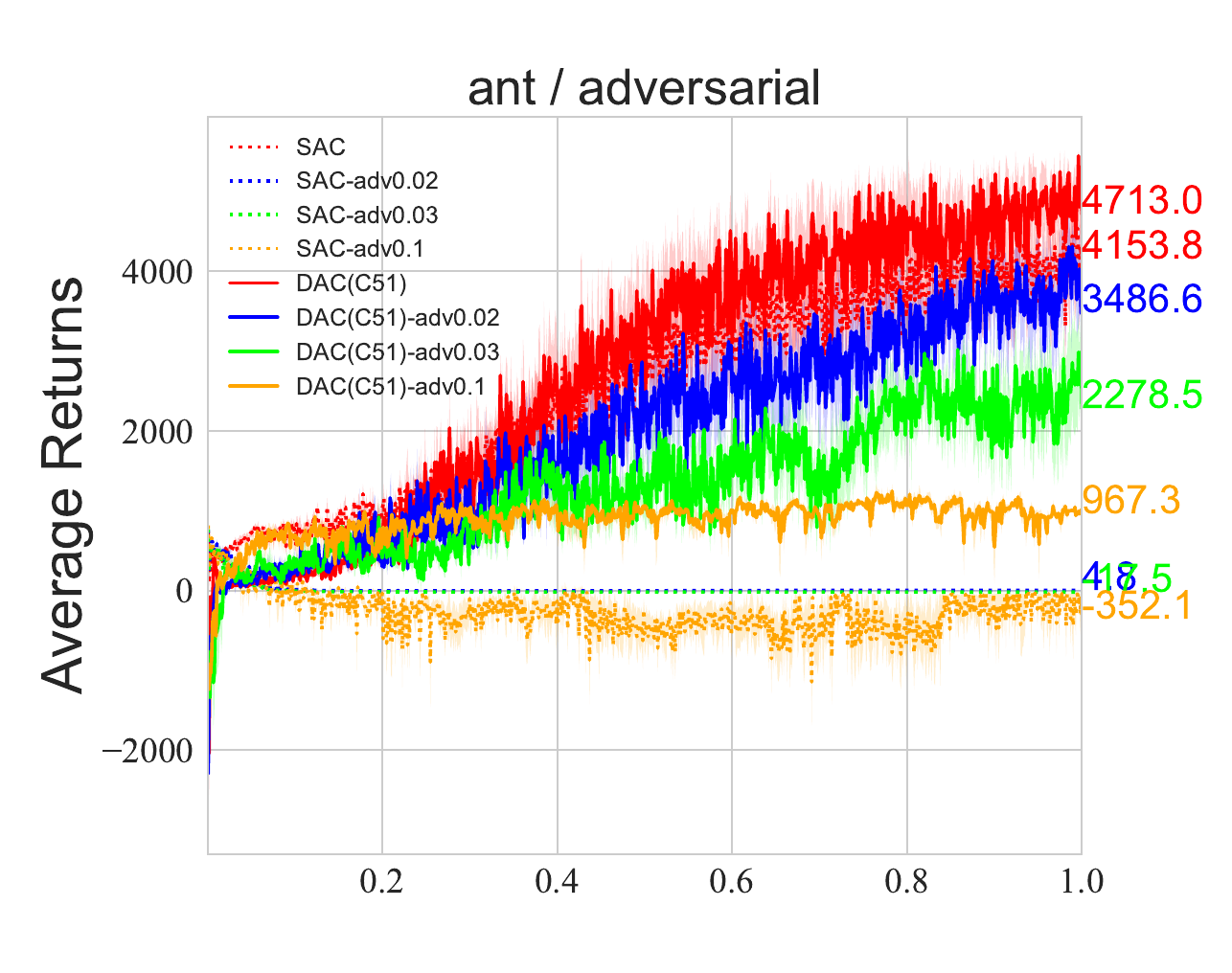}
	\end{subfigure}
	\begin{subfigure}[t]{0.46\textwidth}
		\centering
		\includegraphics[width=\textwidth,trim=0 0 0 30,clip]{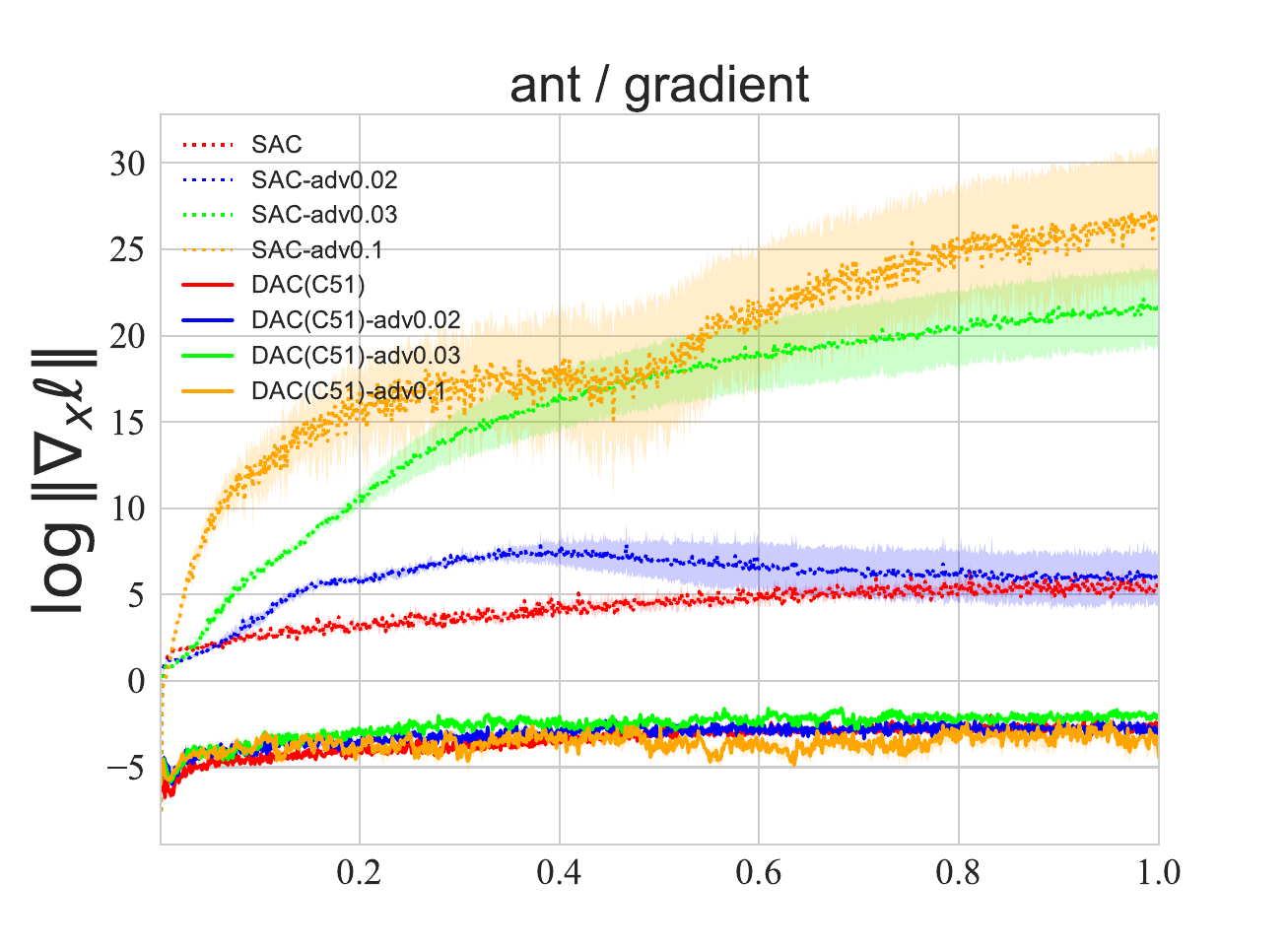}
	\end{subfigure}
	\begin{subfigure}[t]{0.46\textwidth}
		\centering
		\includegraphics[width=\textwidth,trim=10 0 0 20,clip]{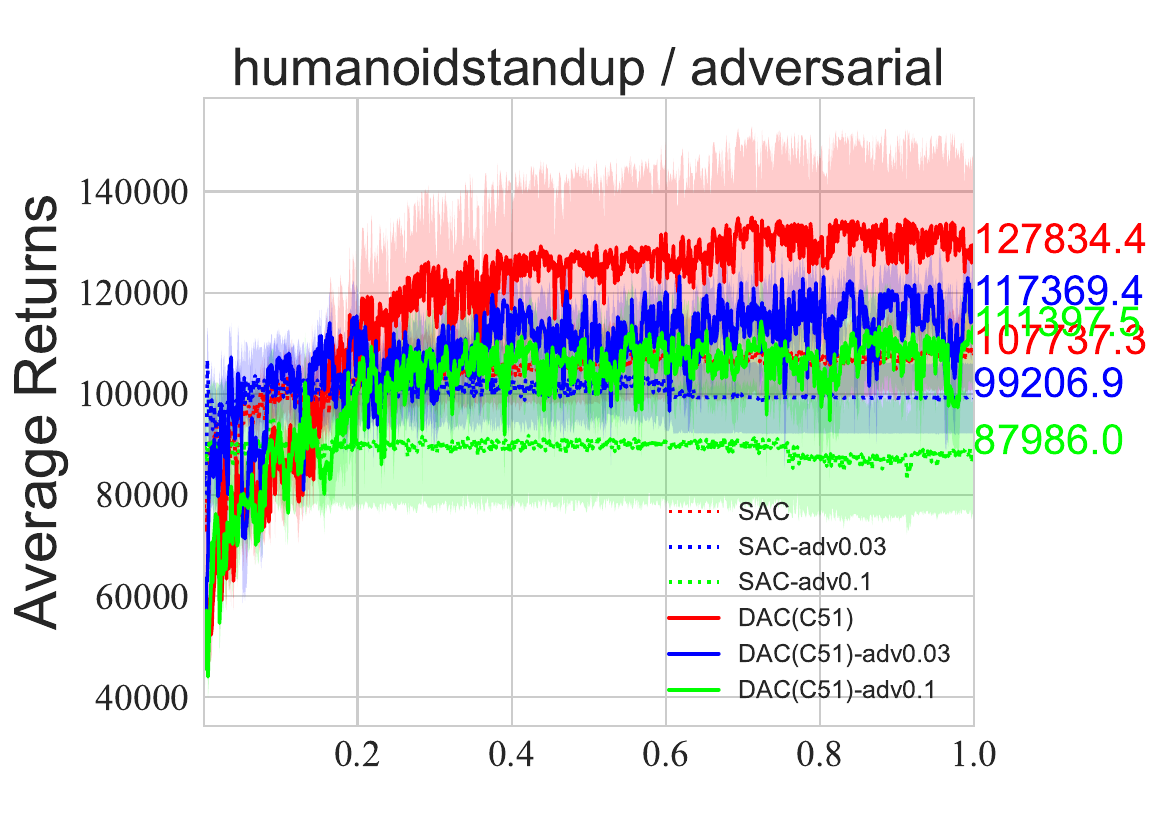}
	\end{subfigure}
	\begin{subfigure}[t]{0.45\textwidth}
		\centering
		\includegraphics[width=\textwidth,trim=10 0 0 10,clip]{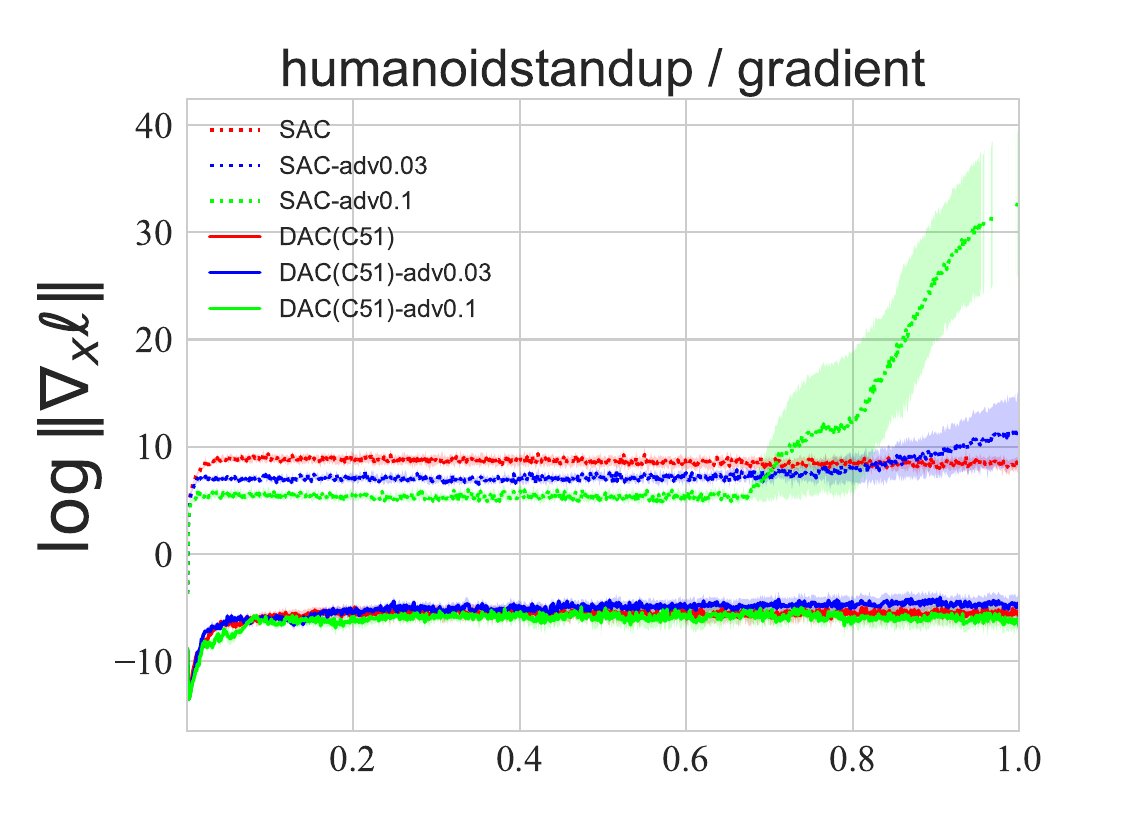}
	\end{subfigure}
	\caption{Average returns of SAC and DAC~(C51) against \textbf{adversarial} state observation noises in the training on Ant and Humanoidstandup under 5 runs. Gradient norms in the logarithm scale of AC and  DAC~(C51) in the adversarial setting. \textbf{advX} in the legend indicates random state observations with the perturbation size $\epsilon$ \textbf{X}. }
	\label{Figure_mujoco_grad}
\end{figure*}

\paragraph{Random and Adversarial State Noises.} We use Gaussian noise with different standard deviations to simulate random state noises, while for the adversarial state noise, we apply the most typical adversarial state perturbations proposed in \cite{huang2017adversarial,pattanaik2017robust}. For the choice of perturbation size, we followed \cite{zhang2020robust}, where the set of noises $B(s)$ is defined as an $\ell_{\infty}$ norm ball around $s$ with a radius $\epsilon$, given by $\ell_{\infty}B(s):=\left\{\hat{s}:\|s-\hat{s}\|_{\infty} \leq \epsilon\right\}$. We apply Projected Gradient Descent~(PGD) version in \cite{pattanaik2017robust}, with 3 fixed iterations while adjusting $\epsilon$ to control the perturbation strength. Due to the page limit, we defer similar results under more advanced MAD attack~\cite{zhang2020robust} in Appendix~\ref{appendix:attacks}.

\subsection{Results on Continuous Control Environments}

We compare SAC with DAC~(C51) on Ant and Humanoidstandup. Due to the space limit, we mainly present the algorithm performance in the \textbf{adversarial} setting. Figure~\ref{Figure_mujoco_grad} suggests that distributional RL algorithms, i.e., DAC~(C51), are less sensitive to their expectation-based counterparts, i.e., SAC, according to learning curves of average returns on Ant and Humanoidstandup. More importantly, Figure~\ref{Figure_mujoco_grad} demonstrates that DAC~(C51) enjoys smaller gradient norms compared with SAC, and SAC with a larger perturbation size is prone to unstable training with much larger gradient magnitudes. In particular, On Humanoidstandup, SAC converges undesirably with adv0.01 (green line), but its gradient norm diverges (even infinity in the very last phase). By contrast, DSAC~(C51) has a lower level gradient norms, which is less likely to suffer from divergence. This result corroborates with theoretical analysis in Section~\ref{sec:lipschitz} that exploding gradients are prone to divergence when exposed to state noises.

\begin{wraptable}[12]{r}{0.5\textwidth}
	\centering
	\scalebox{0.7}{
		\begin{tabular}{c|cccc}
			\toprule[1pt]
			\textbf{Robustness($\%$)}&\textbf{Adversarial}&\textbf{$\epsilon$=0.02}&\bf $\epsilon$=0.03& \bf $\epsilon$=0.1 \\
			\hline
			\multirow{2}*{Ant}&SAC&$\approx$ 0&$\approx$ 0&$\approx$ 0\\
			~&DAC~(C51)& \bf 74.0&\bf 48.3&\bf 20.5\\
			\hline
			\textbf{Robustness($\%$)}&\textbf{Adversarial}&\textbf{$\epsilon$=0.03}& \bf $\epsilon$=0.1&\\
			\hline
			\multirow{2}*{Humanoidstandup}&SAC&\bf 92.1&81.7&\\
			~&DAC~(C51)&91.8&\bf 87.1&\\
			\hline
			\bottomrule[1pt]
		\end{tabular}
	}
	\caption{Robustness ratio of algorithms under \textbf{adversarial} state observations with different $\epsilon$ on Ant and Humanoidstandup.}
	\label{table:robustness_mujoco}
\end{wraptable}

\begin{figure*}[t!]
	\centering
	\begin{subfigure}[t]{0.44\textwidth}
		\centering
		\includegraphics[width=\textwidth,trim=0 0 0 10,clip]{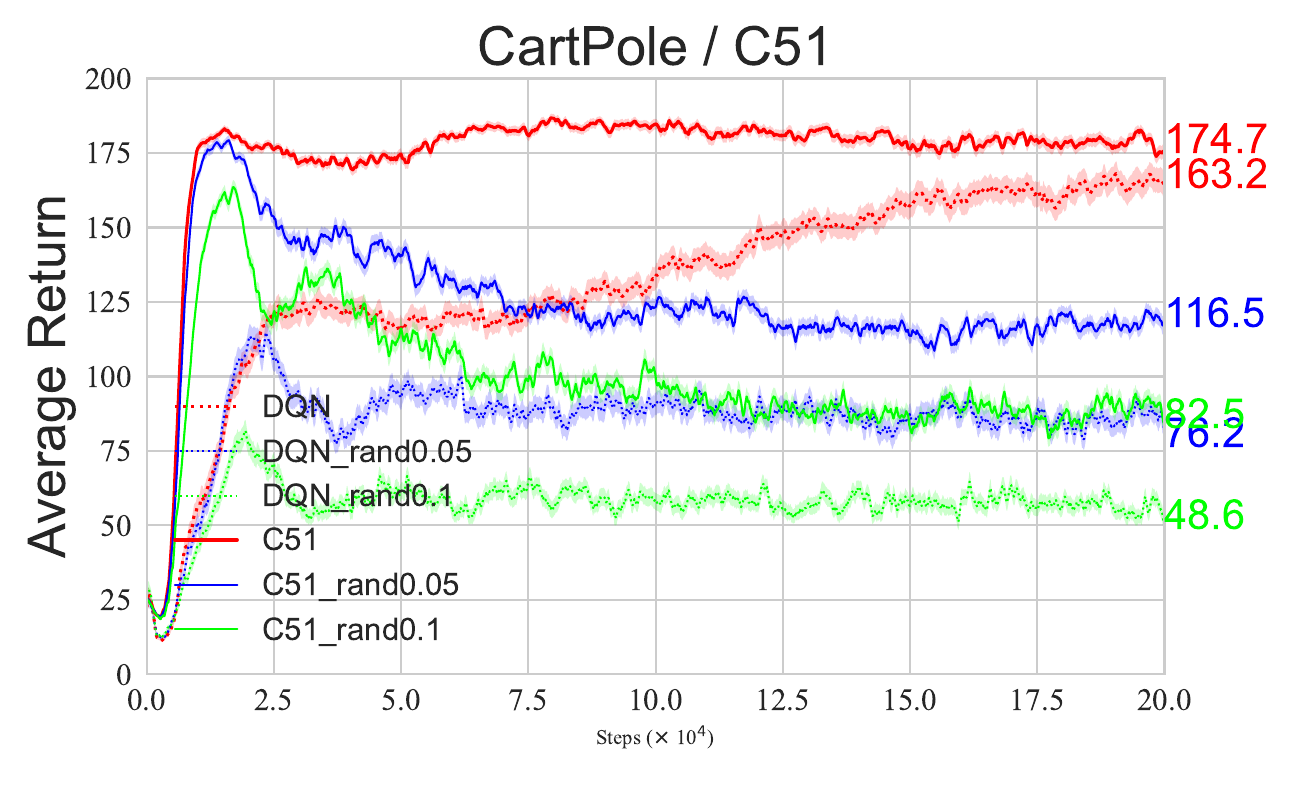}
	\end{subfigure}
	\begin{subfigure}[t]{0.44\textwidth}
		\centering
		\includegraphics[width=\textwidth,trim=0 0 0 10,clip]{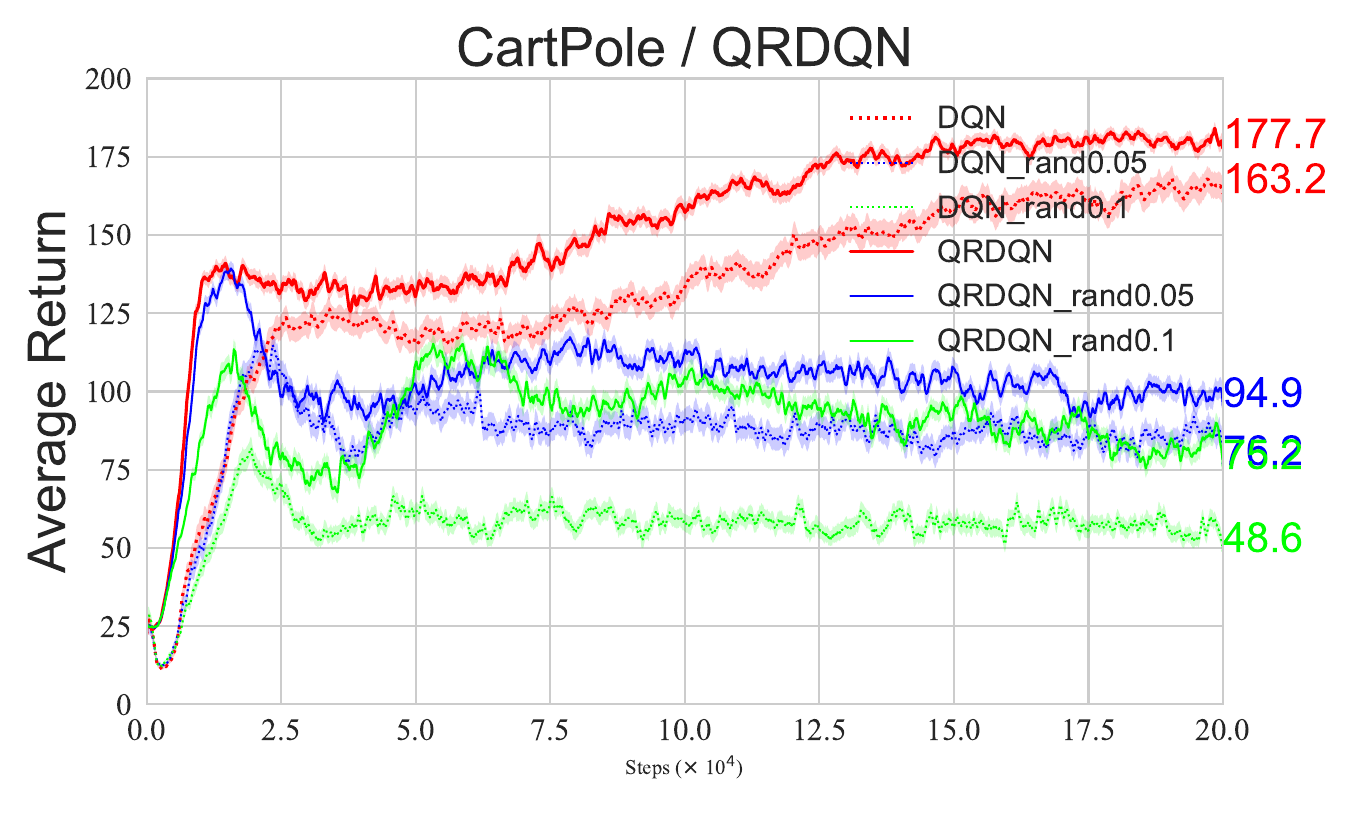}
	\end{subfigure}
	\begin{subfigure}[t]{0.45\textwidth}
		\centering
		\includegraphics[width=\textwidth,trim=0 0 0 10,clip]{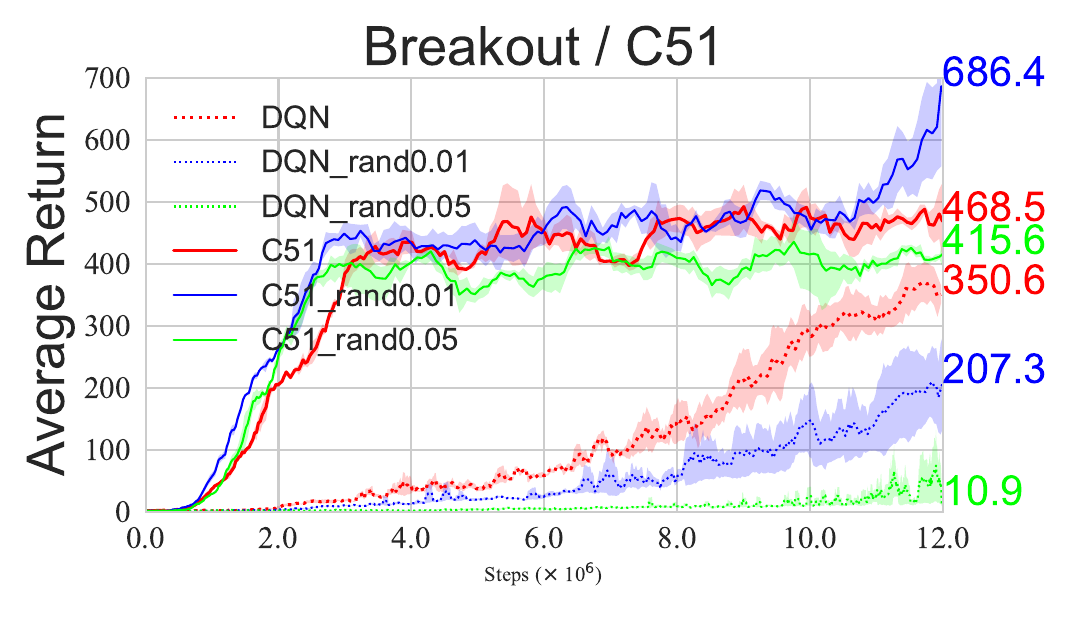}
	\end{subfigure}
	\begin{subfigure}[t]{0.44\textwidth}
		\centering
		\includegraphics[width=\textwidth,trim=10 0 10 10,clip]{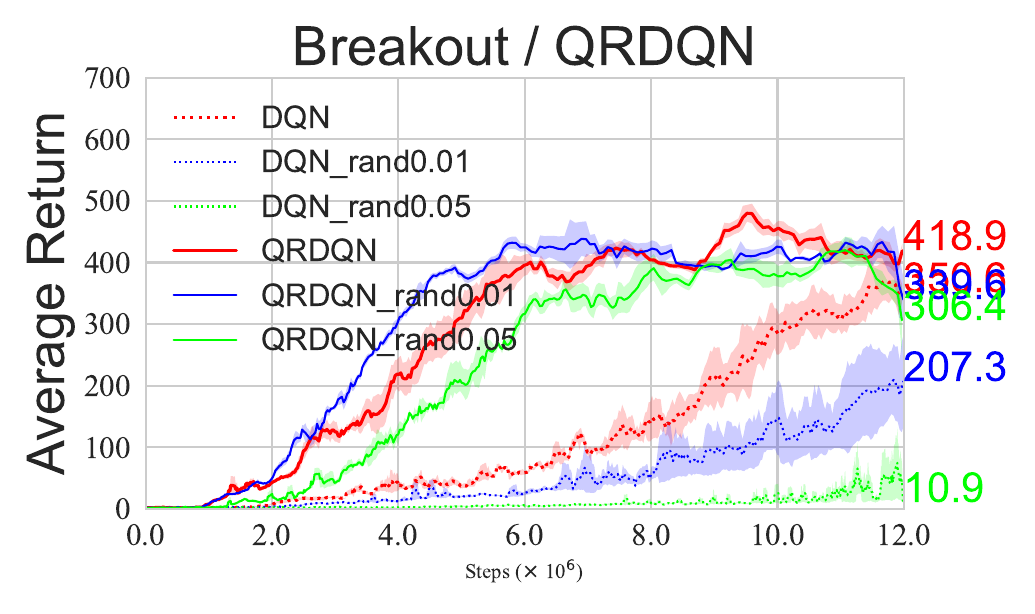}
	\end{subfigure}
	\caption{Average returns of DQN, C51 and QRDQN against \textbf{random} state observation noises on CartPole and Breakout. \textbf{randX} in the legend indicates random state observations with the standard deviation \textbf{X}.}
	\label{Figure_rand}
\end{figure*}

A quantitative result is also shown in Table~\ref{table:robustness_mujoco}, where distributional RL algorithms tend to maintain a higher robustness ratio as opposed to their expectation-based RL versions. We also note that the training robustness of distributional RL algorithms may not be significant if the perturbation size is slightly small, e.g., on Humanoidstandup. However, if we carefully vary perturbation sizes in a proper range, we can easily observe the robustness advantage of distributional RL against adversarial noises, e.g., on Ant. We also investigate the training robustness of more distributional RL algorithms over more games. Thus, we evaluate the sensitivity of D4PG~\cite{barth2018distributed} against adversarial noises on Halfcheetah, which can be viewed as the distributional version of DDPG. As suggested in Figure~\ref{Figure_mujoco_grad_newgame} in Appendix~\ref{appendix:D4PG}, the distributional RL algorithm D4PG is much less vulnerable than it expectation-based RL counterpart DDPG against adversarial noises.

\subsection{Results on Classical Control and Atari Games}

\begin{wraptable}[14]{r}{0.5\textwidth}
	\centering
	\scalebox{0.7}{
		\begin{tabular}{c|cccc}
			\toprule[1pt]
			\textbf{Robustness($\%$)}&\textbf{Random}&\textbf{std=0.05}&\bf std=0.1&\\
			\hline
			\multirow{3}*{CartPole}&DQN&44.2&28.6&\\
			~&QRDQN&54.5&43.4&\\
			~&C51&\bf 67.0&\bf 47.3&\\
			\hline
			\textbf{Robustness($\%$)}&\textbf{Random}&\textbf{std=0.01}&\bf std=0.05&\\
			\hline
			\multirow{3}*{Breakout}&DQN&59.1&$\approx 0$&\\
			~&QRDQN&81.1&73.1&\\
			~&C51&\bf 146.5&\bf 88.7&\\
			\hline
			\bottomrule[1pt]
		\end{tabular}
	}
	\caption{Robustness ratio of three algorithms under \textbf{random} state observations with different standard deviations~(std) on CartPole and Breakout.}
	\label{table:robustness_random}
\end{wraptable}

\paragraph{Results under Random State Noises.} We investigate the training robustness of DQN, C51 and QRDQN on classical control environments and typical Atari games, against the random noisy state observations. Gaussian state noises are continuously injected in the while training process of RL algorithms, while the agent encounters noisy current state observations while conducting the TD learning. Due to the space limit, here we mainly present learning curves of algorithms on CartPole and Breakout. As shown in Figure~\ref{Figure_rand}, both C51 and QRDQN achieve similar performance to DQN after the training \textit{without any random state noises}. However, when we start to inject random state noises with different noise sizes during the training process, their learning curves show different sensitivity and robustness. Both C51 and QRDQN are more robust against the random state noises than DQN, with the less interference for the training under the same random noises. Remarkably, in Breakout the performance of both C51 and QRDQN~(solid lines) only slightly decreases, while DQN~(dashed lines) degrades dramatically and even diverges when the standard deviation is 0.05. This significant difference provides a strong empirical evidence to verify the robustness advantage of distributional RL algorithms.

A detailed comparison is summarized in Table~\ref{table:robustness_random}. It turns out that  the training robustness of both QRDQN and C51 surpass DQN significantly. Note that the robustness ratio for C51 under std=0.01 noises is 146.5$\%$, which is above 100$\%$. This can be explained as a proper randomness added in the training might be beneficial to exploration, yielding better generalization of algorithms.

\begin{wraptable}[13]{r}{0.5\textwidth}
	\centering
	\scalebox{0.7}{
		\begin{tabular}{c|cccc}
			\toprule[1pt]
			\textbf{Robustness($\%$)}&\textbf{Adversarial}&\textbf{$\epsilon$=0.05}&\bf $\epsilon$=0.1&\\
			\hline
			\multirow{3}*{CartPole}&DQN&34.8&18.6&\\
			~&QRDQN&26.0&24.8&\\
			~&C51&\bf 75.6&\bf 70.6&\\
			\hline
			\textbf{Robustness($\%$)}&\textbf{Adversarial}&\textbf{$\epsilon$=0.0005}&\bf $\epsilon$=0.001&\\
			\hline
			\multirow{3}*{Breakout}&DQN&29.8&$\approx 0$&\\
			~&QRDQN&\bf 107.1 &\bf 132.6&\\
			~&C51&61.0&6.3&\\
			\hline
			\bottomrule[1pt]
		\end{tabular}
	}
	\caption{Robustness ratio of three algorithms under \textbf{adversarial} state observations with different perturbation sizes $\epsilon$ on CartPole and Breakout.}
	\label{table:robustness_adv}
\end{wraptable}

\paragraph{Results under Adversarial State Noises.} Next, we probe the training robustness of DQN, QRDQN and C51 in the setting where the agent encounters the \textit{adversarial} state observations in the current state in the function approximation case. Figure~\ref{Figure_adv} presents the learning curves of algorithms on CartPole and Breakout against noisy states under different adversarial perturbation sizes $\epsilon$.

\begin{figure*}[t!]
	\centering
	\begin{subfigure}[t]{0.44\textwidth}
		\centering
		\includegraphics[width=\textwidth,trim=0 0 0 10,clip]{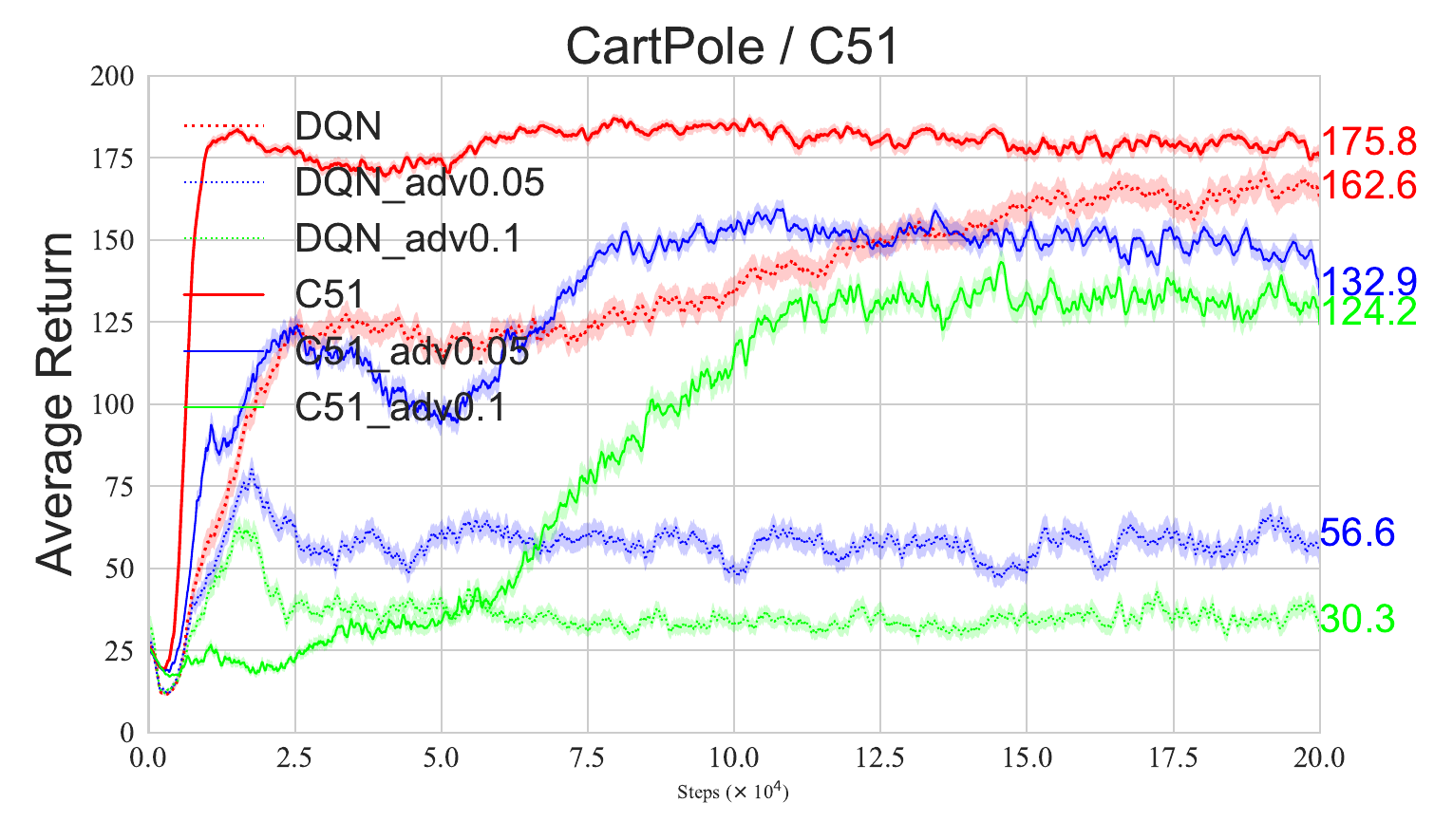}
	\end{subfigure}
	\begin{subfigure}[t]{0.44\textwidth}
		\centering
		\includegraphics[width=\textwidth,trim=0 0 0 10,clip]{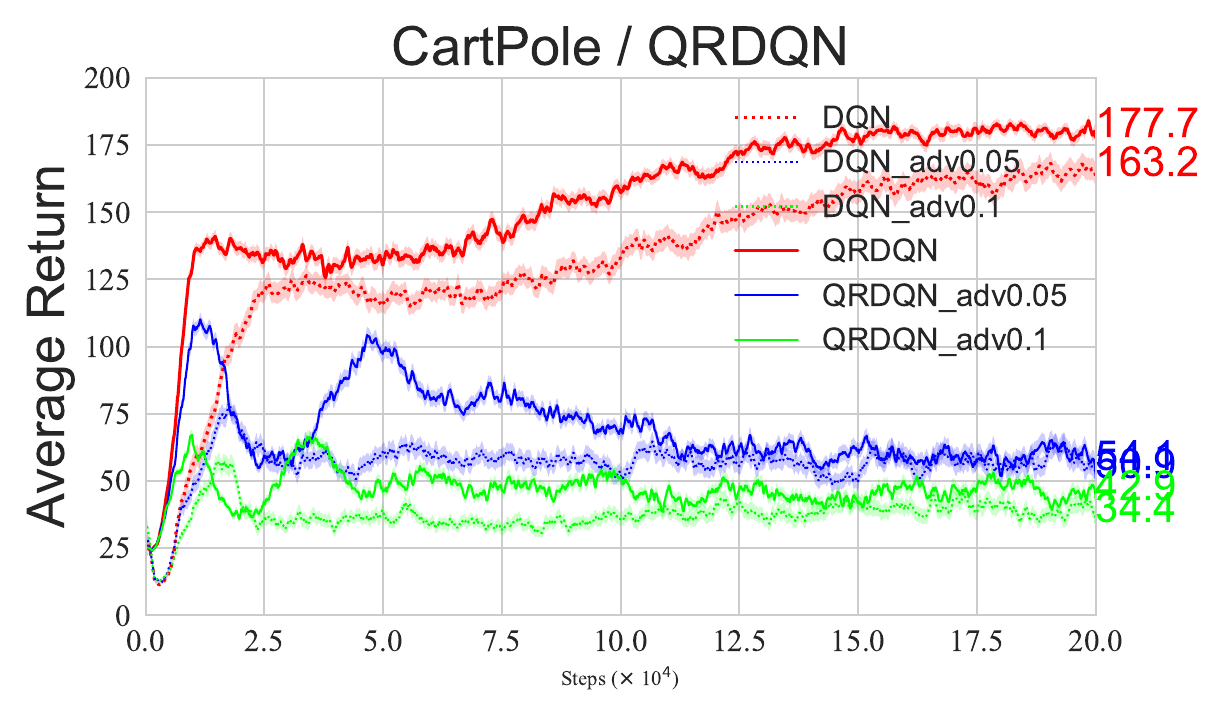}
	\end{subfigure}
	\begin{subfigure}[t]{0.46\textwidth}
		\centering
		\includegraphics[width=\textwidth,trim=0 0 0 10,clip]{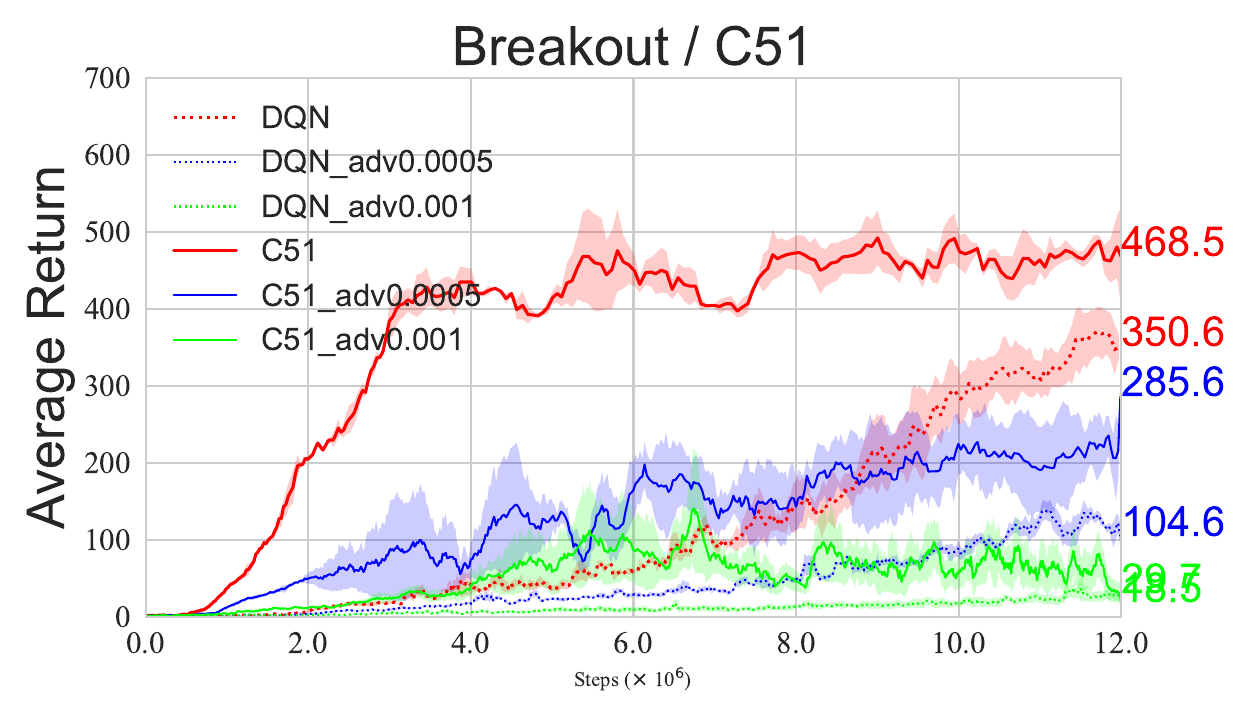}
	\end{subfigure}
	\begin{subfigure}[t]{0.41\textwidth}
		\centering
		\includegraphics[width=\textwidth,trim=10 20 10 10,clip]{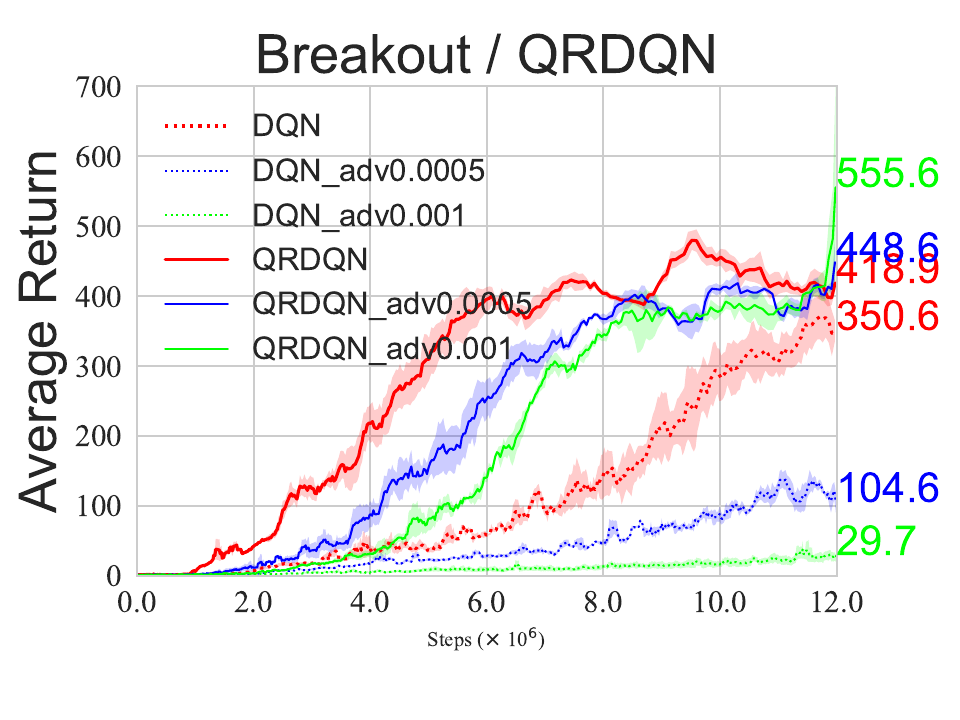}
	\end{subfigure}
	\caption{Average returns of DQN, C51 and QRDQN against \textbf{adversarial} state observation noises across four games. \textbf{advX} in the legend indicates random state observations with the perturbation size $\epsilon$ \textbf{X}. }
	\label{Figure_adv}
\end{figure*}

It turns out that results under the adversarial state observations are similar to those in the random noises case. Specifically, all algorithms tend to degrade when getting exposed to adversarial state observations, and even are more likely to diverge. However, a key observation is that \textit{distributional RL algorithms, especially QRDQN, are capable of obtaining desirable performance even when DQN diverges}. For instance, in Breakout DQN~(dotted green line) in Figure~\ref{Figure_adv} under the adversarial perturbation with $\epsilon=0.001$ leads to divergence, while QRDQN~(solid green lines) still maintains a desirable performance. The quantitative robustness ratio comparison is also provided in Table~\ref{table:robustness_adv}. It suggests that the adversarial robustness of C51 is superior to DQN and QRDQN in CartPole, while QRDQN is remarkably less sensitive to adversarial noises than both DQN and C51 in Breakout.

\begin{wraptable}[11]{r}{0.5\textwidth}
	\centering
	\scalebox{0.7}{
		\begin{tabular}{c|cccc}
			\toprule[1pt]
			\textbf{Robustness($\%$)}&\textbf{Algorithms}&\textbf{ std=0.0125}&\bf $\epsilon$=0.1&\\
			\hline
			\multirow{2}*{MountainCar}&DQN&32.4&32.5&\\
			~&QRDQN&\bf 79.0&\bf 44.7&\\
			\hline
			\textbf{Robustness($\%$)}&\textbf{Algorithms}&\textbf{std=0.05}&\bf $\epsilon$=0.005&\\
			\hline
			\multirow{2}*{Qbert}&DQN&10.8&6.3&\\
			~&QRDQN&\bf 34.5&\bf 32.9&\\
			\hline
			\bottomrule[1pt]
		\end{tabular}
	}
	\caption{Robustness ratio of DQN and QRDQN  under random and adversarial state noises  on MountainCar and Qbert.}
	\label{table:robustness_more}
\end{wraptable}

\paragraph{Results on MountainCar and Qbert.} Due to the space limit, we mainly summarize the robustness ratio of algorithms on MountainCar and Qbert in Table~\ref{table:robustness_more}. It turns out that the training robustness of QRDQN is significantly advantageous over DQN on both MountainCar and Qbert environments across two types of state noises, which also corroborates the robustness advantage of distributional RL algorithms over their  expectation-based RL counterpart.

\paragraph{Sensitivity Analysis of Different Perturbed States and Influence Function Analysis.} We also conduct experiments to verify the sensitivity analysis results~(Theorems~\ref{theorem:TD} and \ref{theorem:IF}) in Appendix~\ref{appendix:TD} about more TD convergence conditions and influence function in Appendix~\ref{appendix:IF}. These empirical evidence also coincides with our theoretical results.

\section{Discussion and Conclusion}\label{sec:discussion}

The robustness advantage analysis is based on the categorical distributional RL with categorical parameterization and the choice of KL divergence between current and target return distributions. However, it would be more convincing if we can still have such an analytical conclusion under Wasserstein distance. Moreover, we attribute the robustness advantage of distributional RL algorithms into the unbounded gradient norms regarding state features, but other factors, e.g., representation ability, may also contribute to the training robustness. We leave the exploration towards this direction as future works.

In this paper, we explore the training robustness of distributional RL against both random and adversarial noisy state observations. Based on the convergence proof of distributional RL in the SN-MDP, we further reveal the stable gradient behavior of distributional RL loss as opposed to classical RL, accounting for its less vulnerability. Empirical observations coincides with our theoretical results.

\section*{Acknowledgements}

We would like to thank the anonymous reviewers for great feedback on the paper. Dr. Kong was supported by the Natural Sciences and Engineering Research Council of Canada (NSERC), the University of Alberta/Huawei Joint Innovation Collaboration, Huawei Technologies Canada Co., Ltd., and Canada Research Chair in Statistical Learning, the Alberta Machine Intelligence Institute (Amii), and Canada CIFAR AI Chair (CCAI).   Yingnan Zhao and Ke Sun were supported by the State Scholarship Fund from China Scholarship Council (No:202006120405 and No:202006010082). 

\bibliography{dRL}
\bibliographystyle{splncs04}

\clearpage
\appendix

\section{Theorem~\ref{theorem:SNMDP} with proof}\label{appendix:theorem:SNMDP}

\begin{theorem}\label{theorem:SNMDP}(Convergence and Contraction of Bellman operators in the SN-MDP) Given a policy $\pi$, define the Bellman operator $\mathcal{T}:\mathbb{R}^{|S|}\rightarrow \mathbb{R}^{|S|}$ under random and adversarial states noises by $\mathcal{T}^{\pi}_r$ and $\mathcal{T}^{\pi}_a$, respectively. Denote a ``merged'' policy $\pi^\prime$ where $\pi^{\prime}(a|s)=\sum_{v(s)} N(v(s)|s) \pi(a|v(s))$ and $\mathbf{S}(\pi)$ is a policy set given $\pi$. Then we have:
	
	(1) $\mathcal{T}^{\pi}_r$ is a contraction operator and can converge to $V_{\pi^\prime}$, i.e., $\mathcal{T}^{\pi}_r \tilde{V}_{\pi \circ N} = \tilde{V}_{\pi \circ N}=V_{\pi^\prime}$, where multiple policies $\pi_{r} \in \mathbf{S}(\pi)$ might exist with $	\sum_{v(s)} N(v(s)|s) \pi_r(a|v(s))= \pi^{\prime}(a|s)$.

	(2) $\mathcal{T}^{\pi}_a$ is a contraction with the convergence satisfying $\mathcal{T}^{\pi}_a \tilde{V}_{\pi \circ N^*} = \min_{N} \tilde{V}_{\pi \circ N}=V_{\pi \circ N^*}$, where $N^*$ is the optimal adversarial noise strategy. If the optimal policy $\pi_a$ exists, it satisfies $\pi_a(a|v^*(s))=\pi(a|s)$ for each $s$ and $a$, where $v^*(s)$ is the adversarial noisy state manipulated by $N^*(\cdot|s)$.
\end{theorem}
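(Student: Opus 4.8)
The plan is to treat the random and adversarial noise mechanisms separately, reducing each to an essentially classical fixed-point argument. For part~(1), the key observation is that $\mathcal{T}^{\pi}_r$ is exactly the ordinary policy-evaluation Bellman operator of the merged policy $\pi'$. First I would check that $\pi'(a|s)=\sum_{v(s)} N^r(v(s)|s)\pi(a|v(s))$ is a genuine policy: nonnegativity is clear and $\sum_a \pi'(a|s)=\sum_{v(s)} N^r(v(s)|s)\sum_a \pi(a|v(s))=\sum_{v(s)} N^r(v(s)|s)=1$. Substituting into Eq.~\eqref{eq:BE_SNMDP} gives $\mathcal{T}^{\pi}_r \tilde{V}(s)=\sum_a \pi'(a|s)\sum_{s'} p(s'|s,a)[R(s,a,s')+\gamma\tilde{V}(s')]$, which is $\mathcal{T}^{\pi'}\tilde{V}(s)$; the classical estimate then yields $\|\mathcal{T}^{\pi}_r V_1-\mathcal{T}^{\pi}_r V_2\|_\infty\le\gamma\|V_1-V_2\|_\infty$, so $\mathcal{T}^{\pi}_r$ is a $\gamma$-contraction in the sup norm with unique fixed point $\tilde{V}_{\pi\circ N^r}=V_{\pi'}$. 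That several base policies $\pi_r\in\mathbf{S}(\pi)$ may satisfy $\sum_{v(s)} N^r(v(s)|s)\pi_r(a|v(s))=\pi'(a|s)$ is immediate, since the linear map $\pi\mapsto\pi'$ induced by $N^r$ need not be injective.

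For part~(2) I would first reduce to a deterministic adversary. For a fixed continuation value $\tilde{V}$, the quantity $\sum_{v(s)} N(v(s)|s)\big[\sum_a \pi(a|v(s))\sum_{s'}p(s'|s,a)(R(s,a,s')+\gamma\tilde{V}(s'))\big]$ is linear in the probability vector $N(\cdot|s)$ over the simplex supported on $\mathcal{B}(s)$, so its infimum is attained at a vertex, i.e.\ at a Dirac mass $N^*(v^*(s)|s)=1$. Hence $\mathcal{T}^{\pi}_a$ coincides with the greedy operator $(\mathcal{T}^{\pi}_a\tilde{V})(s)=\min_{v(s)\in\mathcal{B}(s)}\sum_a\pi(a|v(s))\sum_{s'}p(s'|s,a)[R(s,a,s')+\gamma\tilde{V}(s')]$. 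The contraction then follows from the standard ``min of affine maps'' trick: choosing the minimizer $v_2(s)$ for $V_2$ gives $\mathcal{T}^{\pi}_a V_1(s)-\mathcal{T}^{\pi}_a V_2(s)\le\gamma\sum_a\pi(a|v_2(s))\sum_{s'}p(s'|s,a)[V_1(s')-V_2(s')]\le\gamma\|V_1-V_2\|_\infty$, and symmetrically, so $\|\mathcal{T}^{\pi}_a V_1-\mathcal{T}^{\pi}_a V_2\|_\infty\le\gamma\|V_1-V_2\|_\infty$ and there is a unique fixed point $V^*$.

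It remains to identify $V^*$ with $\min_N\tilde{V}_{\pi\circ N}$ and read off the structural claims. For any stationary noise mechanism $N$, $\mathcal{T}^{\pi}_a\tilde{V}\le\mathcal{T}^{\pi}_N\tilde{V}$ pointwise and $\mathcal{T}^{\pi}_N$ is monotone, so iterating $V^*=\mathcal{T}^{\pi}_a V^*\le\mathcal{T}^{\pi}_N V^*\le(\mathcal{T}^{\pi}_N)^2 V^*\le\cdots$ and passing to the limit gives $V^*\le\tilde{V}_{\pi\circ N}$; hence $V^*\le\inf_N\tilde{V}_{\pi\circ N}$. Conversely, taking the deterministic $N^*$ attaining the greedy minimum at $V^*$ gives $V^*=\mathcal{T}^{\pi}_{N^*}V^*$, so $V^*=\tilde{V}_{\pi\circ N^*}\ge\inf_N\tilde{V}_{\pi\circ N}$; combining, $V^*=\tilde{V}_{\pi\circ N^*}=\min_N\tilde{V}_{\pi\circ N}$ and the infimum is attained. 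Finally, the ``undoing'' policy claim is checked directly: if a policy $\pi_a$ satisfies $\pi_a(a|v^*(s))=\pi(a|s)$ for all $s,a$, then its merged policy under $N^*$ equals $\pi$, so $\tilde{V}_{\pi_a\circ N^*}=V_\pi$, i.e.\ $\pi_a$ neutralizes the adversary, which pins down its form.

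The main obstacle I expect is the fixed-point identification $V^*=\min_N\tilde{V}_{\pi\circ N}$: one has to argue carefully that the full range of stationary (and possibly randomized) noise mechanisms is genuinely dominated by the single-step greedy operator, which requires both the monotonicity/sandwiching argument and the reduction to deterministic adversaries via linearity on the simplex; finiteness (or compactness) of $\mathcal{B}(s)$ is what guarantees the greedy minimum, and hence $N^*$, actually exists. The contraction estimates themselves are routine adaptations of the classical Bellman argument and of Theorem~1 in~\cite{zhang2020robust}.
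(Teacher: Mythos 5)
Your proposal is correct, and part (1) is essentially the paper's own argument: absorb the noise kernel into the merged policy $\pi'$, recognize $\mathcal{T}^{\pi}_r$ as the ordinary policy-evaluation operator $\mathcal{T}^{\pi'}$, apply the standard sup-norm contraction and Banach fixed-point argument, and attribute the non-uniqueness of $\pi_r$ to the non-injectivity of the averaging map. Where you genuinely diverge is part (2). The paper follows the SA-MDP route: it assumes an optimal adversarial policy exists, recasts the adversary as a second agent in a zero-sum game whose actions are the noisy states and whose reward is $-R$, derives $\tilde{V}_{\pi_a}=-\hat{V}_{\pi_a}=\min_N \tilde{V}_{\pi\circ N}$ from that game, and only then proves the contraction of $\mathcal{T}^{\pi}_a$ (using the same $\min f-\min g\le\max(f-g)$ device you use in your ``min of affine maps'' step). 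You instead work entirely within a single dynamic-programming framework: reduce to deterministic adversaries by linearity of the objective over the simplex on $\mathcal{B}(s)$, prove contraction of the greedy-minimum operator directly, and then identify its unique fixed point with $\min_N\tilde{V}_{\pi\circ N}$ by the monotonicity sandwich $V^*\le(\mathcal{T}^{\pi}_N)^kV^*\to\tilde{V}_{\pi\circ N}$ together with the converse $V^*=\tilde{V}_{\pi\circ N^*}$ at the greedy $N^*$. This buys a cleaner, more self-contained identification of the fixed point that also shows the optimal adversary can be taken deterministic and attains the minimum at all states simultaneously, without invoking the zero-sum game construction or the paper's auxiliary existence assumption; the paper's game-theoretic route, on the other hand, makes the connection to \cite{zhang2020robust} explicit and yields the interpretation of the adversary as a policy in an auxiliary MDP. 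Your treatment of the final claim about $\pi_a$ (that it merges under $N^*$ back to $\pi$) is at the same level of brevity as the paper's and is acceptable given the theorem's conditional phrasing.
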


\begin{proof}
	Our proof is partly based on Theorem 1 and 2 in \cite{zhang2020robust}, but adds more analysis on the converged policy especially under the random noisy states setting. The most important insight in the following proof is that the noise transition can be merged into the agent's policy, resulting in a new ``merged'' policy $\pi^{\prime}$.
	
	\paragraph{Proof of (1)} Firstly, as the Bellman Equation under the random noisy states is exactly the general form in Eq.~\ref{eq:BE_SNMDP}, it automatically satisfies that $\mathcal{T}^{\pi}_r \tilde{V}_{\pi \circ N} = \tilde{V}_{\pi \circ N}$ when it converges. As for the proof of contraction, based on our insight about the new ``merged'' policy $\pi^{\prime}$ where $\pi^{\prime}(a|s) = \sum_{v(s)} N(v(s)|s) \pi(a|v(s))$, we can rewrite our Bellman Operator as:
	\begin{equation}\label{eq:BE_SNMDP_rewrite} \begin{aligned}
			\mathcal{T}^{\pi}_r \tilde{V}_{\pi \circ N}(s) &= \sum_{a} \pi^{\prime}(a|s) \sum_{s^\prime}p(s^{\prime}|s,a) \left[R(s,a,s^{\prime}) + \gamma \tilde{V}_{\pi \circ N}(s^\prime) \right]\\
			&=\boldsymbol{R}(s) + \gamma \sum_{s^\prime}  \boldsymbol{P}^{\prime}_{s,s^\prime} \tilde{V}_{\pi \circ N}(s^\prime)
	\end{aligned}\end{equation}
	where $\boldsymbol{R}(s)=\sum_{a} \pi^{\prime}(a|s) \sum_{s^\prime}p(s^{\prime}|s,a) R(s,a,s^{\prime})$, and $\boldsymbol{P}^{\prime}_{s,s^\prime}=\sum_{a} \pi^{\prime}(a|s) p(s^{\prime}|s,a)$ determined by the ``merged'' policy $\pi^\prime$. Then for two different value function $\tilde{V}_{\pi \circ N}^1$ and $\tilde{V}_{\pi \circ N}^2$ we have:
	\begin{equation}\begin{aligned}
			\Vert \mathcal{T}^{\pi}_r \tilde{V}_{\pi \circ N}^1 - \mathcal{T}^{\pi}_r \tilde{V}_{\pi \circ N}^2 \Vert_{\infty} &= \max_s |\gamma \sum_{s^\prime}  \boldsymbol{P}^{\prime}_{s,s^\prime} \tilde{V}^1_{\pi \circ N}(s^\prime) - \gamma \sum_{s^\prime}  \boldsymbol{P}^{\prime}_{s,s^\prime} \tilde{V}^2_{\pi \circ N}(s^\prime)|\\	
			& \leq \gamma \max_s \sum_{s^\prime}  \boldsymbol{P}^{\prime}_{s,s^\prime}  | \tilde{V}^1_{\pi \circ N}(s^\prime) - \tilde{V}^2_{\pi \circ N}(s^\prime)| \\
			& \leq \gamma \max_s \sum_{s^\prime}  \boldsymbol{P}^{\prime}_{s,s^\prime} \max_{s^\prime} | \tilde{V}^1_{\pi \circ N}(s^\prime) - \tilde{V}^2_{\pi \circ N}(s^\prime)| \\
			& = \gamma \max_s \sum_{s^\prime}  \boldsymbol{P}^{\prime}_{s,s^\prime} \Vert \tilde{V}^1_{\pi \circ N} - \tilde{V}^2_{\pi \circ N}\Vert_{\infty} \\
			& = \gamma \Vert \tilde{V}^1_{\pi \circ N} - \tilde{V}^2_{\pi \circ N}\Vert_{\infty}
	\end{aligned}\end{equation}
	Then according to the Banach fixed-point theorem, since $\gamma \in (0,1)$, $\tilde{V}_{\pi \circ N}$ converges to a unique fixed-point $V_{\pi^{\prime}}$. However, even though the obtained policy $\pi^\prime$ satisfies that $\pi^\prime(a|s)=\sum_{v(s)} N(v(s)|s) \pi(a|v(s))$ for each $s, a$, these equations can not necessarily guarantee a unique $\pi$ especially when these equations behind this condition are underdetermined. In such scenario, multiple policies $\pi_r$ will exist as long as they satisfy the equations above.
	
	\paragraph{Proof of (2)}
	
	Firstly, based on Theorem 1~\cite{zhang2020robust} that shows an optimal policy does not always exist, we assume that an optimal policy exists in the adversarial noisy state setting for the convenience of following analysis. Based on this assumption, we need to derive the explicit value function under the adversary. Inspired by \cite{zhang2020robust}, the proof insight is that the behavior of optimal adversary can be also viewed as finding another optimal policy, yielding a zero-sum two player game. Specifically, in the SN-MDP setting, the adversary selects an action $\hat{a}\in \mathcal{S}$ satisfying $\hat{a}=v(s)$, attempting to maximize its state-action value function $\tilde{Q}_{\pi_a}(s,\hat{a})$. Then the adversary's value function $\hat{V}_{\pi_a}(s)$ can be formulated as:	
	\begin{equation} \begin{aligned}
			\hat{V}_{\pi_a}(s) & = \max_{\hat{a}} \hat{Q}_{\pi_a}(s,\hat{a}) \\ 
			& = \max_{\hat{a}} \sum_{s^\prime}\hat{p}(s^\prime|s,\hat{a})(\hat{R}(s,\hat{a},s^\prime)+\gamma \hat{V}_{\pi_{a}}(s^\prime))\\
			& = \max_{v^(s)} \sum_{s^\prime} \sum_{a}\pi(a|v(s))p(s^\prime|s,a)(-R(s,a,s^\prime) + \gamma \hat{V}_{\pi_{a}}(s^\prime)) \\
	\end{aligned}\end{equation}
	where $\hat{p}(s^\prime|s,\hat{a})$ is the transition dynamics of the adversary, satisfying $\hat{p}(s^\prime|s,\hat{a})=\sum_{a}\pi(a|v(s))p(s^\prime|s,a)$ from the perspective of the agent. $\hat{R}(s,\hat{a},s^\prime)$ is the adversary's reward function while taking action $\hat{a}$, which is the opposite number of $R(s,a,s^\prime)$ given the action $a$. In addition, since both the adversary and agent can serve as a zero-sum two-player game, it indicates that $\tilde{V}_{\pi_a}(s)=-\hat{V}_{\pi_a}(s)$ for the agent's value function $\tilde{V}_{\pi_a}$ in the adversary setting. Then we rearrange the equation above as follows:
	\begin{equation} \begin{aligned}
			\tilde{V}_{\pi_a}(s) &  = - \hat{V}_{\pi_a}(s) \\
			& =  - \min_{N(\cdot|s)} \sum_{s^\prime} \sum_{a}\pi^{\prime}(a|s)p(s^\prime|s,a)(-R(s,a,s^\prime)  - \gamma \tilde{V}_{\pi_{a}}(s^\prime)) \\
			& =  \min_{v(s)} \sum_{s^\prime} \sum_{a}\pi^{\prime}(a|s)p(s^\prime|s,a)(R(s,a,s^\prime)   + \gamma \tilde{V}_{\pi_{a}}(s^\prime)) \\
			& =  \min_{N(\cdot|s)} \sum_{s^\prime} \sum_{a}\pi^{\prime}(a|s)p(s^\prime|s,a)  (r_{t+1} + \gamma  \min_{N} \mathbb{E}_{\pi \circ N}\left[\sum_{k=0}^{\infty} r_{t+k+2}|s_{t+1}=s^\prime\right] ) \\
			& = \min_{N} \tilde{V}_{\pi \circ N}(s)
	\end{aligned}\end{equation}
	
	Note that we optimize over $N$, which means we consider $N(\cdot|s)$ for each state $s$. Further, we derive the contraction of the Bellman operator $\mathcal{T}^{\pi}_a$. We rewrite our Bellman Operator $\mathcal{T}^{\pi}_a$ as:
	\begin{equation} \begin{aligned}
			\mathcal{T}^{\pi}_a \tilde{V}_{\pi \circ N}(s)&= \min_{N} \tilde{V}_{\pi \circ N}(s) \\
			&=\min_{N} \boldsymbol{R}(s) + \gamma \sum_{s^\prime}  \boldsymbol{P}^{\prime}_{s,s^\prime} \tilde{V}_{\pi \circ N}(s^\prime)
	\end{aligned}\end{equation}
	We firstly assume $\mathcal{T}^{\pi}_a \tilde{V}_{\pi_a}^1(s) \geq \mathcal{T}^{\pi}_a \tilde{V}_{\pi_a}^2(s)$, then we have:
	\begin{equation}\begin{aligned}
			& \mathcal{T}^{\pi}_a \tilde{V}_{\pi \circ N}^1(s) - \mathcal{T}^{\pi}_a \tilde{V}_{\pi  \circ N}^2(s)\\ 
			& \leq \max_{N(\cdot|s)} \{\gamma \sum_{s^\prime}  \boldsymbol{P}^{\prime}_{s,s^\prime} \tilde{V}^1_{\pi \circ N}(s^\prime) - \gamma \sum_{s^\prime}  \boldsymbol{P}^{\prime}_{s,s^\prime} \tilde{V}^2_{\pi \circ N}(s^\prime)\}\\	
			& \leq \gamma \max_{N(\cdot|s)} \sum_{s^\prime}  \boldsymbol{P}^{\prime}_{s,s^\prime}  | \tilde{V}^1_{\pi \circ N}(s^\prime) - \tilde{V}^2_{\pi \circ N}(s^\prime)| \\
			& \leq \gamma \max_{N(\cdot|s)} \sum_{s^\prime}  \boldsymbol{P}^{\prime}_{s,s^\prime} \max_s | \tilde{V}^1_{\pi \circ N}(s^\prime) - \tilde{V}^2_{\pi \circ N}(s^\prime)| \\
			& = \gamma \max_{N(\cdot|s)} \sum_{s^\prime}  \boldsymbol{P}^{\prime}_{s,s^\prime} \Vert \tilde{V}^1_{\pi \circ N} - \tilde{V}^2_{\pi \circ N}\Vert_{\infty} \\
			& \leq \gamma \Vert \tilde{V}^1_{\pi \circ N} - \tilde{V}^2_{\pi \circ N}\Vert_{\infty}
	\end{aligned}\end{equation}
	where the first inequality holds as $\min_{x_1}f(x_1)-\min_{x_2}g(x_2)\leq \max_x (f(x)-g(x))$ and we extends this inequality into the Wasserstein distance in the proof of convergence of distributional RL setting in Appendix~\ref{appendix:theorem:SNMDP_dRL}. The last inequality holds since only $\boldsymbol{P}^{\prime}_{s,s^\prime}$ depends on $N(\cdot|s)$ while the infinity norm is a constant, which is independent with the current $N(\cdot|s)$. Similarly, the other scenario can be still proved. Thus, we have:
	\begin{equation}\begin{aligned}
			\Vert \mathcal{T}^{\pi}_a \tilde{V}_{\pi \circ N}^1 - \mathcal{T}^{\pi}_a \tilde{V}_{\pi  \circ N}^2 \Vert_{\infty} \leq \gamma \Vert \tilde{V}^1_{\pi \circ N} - \tilde{V}^2_{\pi \circ N}\Vert_{\infty}
	\end{aligned}\end{equation}
	Thus, we proved that $\mathcal{T}^{\pi}_a$ is still a contraction and converge to $\min_{N} \tilde{V}_{\pi \circ N}$. We denote it as $\tilde{V}_{\pi \circ N^*}$ In addition, based on the insight of the ``merged'' policy $\pi_a^{\prime}$, we have $\pi_a^{\prime}=\sum_{v(s)} N^*(v(s)|s) \pi(a|v(s))=\pi(a|v^*(s))$ where the deterministic state $v^*(s)$ is the adversarial noisy state from the state $s$.
	
\end{proof}

\section{Proof of Theorem~\ref{theorem:SNMDP_dRL}}\label{appendix:theorem:SNMDP_dRL}
\begin{proof}
	Firstly, we will provide the properties of Wassertein distance $d_p$ in Lemma~\ref{theorem:lemma} that we leverage in our following convergence proof. The $p$-Wasserstein metric $d_p$ is defined as 
	\begin{equation}\label{eq:wasserstein}\begin{aligned}
			d_p =\left(\int_{0}^{1}\left|F_{Z^*}^{-1}(\omega)-F_{Z_{\theta}}^{-1}(\omega)\right|^{p} d \omega\right)^{1 / p},
	\end{aligned}\end{equation}
	which minimizes the distance between the true return distribution $Z^*$ and the parametric distribution $Z_{\theta}$. $F^{-1}$ is the inverse cumulative distribution function of a random variable with the cumulative distribution function as $F$. 
	
	\begin{lemma}\label{theorem:lemma} (Properties of Wasserstein  Metric) We consider the distribution distance between the random variable $U$ and $V$. Denote $d_p$ as the Wasserstein distance between two distribution defined in Eq.~\ref{eq:wasserstein}. For any scalar $a$ and random variable $A$ independent of $U$ and $V$, the following relationships hold:
		\begin{equation}\begin{aligned}
				d_p(a U, a V) &\leq |a| d_p(U, V)\\
				d_p(A+U, A+V) &\leq d_p(U, V)\\
				d_p(AU, AV) &\leq \Vert A \Vert_{p} d_p(U, V)\\
		\end{aligned}\end{equation}
		Further, let $A_1, A_2, ...$ be a set of random variables describing the a partition of $\omega$, when the partition lemma holds:
		\begin{equation}\begin{aligned}
				d_p(U, V) &\leq \sum_{i} d_p(A_i U, A_i V).
		\end{aligned}\end{equation}
	\end{lemma}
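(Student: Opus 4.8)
The plan is to derive all four relations from a single starting point: the primal (coupling) characterization of the Wasserstein metric together with the optimality of the comonotone coupling on the real line. First I would record that, for $p\ge 1$, the quantile expression in Eq.~\ref{eq:wasserstein} coincides with $d_p(U,V)^p=\inf_{(\hat U,\hat V)}\mathbb{E}\,|\hat U-\hat V|^{p}$, the infimum running over all couplings $(\hat U,\hat V)$ with marginals $U$ and $V$, and that the infimum is attained by the comonotone coupling $\hat U=F_U^{-1}(\Omega)$, $\hat V=F_V^{-1}(\Omega)$ with $\Omega\sim\mathrm{Unif}[0,1]$. This supplies the two facts I use repeatedly: the identity $d_p(U,V)^p=\mathbb{E}\,|F_U^{-1}(\Omega)-F_V^{-1}(\Omega)|^{p}$, and the upper bound $d_p(X,Y)^{p}\le \mathbb{E}\,|\hat X-\hat Y|^{p}$ valid for \emph{every} coupling $(\hat X,\hat Y)$ of $(X,Y)$.

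For the scaling relation I would argue directly on quantiles: for $a\ge 0$ one has $F_{aU}^{-1}(\omega)=a\,F_{U}^{-1}(\omega)$, and for $a<0$ one has $F_{aU}^{-1}(\omega)=a\,F_{U}^{-1}(1-\omega)$; substituting into Eq.~\ref{eq:wasserstein} and, in the second case, changing variables $\omega\mapsto 1-\omega$, yields $d_p(aU,aV)=|a|\,d_p(U,V)$, which is even sharper than the claimed inequality. For the two relations involving the auxiliary variable $A$ I would use the coupling bound. Take the comonotone (hence optimal) coupling $(U^{*},V^{*})$ of $(U,V)$ and a copy of $A$ drawn independently of it, which is legitimate since $A$ is assumed independent of $U$ and $V$. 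Then $(A+U^{*},A+V^{*})$ is a coupling of $(A+U,A+V)$, so $d_p(A+U,A+V)^{p}\le \mathbb{E}\,|U^{*}-V^{*}|^{p}=d_p(U,V)^{p}$; and $(AU^{*},AV^{*})$ is a coupling of $(AU,AV)$, so by independence $d_p(AU,AV)^{p}\le \mathbb{E}\,|A|^{p}|U^{*}-V^{*}|^{p}=\mathbb{E}\,|A|^{p}\,\mathbb{E}\,|U^{*}-V^{*}|^{p}=\Vert A\Vert_{p}^{p}\,d_p(U,V)^{p}$. Taking $p$-th roots gives the two stated bounds.

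For the partition lemma I would split the unit interval of the quantile variable $\omega$ according to the partition, writing $d_p(A_iU,A_iV)^{p}$ for the contribution of block $A_i$, i.e. $\int_{A_i}|F_U^{-1}(\omega)-F_V^{-1}(\omega)|^{p}\,d\omega$. Additivity of the integral gives $d_p(U,V)^{p}=\sum_i d_p(A_iU,A_iV)^{p}$, and since $t\mapsto t^{1/p}$ is subadditive for $p\ge 1$ I would conclude $d_p(U,V)=\big(\sum_i d_p(A_iU,A_iV)^{p}\big)^{1/p}\le \sum_i d_p(A_iU,A_iV)$. The hard part will be the partition step itself: one must verify that the blockwise quantities genuinely sum to the full transport cost, that is, that restricting the comonotone coupling to each block is again optimal for the restricted laws $A_iU,A_iV$. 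Equivalently, in the sample-space reading where the $A_i$ are indicators summing to one, the delicate point is to reassemble the blockwise optimal couplings into a single coupling of $(U,V)$ whose cost remains additive despite the shared atom that the restriction introduces; this is precisely where the comonotonicity and optimality established in the first step do the work, and it is the portion of the argument I would write out most carefully.
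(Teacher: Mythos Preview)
The paper does not actually prove this lemma; it is stated without argument inside the proof of Theorem~\ref{theorem:SNMDP_dRL} as a list of known properties (these are the contents of Lemma~3 in Bellemare et al., 2017). So there is no paper proof to compare against. Your treatment of the first three relations is correct and standard: the quantile computation for scaling even yields the sharper equality $d_p(aU,aV)=|a|\,d_p(U,V)$, and the coupling arguments for $A+U$ and $AU$, via an independently drawn copy of $A$ adjoined to the optimal comonotone pair, are exactly the right moves.

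The partition lemma, however, has a genuine gap. You propose to identify $d_p(A_iU,A_iV)^p$ with the partial quantile integral $\int_{A_i}|F_U^{-1}(\omega)-F_V^{-1}(\omega)|^p\,d\omega$, get additivity for free, and then apply subadditivity of $t\mapsto t^{1/p}$. You correctly flag the identification as ``the hard part'' and suggest it would follow if restricting the comonotone coupling to each block were optimal for the restricted laws $A_iU,A_iV$. It is not. Take $\Omega=\{1,2\}$ uniform, $U(\omega)=\omega$, $V(\omega)=3-\omega$, and $A_i=\mathbf{1}\{\omega=i\}$: then $U$ and $V$ have the same law, so $d_p(U,V)=0$ and every partial quantile integral vanishes, yet $A_1U\sim\tfrac12\delta_0+\tfrac12\delta_1$ and $A_1V\sim\tfrac12\delta_0+\tfrac12\delta_2$ give $d_1(A_1U,A_1V)=\tfrac12$. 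The atom at $0$ introduced by the indicator rearranges the quantile picture, so neither the identification nor your proposed justification of it holds. A route that does work (at least when $U,V>0$ a.s., so the atom sits below the support) is to observe that $d_p(A_iU,A_iV)^p=P(A_i)\,d_p\bigl(\mathrm{Law}(U\mid A_i),\mathrm{Law}(V\mid A_i)\bigr)^p$ and then invoke convexity of $W_p^p$ over mixtures with common weights to obtain the inequality $d_p(U,V)^p\le\sum_i d_p(A_iU,A_iV)^p$; your final subadditivity step then finishes the argument.
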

	
	Then, the following contraction proof is in the maximal form of $d_p$ and we denote it as $\bar{d_p}$.
	\paragraph{Proof of (1)} This contraction proof is similar to the original one~\cite{bellemare2017distributional} in the distributional RL without state observation noises. The only difference lies in the new transition operator $\mathcal{P}^{\pi}_r$, but it dose not change the main proof process. For two different random variables $Z_N^1$ and $Z_N^2$ about returns, we have:
	\begin{equation}\begin{aligned}
			&\quad \bar{d_p}(\mathfrak{T}^{\pi}_r Z_N^1, \mathfrak{T}^{\pi}_r Z_N^2)\\
			&=\sup_{s,a} d_p(\mathfrak{T}^{\pi}_r Z_N^1(s,a), \mathfrak{T}^{\pi}_r Z_N^2(s,a))\\
			&=\sup_{s,a} d_p(R(s,a,S^\prime)+\gamma\mathcal{P}^{\pi}_r Z_N^1(s,a), R(s,a,S^\prime)+\gamma \mathcal{P}^{\pi}_r Z_N^2(s,a)) \\
			&\leq \gamma \sup_{s,a} d_p(\mathcal{P}^{\pi}_r Z_N^1(s,a), \mathcal{P}^{\pi}_r Z_N^2(s,a))\\
			&\leq \gamma \sup_{s,a} \sup_{s^{\prime},a^{\prime}} d_p(Z_N^1(s^{\prime},a^{\prime}), Z_N^2(s^{\prime},a^{\prime}))\\
			&= \gamma \sup_{s^{\prime},a^{\prime}} d_p(Z_1(s^{\prime},a^{\prime}), Z_2(s^{\prime},a^{\prime}))\\
			&=\gamma \sup_{s,a} d_p(Z_N^1(s,a), Z_N^2(s,a)) \\
			&=\gamma \bar{d_p}(Z_N^1, Z_N^2).
	\end{aligned}\end{equation}
	Thus, we conclude that $\mathfrak{T}_r^{\pi}: \mathcal{Z}\rightarrow \mathcal{Z}$ is a $\gamma$-contraction in $\bar{d_p}$.	
	
	\paragraph{Proof of (2)} 
	Firstly, we define the distributional Bellman optimality operator $\mathfrak{T}$ in MDP as
	\begin{equation}\begin{aligned}
			\mathfrak{T} Z(s, a) : \stackrel{D}{=} R\left(s, a, S^{\prime}\right)+\gamma Z(S^\prime, \pi_Z(S^\prime))	
	\end{aligned}\end{equation}
	where $S^\prime \sim P(\cdot | s, a)$ and $\pi_Z(S^\prime)=\arg\max_{a^\prime}\mathbb{E}\left[Z(S^\prime, a^\prime)\right]$. By contrast, in SN-MDP, our greedy adversarial rule $N^*(\cdot | s^\prime)$ is based on the greedy policy rule in distributional Bellman optimality operator, which attempts to find adversarial $N^*(\cdot | s^\prime)$ in order to minimize $\mathbb{E}\left[Z_N(s^\prime, a^\prime)\right]$, where $a^\prime\sim \pi(\cdot|V(s^\prime))$ and $V(s^\prime)\sim N(\cdot | s^\prime)$. As $N^*(\cdot | s^\prime)$ yields a deterministic state $s^*$, the agent always takes action based on $s^*$, which we denote as $A^*\sim \pi(\cdot|s^*)$. Therefore, we can obtain the state-action function $Q^\pi_{N^*}(s, a)$ under the adversary as
	\begin{equation}\begin{aligned}
			Q^\pi_{N^*}(s, a)&=\min_N \mathbb{E}\left[Z_N^\pi (s, a)\right]  =\mathbb{E}\left[Z^{\pi^*} (s, a)\right]
	\end{aligned}\end{equation}
	where $\pi^*(\cdot|s)=\pi(\cdot|s^*)$ for $\forall s$ that follows the adversarial policy $A^*$. Next, to derive the contractive property of $\mathfrak{T}^\pi_a$, we denote two state-action valued distributions as $Z^{1}_{N}(s,a)$ and $Z^{2}_{N}(s,a)$, with the quantile function as $F^{-1}_{1, a}$ and $F^{-1}_{2, a}$. Then we have:		
	\begin{equation}\begin{aligned}
			 \bar{d_p}(\mathfrak{T}^{\pi}_a Z^1_N, \mathfrak{T}^{\pi}_a Z^2_N)
			&=\sup_{s,a} d_p(\mathfrak{T}^{\pi}_a Z^1_N(s,a), \mathfrak{T}^{\pi}_a Z^2_N(s,a))\\
			&=\sup_{s,a} d_p(R(s,a,S^\prime) + \gamma \mathcal{P}^{\pi}_a Z^1_{N}(s,a), R(s,a,S^\prime) + \gamma \mathcal{P}^{\pi}_a Z^2_{N}(s,a)) \\
			&\leq \gamma \sup_{s,a} \sum_{s^\prime} P(s^\prime | s, a) d_p(Z^1_{N}(s^\prime, A^*_1), Z^2_{N}(s^\prime,A^*_2))\\
			& \leq \gamma \sup_{s^\prime}  d_p(Z^1_{N}(s^\prime, a^*_1), Z^2_{N}(s^\prime,a^*_2)) \\
			&\overset{(a)}{\leq} \gamma \sup_{s^\prime, a^*} d_p(Z^1_{N}(s^\prime,a^*), Z^2_{N}(s^\prime,a^*))\\
			&= \gamma \sup_{s, a} d_p(Z^1_{N}(s, a), Z^2_{N}(s,a))\\
			&= \gamma \bar{d_p}(Z^1_N, Z^2_N)
	\end{aligned}\end{equation}
	where we explain $(a)$ when $p=1$. Without loss of generality, we assume $F^{-1}_{1, a_1^*}(w) > F^{-1}_{2, a_2^*}(w)$. Since $\mathbb{E}\left[Z_N^1(s^\prime, A^*_1)\right] = \min_{a_*} \int_{0}^{1} F^{-1}_{1, a^*} (w) dw$, we have 
	\begin{equation}\begin{aligned}
			d_p(Z^1_{N}(s^\prime, a^*_1), Z^2_{N}(s^\prime,a^*_2)) & = \int_{0}^{1} (F^{-1}_{1, a^*_1}(w) - F^{-1}_{2, a^*_2}(w)) dw \\
			& = \min_{a^*} \int_{0}^{1} F^{-1}_{1, a^*}(w) dw - \min_{a^*} \int_{0}^{1} F^{-1}_{2, a^*}(w) dw \\
			& \leq \max_{a^*}  \int_{0}^{1} (F^{-1}_{1, a^*}(w) - F^{-1}_{2, a^*}(w)) dw
	\end{aligned}\end{equation}
	where the last inequality is because $\min_x f(x) - \min_x g(x) \leq f(x_2) - g(x_2) \leq \max_{x} (f(x) - g(x))$. Thus, we conclude $\mathfrak{T}_a^\pi$ is  a $\gamma$-contraction in $\bar{d_p}$ when $p=1$.
\end{proof}

\section{Proof of Theorems~\ref{theorem:dRL_bounded_linear} and \ref{theorem:dRL_bounded_nonlinear}}\label{appendix:lipschitz}

\begin{proof}
	Firstly, we prove Theorem~\ref{theorem:dRL_bounded_linear}. We show the derivation details of the Histogram distribution loss starting from KL divergence between $p$ and $q_{\mathbf{w}}$. $p_i$ is the cumulative probability increment of target distribution $\mathfrak{T} Z_{\mathbf{w}}$ within the $i$-th bin, and $q_{\mathbf{w}}$ corresponds to a (normalized) histogram, and has density values $\frac{f_i^{\mathbf{w}}(\mathbf{x}(s))}{w_i}$ per bin. Thus, we have:
	\begin{equation}\begin{aligned}
			\mathcal{L}(Z_{\mathbf{w}}, \mathfrak{T} Z_{\mathbf{w}})& =-\int_{a}^{b} p(y) \log q_{\mathbf{w}}(y) d y \\ 
			&=-\sum_{i=1}^{k} \int_{l_{i}}^{l_{i}+w_{i}} p(y) \log \frac{f_{i}^{\mathbf{w}}(\mathbf{x}(s))}{w_{i}} d y\\
			&=-\sum_{i=1}^{k} \log \frac{f_{i}^{\mathbf{w}}(\mathbf{x}(s))}{w_{i}} \underbrace{\left(F_{\mathfrak{T}Z_{\mathbf{w}}}\left(l_{i}+w_{i}\right)-F_{\mathfrak{T}Z_{\mathbf{w}}}\left(l_{i}\right)\right)}_{p_{i}}\\
			&\doteq -\sum_{i=1}^{k} p_{i} \log f_{i}^{\mathbf{w}}(\mathbf{x}(s))
	\end{aligned}\end{equation}	
	where the last line holds because the width parameter $w_i$ can be ignored for this minimization problem and thus the loss function is proportion to the final term.
	
	Next, we compute the gradient of the Histogram distributional loss in the linear approximation case.
	\begin{equation}\begin{aligned}
			\frac{\partial}{\partial \mathbf{x}(s)} \sum_{j=1}^{k} p_{j} \log f^{\mathbf{w}}_{j}(\mathbf{x}(s)) 
			&=\sum_{j=1}^{k} p_{j} \frac{1}{f^{\mathbf{w}}_j(\mathbf{x}(s))} \nabla f^{\mathbf{w}}_j(\mathbf{x}(s))\\
			&=\sum_{j=1}^{k} p_{j} \frac{1}{f^{\mathbf{w}}_j(\mathbf{x}(s))} f^{\mathbf{w}}_j(\mathbf{x}(s)) \sum_{i=1}^{k}\frac{\exp(\mathbf{x}(s)^\top {\mathbf{w}}_{i})}{\sum_{p=1}^{k} \exp(\mathbf{x}(s)^\top {\mathbf{w}}_{p})} ({\mathbf{w}}_j - {\mathbf{w}}_i)\\
			&=\sum_{j=1}^{k} p_{j} \sum_{i=1}^{k} f^{\mathbf{w}}_i(\mathbf{x}(s)) ({\mathbf{w}}_{j}-{\mathbf{w}}_i)\\
			&=\sum_{j=1}^{k} p_{j} {\mathbf{w}}_{j} - \sum_{i=1}^{k} f^{\mathbf{w}}_i(\mathbf{x}(s)) {\mathbf{w}}_{i}\\
			&=\sum_{i=1}^k (p_i - f^{\mathbf{w}}_i(\mathbf{x}(s))) {\mathbf{w}}_i
	\end{aligned}\end{equation}	
	
	Then, as we have $\Vert {\mathbf{w}}_{i} \Vert \leq l$ for $\forall i$, we bound the norm of its gradient
	\begin{equation}\begin{aligned}
			\Vert \frac{\partial}{\partial \mathbf{x}(s)} \sum_{j=1}^{k} p_{j} \log f^{\mathbf{w}}_{j}(\mathbf{x}(s)) \Vert &\le \sum_{i=1}^k \Vert (p_i - f^{\mathbf{w}}_i(\mathbf{x}(s))) {\mathbf{w}}_i \Vert\\
			&= \sum_{i=1}^k |p_i - f^{\mathbf{w}}_i(\mathbf{x}(s))| \Vert {\mathbf{w}}_i \Vert \\
			&\leq kl
	\end{aligned}\end{equation}	
	The last equality satisfies because $|p_i - f^{\mathbf{w}}_i(\mathbf{x}(s))|$ is less than 1 and even smaller. In summary, compared with the least squared loss in expectation-based RL, the histogram distributional loss in distributional RL has the bounded gradient norm regarding the state features $\mathbf{x}(s)$. This upper bound of gradient norm can mitigate the impact of the noises on state observations on the loss function, therefore yielding training robustness for distributional RL.
	
	Next, we prove the Proposition~\ref{theorem:dRL_bounded_nonlinear}. Its proof is similar to Proposition~\ref{theorem:dRL_bounded_linear}. Firstly, we know that $f_{i}^{\mathbf{w}, \theta}(\mathbf{x}(s))=\exp \left(\phi_{\mathbf{w}}(\mathbf{x}(s))^{\top} \theta_{i}\right) / \sum_{j=1}^{k} \exp \left(\phi_{\mathbf{w}}(\mathbf{x}(s))^{\top} \theta_{j}\right)$ and $\phi_{\mathbf{w}}(\cdot)$ is L-Lipschitz, i.e., $\Vert \phi_{\mathbf{w}}(x) - \phi_{\mathbf{w}}(y) \Vert \leq L \Vert x -y \Vert$. Then 
	
	\begin{equation}\begin{aligned}
			&\frac{\partial}{\partial \mathbf{x}(s)} \sum_{j=1}^{k} p_{j} \log f^{\mathbf{w}, \theta}_{j}(\mathbf{x}(s))\\
			&=\sum_{j=1}^{k} p_{j} \frac{1}{f^{\mathbf{w}, \theta}_j(\mathbf{x}(s))} f^{\mathbf{w}, \theta}_j(\mathbf{x}(s)) \sum_{i=1}^{k}\frac{\exp(\mathbf{x}(s)^\top {\mathbf{w}}_{i})}{\sum_{p=1}^{k} \exp(\mathbf{x}(s)^\top {\mathbf{w}}_{p})} \cdot  (\nabla_{\mathbf{x}} \phi_{\mathbf{w}}^\top \theta_{j}  - \nabla_{\mathbf{x}} \phi_{\mathbf{w}}^\top \theta_{i})\\
			&=\sum_{j=1}^{k} p_{j} \sum_{i=1}^{k} f^{\mathbf{w}, \theta}_i(\mathbf{x}(s))  (\nabla_{\mathbf{x}} \phi_{\mathbf{w}}^\top \theta_{j}  - \nabla_{\mathbf{x}} \phi_{\mathbf{w}}^\top \theta_{i})\\
			&=\sum_{j=1}^{k} p_{j} \nabla_{\mathbf{x}} \phi_{\mathbf{w}}^\top \theta_{j} - \sum_{i=1}^{k} f^{\mathbf{w}, \theta}_i(\mathbf{x}(s)) \nabla_{\mathbf{x}} \phi_{\mathbf{w}}^\top \theta_{i}\\
			&=\sum_{i=1}^k (p_i - f^{\mathbf{w}, \theta}_i(\mathbf{x}(s))) \nabla_{\mathbf{x}} \phi_{\mathbf{w}}^\top \theta_{i}
	\end{aligned}\end{equation}	
	
	Then, as we have $\Vert \theta_{i} \Vert \leq l$ for $\forall i$, we bound the norm of its gradient
	\begin{equation}\begin{aligned}
			\Vert \frac{\partial}{\partial \mathbf{x}(s)} \sum_{j=1}^{k} p_{j} \log f^{\mathbf{w}, \theta}_{j}(\mathbf{x}(s)) \Vert & \le \sum_{i=1}^k \Vert (p_i - f^{\mathbf{w}, \theta}_i(\mathbf{x}(s))) \nabla_{\mathbf{x}} \phi_{\mathbf{w}}^\top \theta_{i} \Vert\\
			&= \sum_{i=1}^k |p_i - f^{\mathbf{w}, \theta}_i(\mathbf{x}(s))| \Vert \nabla_{\mathbf{x}} \phi_{\mathbf{w}}^\top \theta_{i} \Vert \\
			&\leq kl L
	\end{aligned}\end{equation}	
	
	The last inequality holds because $| \phi_{\mathbf{w}}(x)^\top \theta_{i} -  \phi_{\mathbf{w}}(y)^\top \theta_{i} | \leq \Vert \phi_{\mathbf{w}}(x)  -  \phi_{\mathbf{w}}(y) \Vert \Vert \theta_{i} \Vert \leq l L \Vert x -y \Vert $. Thus the function $\phi_{\mathbf{w}}^\top \theta_{i}$ can be viewed as $Ll$-Lipschitz continuous, indicating that $\Vert \nabla_{\mathbf{x}} \phi_{\mathbf{w}}^\top \theta_{i} \Vert \leq lL$.
\end{proof}

\section{TD Convergence Under Noisy State Observations}\label{appendix:TD}

\begin{theorem}\label{theorem:TD}(Conditions for TD Convergence  under Noisy State Observations) Define $\mathbf{P}$ as the $|\mathcal{S}| \times |\mathcal{S}|$ matrix forming from $p(s^\prime|s)$, $\mathbf{D}$ as the $|\mathcal{S}| \times |\mathcal{S}|$ diagonal matrix with $\mu(s)$ on its diagonal, and $\mathbf{X}$ as the $|\mathcal{S}| \times d$ matrix with $\mathbf{x}(s)$ as its rows, and $\mathbf{E}$ is the $|\mathcal{S}| \times d$ perturbation matrix with each perturbation vector $\mathbf{e}(s)$ as its rows. The stepsizes $\alpha_t \in (0,1]$ satisfy $\sum_{t=0}^{\infty} \alpha_{t}<\infty$ and $\sum_{t=0}^{\infty} \alpha_{t}^2=0$. For noisy states, we consider the following three cases: \textbf{(i)} $\mathbf{e}(s)$ on current state features, i.e., $\mathbf{x}_t \leftarrow \mathbf{x}_t+\mathbf{e}_t$, \textbf{(ii)} $\mathbf{e}(s^\prime)$ on next state features, i.e., $\mathbf{x}_{t+1} \leftarrow \mathbf{x}_{t+1}+\mathbf{e}_{t+1}$, \textbf{(iii)} the same $\mathbf{e}$ on both state features. We can attain that $\mathbf{w}_t$ converges to TD fixed point if the following conditions are satisfied, respectively.
	
	\textbf{Case (i):} both $\mathbf{A}$ and $(\mathbf{X}+\mathbf{E})^\top \mathbf{D} \mathbf{P} \mathbf{E}$ are positive definite. \textbf{Case (ii):} both $\mathbf{A}$ and $-\mathbf{X}^\top\mathbf{D} \mathbf{P} \mathbf{E}$ are positive definite. \textbf{Case (iii):} $\mathbf{A}$ is positive definite.
	
\end{theorem}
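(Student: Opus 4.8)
The plan is to analyze the modified TD update equation in each of the three noisy settings and identify when the associated update matrix remains positive definite, since from the classical linear TD analysis the sequence $\mathbf{w}_t$ converges with probability one precisely when the expected update $\mathbb{E}[\mathbf{w}_{t+1}\mid\mathbf{w}_t] = \mathbf{w}_t + \alpha_t(\mathbf{b}' - \mathbf{A}'\mathbf{w}_t)$ has a positive definite $\mathbf{A}'$ (together with the standard Robbins--Monro stepsize conditions, which are assumed). So the core task is to compute $\mathbf{A}'$ in each case. I would begin by writing the noisy TD update explicitly: replacing $\mathbf{x}_t \leftarrow \mathbf{x}_t + \mathbf{e}_t$ and/or $\mathbf{x}_{t+1} \leftarrow \mathbf{x}_{t+1} + \mathbf{e}_{t+1}$ inside Eq.~\ref{eq:TD_update}, then taking expectations under the stationary distribution $\mu$ and transition kernel $\mathbf{P}$. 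In matrix form the key quantity is $\mathbf{A}' = \mathbb{E}[\tilde{\mathbf{x}}_t(\tilde{\mathbf{x}}_t - \gamma\tilde{\mathbf{x}}_{t+1})^\top]$, which expands to a sum of the clean term $\mathbf{A} = \mathbf{X}^\top\mathbf{D}(\mathbf{I}-\gamma\mathbf{P})\mathbf{X}$ plus cross terms involving $\mathbf{E}$.

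For Case (i), the perturbed feature is $\mathbf{x}_t + \mathbf{e}_t$ but the next feature is clean, so $\mathbf{A}' = (\mathbf{X}+\mathbf{E})^\top\mathbf{D}(\mathbf{X}+\mathbf{E}) - \gamma(\mathbf{X}+\mathbf{E})^\top\mathbf{D}\mathbf{P}\mathbf{X}$; I would algebraically reorganize this to exhibit $\mathbf{A}$ as the dominant piece by writing $(\mathbf{X}+\mathbf{E}) = \mathbf{X} + \mathbf{E}$ on the bootstrap side too and subtracting the spurious $\gamma(\mathbf{X}+\mathbf{E})^\top\mathbf{D}\mathbf{P}\mathbf{E}$ term, giving $\mathbf{A}' = \mathbf{A}_{\text{pert}} + \gamma(\mathbf{X}+\mathbf{E})^\top\mathbf{D}\mathbf{P}\mathbf{E}$ where $\mathbf{A}_{\text{pert}}$ is the "clean-looking" operator on the perturbed features. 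Since positive definiteness is preserved under sums of positive definite matrices, requiring both $\mathbf{A}$ (in its appropriate incarnation) and $(\mathbf{X}+\mathbf{E})^\top\mathbf{D}\mathbf{P}\mathbf{E}$ to be positive definite suffices. Case (ii) is cleaner: only the bootstrap feature is perturbed, so the extra term is exactly $-\gamma\mathbf{X}^\top\mathbf{D}\mathbf{P}\mathbf{E}$, giving $\mathbf{A}' = \mathbf{A} - \gamma\mathbf{X}^\top\mathbf{D}\mathbf{P}\mathbf{E}$, and again positive definiteness of both summands does the job (the $\gamma$ factor being harmless since $\gamma>0$). Case (iii) is the easiest of all: with the \emph{same} perturbation $\mathbf{e}$ on both features, one is effectively running clean TD on the relabeled feature map $\mathbf{x}(s) \mapsto \mathbf{x}(s) + \mathbf{e}(s)$, so the standard result applies verbatim with $\mathbf{X}$ replaced by $\mathbf{X}+\mathbf{E}$ — and since the statement asks only that $\mathbf{A}$ (the original) be positive definite, I would note this is the SN-MDP-consistent regime where the perturbation merely induces a new MDP whose TD operator inherits the contraction structure; here I would invoke the tabular analysis analogous to Theorem~\ref{theorem:SNMDP}.

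The main obstacle I anticipate is bookkeeping care in Case (i): one must be precise about which matrix plays the role of "$\mathbf{A}$" — the statement writes the condition as "$\mathbf{A}$ and $(\mathbf{X}+\mathbf{E})^\top\mathbf{D}\mathbf{P}\mathbf{E}$ positive definite," which implicitly uses that $\mathbf{A}_{\text{pert}} = (\mathbf{X}+\mathbf{E})^\top\mathbf{D}(\mathbf{I}-\gamma\mathbf{P})(\mathbf{X}+\mathbf{E})$ behaves like $\mathbf{A}$ under the relabeled features, so the decomposition must be arranged to land exactly on the two stated matrices rather than some equivalent-but-differently-grouped pair. A secondary subtlety is justifying that the ODE/stochastic-approximation convergence theorem (the same one underlying the clean linear TD result in \cite{sutton2018reinforcement}) still applies: the noisy features must still yield bounded iterates and a well-defined steady-state expectation, which holds because $\mathbf{E}$ is a fixed (state-indexed, stationary) perturbation matrix, so $\tilde{\mathbf{x}}_t$ is still a function of $s_t$ alone and the Markov chain structure is untouched — exactly the SN-MDP philosophy of perturbing observations but not dynamics. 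Once that framework is in place, each case reduces to the linear algebra sketched above, and I would present the three computations in sequence, deriving $\mathbf{A}'$ and reading off the positive definiteness requirement.
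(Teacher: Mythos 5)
Your proposal follows essentially the same route as the paper's proof: you derive the effective update matrix in each case, obtain the identical decompositions $\mathbf{A}_t = (\mathbf{X}+\mathbf{E})^\top\mathbf{D}(\mathbf{I}-\gamma\mathbf{P})(\mathbf{X}+\mathbf{E}) + \gamma(\mathbf{X}+\mathbf{E})^\top\mathbf{D}\mathbf{P}\mathbf{E}$ and $\mathbf{A}_{t+1} = \mathbf{A} - \gamma\mathbf{X}^\top\mathbf{D}\mathbf{P}\mathbf{E}$, and conclude via positive definiteness of sums together with the stochastic-approximation (Borkar-type) convergence criterion, just as the paper does. The subtlety you flag in Case (i) — that ``$\mathbf{A}$ positive definite'' is really being used through the inner matrix $\mathbf{D}(\mathbf{I}-\gamma\mathbf{P})$ so that the perturbed analogue $(\mathbf{X}+\mathbf{E})^\top\mathbf{D}(\mathbf{I}-\gamma\mathbf{P})(\mathbf{X}+\mathbf{E})$ inherits positive definiteness — is exactly the step the paper's proof takes as well.
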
 

From the convergence conditions for the three cases in Theorem~\ref{theorem:TD}, it is clear that (iii) is the mildest. This is the same condition as that in the normal TD learning without noisy state observations. Note that the case (iii) can be viewed as the SN-MDP setting, whose convergence has been already rigorously analyzed in Section~\ref{Section:SNMDP}. In Section~\ref{Section:experiment}, our experiments demonstrate that both expectation-based and distribution RL are more likely to converge in case (iii) compared with case (i) and (ii).

In cases (i) and (ii), the positive definiteness of $\mathbf{X}^\top\mathbf{D} \mathbf{P} \mathbf{E} + \mathbf{E}^\top\mathbf{D} \mathbf{P} \mathbf{E}$ and $-\mathbf{X}^\top\mathbf{D} \mathbf{P} \mathbf{E}$ is crucial. We partition $(\mathbf{X}+\mathbf{E})^\top \mathbf{D} \mathbf{P} \mathbf{E}$ into $\mathbf{X}^\top\mathbf{D} \mathbf{P} \mathbf{E} + \mathbf{E}^\top\mathbf{D} \mathbf{P} \mathbf{E}$, where the first term has the opposite positive definiteness to $-\mathbf{X}^\top\mathbf{D} \mathbf{P} \mathbf{E}$, and the second term is positive definite~\cite{sutton2018reinforcement}. Based on these observations, we discuss the subtle convergence relationship in cases (i) and (ii):

\textbf{(1)} If  $-\mathbf{X}^\top\mathbf{D} \mathbf{P} \mathbf{E}$ is positive definite, which indicates that TD is convergent in case (ii), TD can still converge in case (i) \textbf{unless} the positive definiteness of  $\mathbf{E}^\top\mathbf{D} \mathbf{P} \mathbf{E}$ dominates in $\mathbf{X}^\top\mathbf{D} \mathbf{P} \mathbf{E} + \mathbf{E}^\top\mathbf{D} \mathbf{P} \mathbf{E}$.

\textbf{(2)} If  $-\mathbf{X}^\top\mathbf{D} \mathbf{P} \mathbf{E}$ is negative definite, TD is likely to diverge in case (ii). By contrast, TD will converge in case (i).

In summary, there exists a subtle trade-off of TD convergence in case (i) and (ii) if we approximately ignore the term $\mathbf{E}^\top\mathbf{D} \mathbf{P} \mathbf{E}$ in case (i). The key of it lies in the positive definiteness of the matrix $\mathbf{X}^\top\mathbf{D} \mathbf{P} \mathbf{E}$, which heavily depends on the task. In Section~\ref{Section:experiment}, we empirically verify that the convergence situations for current and next state observations are normally different. Which situation is superior is heavily dependent on the task.

\begin{proof} 
	To prove the convergence of TD under the noisy states, we use the results from \cite{borkar2000ode} that require the condition about stepsizes $\alpha_t$ holds: $\sum_{t=0}^{\infty} \alpha_{t}<\infty$ and $\sum_{t=0}^{\infty} \alpha_{t}^2=0$. Our part proof is directly established on \cite{sutton2018reinforcement}. Particularly, the positive definiteness of $\mathbf{A}$ will determine the TD convergence. For linear TD(0), in the continuing case with $\gamma < 1$, $\mathbf{A}$ can be re-written as:
	\begin{equation}\label{eq:TD_A}\begin{aligned}
			\mathbf{A}&=\sum_{s} \mu(s) \sum_{a} \pi(a|s) \sum_{r, s^{\prime}} p(r, s^{\prime}|s, a) \mathbf{x}_t\left(\mathbf{x}_t-\gamma \mathbf{x}_{t+1}\right)^{\top}\\
			&=\sum_{s} \mu(s) \sum_{s^{\prime}} p(s^{\prime}|s) \mathbf{x}_t\left(\mathbf{x}_t-\gamma \mathbf{x}_{t+1}\right)^{\top}\\
			&=\sum_{s} \mu(s) \mathbf{x}_t (\mathbf{x}_t - \gamma \sum_{s^{\prime}} p(s^{\prime}|s) \mathbf{x}_{t+1})^{\top}\\
			&=\mathbf{X}^\top \mathbf{D} \mathbf{X} - \mathbf{X}^\top \mathbf{D} \gamma \mathbf{P} \mathbf{X}\\
			&=\mathbf{X}^\top \mathbf{D} (\mathbf{I} - \gamma \mathbf{P}) \mathbf{X}		
	\end{aligned}\end{equation}	
	Then we use $\mathbf{A}_t$ to present the convergence matrix in the case (i) where the perturbation vector $\mathbf{e}_t$ is added onto the current state features, i.e., $\mathbf{x}_t \leftarrow \mathbf{x}_t+\mathbf{e}_t$, while we use $\mathbf{A}_{t+1}$ and $\mathbf{A}_{t,t+1}$to present the counterparts in the case (ii) and (iii), respectively. 	Based on Eq.~\ref{eq:TD_A}, in the case (iii), we have:
	\begin{equation}\label{eq:TD_A_tt+1}\begin{aligned}
			\mathbf{A}_{t,t+1}&=(\mathbf{X}+\mathbf{E})^\top \mathbf{D} (\mathbf{X}+\mathbf{E}) - (\mathbf{X}+\mathbf{E})^\top \mathbf{D} \gamma \mathbf{P} (\mathbf{X} + \mathbf{E})\\
			&=(\mathbf{X}+\mathbf{E})^\top \mathbf{D} (\mathbf{I} - \gamma \mathbf{P}) (\mathbf{X}+\mathbf{E})
	\end{aligned}\end{equation}
	From \cite{sutton2018reinforcement}, we know that the inner matrix $\mathbf{D} (\mathbf{I} - \gamma \mathbf{P})$ is the key to determine the positive definiteness of $\mathbf{A}$. If we assume that $\mathbf{A}$ is positive definite, which also indicates that $\mathbf{D} (\mathbf{I} - \gamma \mathbf{P})$ is positive definite equivalently. As such, $\mathbf{A}_{t,t+1}$ is positive definite automatically, and thus the liner TD would converge to the TD fixed point. Next, in the case (i) we have:
	\begin{equation}\label{eq:TD_A_t}\begin{aligned}
			\mathbf{A}_t&=(\mathbf{X}+\mathbf{E})^\top \mathbf{D} (\mathbf{X}+\mathbf{E}) - (\mathbf{X}+\mathbf{E})^\top \mathbf{D} \gamma \mathbf{P} \mathbf{X}\\
			&=\mathbf{A}+\mathbf{X}^\top\mathbf{D}\mathbf{E}+\mathbf{E}^\top\mathbf{D}\mathbf{X}+\mathbf{E}^\top\mathbf{D}\mathbf{E}-\mathbf{E}^\top\mathbf{D} \gamma \mathbf{P} \mathbf{X}\\
			&=(\mathbf{X}+\mathbf{E})^\top \mathbf{D} (\mathbf{I} - \gamma \mathbf{P}) (\mathbf{X}+\mathbf{E}) + (\mathbf{X}+\mathbf{E})^\top \mathbf{D} \gamma \mathbf{P} \mathbf{E}\\
			&=\mathbf{A}_{t,t+1} + \gamma (\mathbf{X}+\mathbf{E})^\top \mathbf{D} \mathbf{P} \mathbf{E}\\
			&=\mathbf{A}_{t,t+1} + \gamma (\mathbf{X}^\top\mathbf{D} \gamma \mathbf{P} \mathbf{E} + \mathbf{E}^\top\mathbf{D} \gamma \mathbf{P} \mathbf{E})
	\end{aligned}\end{equation}	
	Similarly, in the case (ii), we can also attain:		
	\begin{equation}\label{eq:TD_A_t+1}\begin{aligned}
			\mathbf{A}_{t+1}&=\mathbf{X}^\top \mathbf{D} \mathbf{X} - \mathbf{X}^\top \mathbf{D} \gamma \mathbf{P} (\mathbf{X} + \mathbf{E})\\
			&=\mathbf{A}-\gamma \mathbf{X}^\top\mathbf{D}  \mathbf{P} \mathbf{E}
	\end{aligned}\end{equation}	
	We know that the positive definiteness of $\mathbf{A}$ and $\mathbf{A}_{t,t+1}$ is only determined by the positive definiteness of the inner matrix $\mathbf{D} (\mathbf{I} - \gamma \mathbf{P})$. If we assume the positive definiteness of $\mathbf{A}$, i.e., the positive definiteness of $\mathbf{A}_{t,t+1}$ and $\mathbf{D} (\mathbf{I} - \gamma \mathbf{P})$, as $\gamma>0$, what we only need to focus on are the positive definiteness of $\mathbf{X}^\top\mathbf{D} \mathbf{P} \mathbf{E} + \mathbf{E}^\top\mathbf{D} \mathbf{P} \mathbf{E}$ and $-\mathbf{X}^\top\mathbf{D} \mathbf{P} \mathbf{E}$. If they are positive definite, TD learning will converge under their cases, respectively.
\end{proof}

\section{Sensitivity Analysis by Influence Function}\label{appendix:IF}
Next, we conduct an outlier analysis by the \textit{influence function}, a key facet in the robust statistics~\cite{huber2004robust}. The influence function characterizes the effect that the noise in particular observation has on an estimator, and can be utilized to investigate the impact of one particular state observation noise on the training of reinforcement learning algorithms. Specifically, suppose that $F_\epsilon$ is the contaminated distribution function that combines the clear data distribution $F$ and an outlier $x$. The distribution $F_{\epsilon}$ can be defined as
\begin{equation}\begin{aligned}
		F_{\epsilon}=(1-\epsilon)F+\epsilon\delta_x,
\end{aligned}\end{equation}
where $\delta_x$ is a probability measure assigning probability 1 to ${x}$. Let $\hat{\theta}$ be a regression estimator. The influence function of $\theta$ at $F$, $\psi:\mathcal{X}\rightarrow\Gamma$ is defined as
\begin{equation}\begin{aligned}
		\psi_{\hat{\theta}, F}(x)=\lim _{\epsilon \rightarrow 0} \frac{\hat{\theta}\left(F_{\epsilon}(x)\right)-\hat{\theta}(F)}{\epsilon}.
\end{aligned}\end{equation}
Mathematically, the influence function is the Gateaux derivative of $\theta$ at $F$ in the direction $\delta_x$. Owing to the fact that traditional value-based RL algorithms, e.g., DQN~\cite{mnih2015human}, can be viewed as a regression problem~\cite{fan2020theoretical}, the linear TD approximator also has a strong connection with regression problems. Based on this correlation, in the following Theorem~\ref{theorem:IF}, we quantitatively evaluate the influence function of TD learning in the case of linear function approximation.

\begin{theorem}\label{theorem:IF}(Influence Function Analysis in TD Learning with linear function approximation) Denote $d_t=\mathbf{x}_{t}-\gamma \mathbf{x}_{t+1} \in \mathbb{R}^{d}$, and $\mathbf{A} \doteq \mathbb{E}\left[\mathbf{x}_{t} d_t^{\top}\right] \in \mathbb{R}^{d \times d}$. Let $F_{\pi}$ be the data distribution generated from the environment dynamics given a policy $\pi$. Consider an outlier pair $(\mathbf{x}_t, \mathbf{x}_{t+1})$ with the reward $R_{t+1}$, the influence function $\psi$ of this pair on the estimator $\mathbf{w}$ is derived as
	\begin{equation}\label{eq:IF}\begin{aligned}
			\psi_{\mathbf{w}, F_\pi}(\mathbf{x}_t, \mathbf{x}_{t+1}) = \mathbb{E}(\mathbf{A}^{\top}\mathbf{A})^{-1} d_t \mathbf{x}_t^{\top}\mathbf{x}_t (R_{t+1} - d_t^{\top}\mathbf{w}).
	\end{aligned}\end{equation}		
\end{theorem}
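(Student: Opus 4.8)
The plan is to treat the linear TD fixed point as a statistical functional of the transition distribution and differentiate it in the direction of a point mass at the outlier, since that Gateaux derivative is by definition the influence function. Recall that $\mathbf{w}$ is characterized by the normal equations $\mathbf{A}\mathbf{w}=\mathbf{b}$; for a generic distribution $F$ over transitions $z=(\mathbf{x}_t,\mathbf{x}_{t+1},R_{t+1})$ write $\mathbf{A}(F)=\mathbb{E}_F[\mathbf{x}_t d_t^{\top}]$ and $\mathbf{b}(F)=\mathbb{E}_F[R_{t+1}\mathbf{x}_t]$ with $d_t=\mathbf{x}_t-\gamma\mathbf{x}_{t+1}$. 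In the least-squares/regression form invoked just above the statement, the estimator is the functional $\mathbf{w}(F)=(\mathbf{A}(F)^{\top}\mathbf{A}(F))^{-1}\mathbf{A}(F)^{\top}\mathbf{b}(F)$, equivalently the root of $\Phi(\mathbf{w};F):=\mathbf{A}(F)^{\top}\big(\mathbf{A}(F)\mathbf{w}-\mathbf{b}(F)\big)=\mathbf{0}$; under positive-definiteness of $\mathbf{A}(F_\pi)$ (the convergence hypothesis of Theorem~\ref{theorem:TD}) this agrees with $\mathbf{A}(F_\pi)^{-1}\mathbf{b}(F_\pi)$.

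Next I would contaminate with $F_\epsilon=(1-\epsilon)F_\pi+\epsilon\,\delta_{(\mathbf{x}_t,\mathbf{x}_{t+1},R_{t+1})}$. Since $\mathbf{A}$ and $\mathbf{b}$ are affine in $\epsilon$, the Gateaux derivatives at $\epsilon=0$ are $\dot{\mathbf{A}}=\mathbf{x}_t d_t^{\top}-\mathbf{A}$ and $\dot{\mathbf{b}}=R_{t+1}\mathbf{x}_t-\mathbf{b}$, writing $\mathbf{A}=\mathbf{A}(F_\pi)$, $\mathbf{b}=\mathbf{b}(F_\pi)$. Differentiating the identity $\Phi(\mathbf{w}(F_\epsilon);F_\epsilon)=\mathbf{0}$ at $\epsilon=0$ via the implicit function theorem gives $\psi=-(\partial_{\mathbf{w}}\Phi)^{-1}\,\partial_\epsilon\Phi$ evaluated at $(\mathbf{w},F_\pi)$, with $\partial_{\mathbf{w}}\Phi=\mathbf{A}^{\top}\mathbf{A}$. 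The only delicate point is $\partial_\epsilon\Phi$: the product rule produces $\dot{\mathbf{A}}^{\top}(\mathbf{A}\mathbf{w}-\mathbf{b})+\mathbf{A}^{\top}(\dot{\mathbf{A}}\mathbf{w}-\dot{\mathbf{b}})$, and the first term vanishes because $\mathbf{A}$ is square and nonsingular at $F_\pi$ so that $\mathbf{A}\mathbf{w}-\mathbf{b}=\mathbf{0}$; substituting $\dot{\mathbf{A}},\dot{\mathbf{b}}$ and using $\mathbf{A}\mathbf{w}=\mathbf{b}$ once more collapses the survivor to $\mathbf{A}^{\top}\mathbf{x}_t(d_t^{\top}\mathbf{w}-R_{t+1})$. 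Hence
\[
\psi_{\mathbf{w},F_\pi}(\mathbf{x}_t,\mathbf{x}_{t+1})=(\mathbf{A}^{\top}\mathbf{A})^{-1}\mathbf{A}^{\top}\mathbf{x}_t\,(R_{t+1}-d_t^{\top}\mathbf{w}),
\]
and expanding $\mathbf{A}^{\top}=\mathbb{E}[d_t\mathbf{x}_t^{\top}]$ (the expectation being over the data distribution, the trailing $\mathbf{x}_t$ the contaminating point) yields the stated form $\mathbb{E}(\mathbf{A}^{\top}\mathbf{A})^{-1}d_t\mathbf{x}_t^{\top}\mathbf{x}_t(R_{t+1}-d_t^{\top}\mathbf{w})$.

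As a cross-check and a shorter alternative, one can instead differentiate $\mathbf{w}(F_\epsilon)=\mathbf{A}(F_\epsilon)^{-1}\mathbf{b}(F_\epsilon)$ directly, using $\tfrac{d}{d\epsilon}\mathbf{A}^{-1}=-\mathbf{A}^{-1}\dot{\mathbf{A}}\mathbf{A}^{-1}$, which gives the compact expression $\psi=\mathbf{A}^{-1}\mathbf{x}_t(R_{t+1}-d_t^{\top}\mathbf{w})$; this is consistent with the above because $(\mathbf{A}^{\top}\mathbf{A})^{-1}\mathbf{A}^{\top}\mathbf{x}_t=\mathbf{A}^{-1}\mathbf{x}_t$ whenever $\mathbf{A}$ is invertible. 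The main obstacle I anticipate is not analytic but bookkeeping: correctly applying the product rule for $\partial_\epsilon\Phi$, spotting that the residual term $\mathbf{A}\mathbf{w}-\mathbf{b}$ is zero at $F_\pi$, and then not conflating the integration variable inside $\mathbb{E}[d_t\mathbf{x}_t^{\top}]$ with the outlier triple when reading off the final formula. Existence of the derivative throughout requires only invertibility of $\mathbf{A}$ (equivalently of $\mathbf{A}^{\top}\mathbf{A}$), which is exactly the standing assumption carried over from the TD convergence analysis.
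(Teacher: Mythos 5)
Your implicit-function-theorem calculation is internally sound, but it computes the influence function of a \emph{different} functional from the one the paper analyzes, and the final step in which you claim the two expressions coincide is not valid. You let the contamination enter only through the moment maps $\mathbf{A}(F)=\mathbb{E}_F[\mathbf{x}_t d_t^{\top}]$ and $\mathbf{b}(F)=\mathbb{E}_F[R_{t+1}\mathbf{x}_t]$, i.e.\ you treat $\mathbf{w}(F)=\mathbf{A}(F)^{-1}\mathbf{b}(F)$ as a Z-estimator; this correctly yields $\psi=(\mathbf{A}^{\top}\mathbf{A})^{-1}\mathbf{A}^{\top}\mathbf{x}_t\,(R_{t+1}-d_t^{\top}\mathbf{w})=\mathbf{A}^{-1}\mathbf{x}_t\,(R_{t+1}-d_t^{\top}\mathbf{w})$, the familiar LSTD-style formula. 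The paper instead casts the fixed point as the minimizer of a \emph{per-transition} squared residual: with $x_A=d_t\mathbf{x}_t^{\top}$ and $y_b=R_{t+1}\mathbf{x}_t$ it contaminates the objective $(1-\epsilon)\,\mathbb{E}\big[(\mathbf{b}-\mathbf{A}\mathbf{w})^{\top}(\mathbf{b}-\mathbf{A}\mathbf{w})\big]+\epsilon\,(y_b-x_A^{\top}\mathbf{w})^{\top}(y_b-x_A^{\top}\mathbf{w})$ and differentiates the first-order condition in $\epsilon$, so the outlier enters through the score $x_A(y_b-x_A^{\top}\mathbf{w})=d_t\mathbf{x}_t^{\top}\mathbf{x}_t(R_{t+1}-d_t^{\top}\mathbf{w})$ and the normalizer is the Hessian-type quantity $\mathbb{E}(\mathbf{A}^{\top}\mathbf{A})$ (in the paper's notation, the expectation of the per-sample matrix $d_t\mathbf{x}_t^{\top}\mathbf{x}_t d_t^{\top}$). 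That M-estimation route is what produces exactly the form in Eq.~\ref{eq:IF}.

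The concrete gap is your last line: $\mathbf{A}^{\top}\mathbf{x}_t$ is a \emph{fixed population matrix} applied to the outlier feature, $\mathbb{E}[\tilde d\,\tilde{\mathbf{x}}^{\top}]\,\mathbf{x}_t$, and it cannot be ``expanded'' into $d_t\mathbf{x}_t^{\top}\mathbf{x}_t$, which is built entirely from the outlier's own $d_t$ and $\mathbf{x}_t$; equating the two substitutes the contaminating point for the integration variable inside the expectation --- precisely the conflation you flagged as the thing to avoid. For the same reason $(\mathbf{A}^{\top}\mathbf{A})^{-1}$ with the population $\mathbf{A}$ is not the object $\mathbb{E}(\mathbf{A}^{\top}\mathbf{A})^{-1}$ appearing in the statement. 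So your argument establishes a perfectly reasonable influence function for the TD fixed-point functional, but not the stated identity; to recover Eq.~\ref{eq:IF} you must perturb the quadratic objective (per-sample loss) as the paper does, rather than perturbing the moments $\mathbf{A}$ and $\mathbf{b}$, because the two contamination models lead to genuinely different influence functions.
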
 

Please refer to Appendix~\ref{appendix:IF} for the proof. Theorem~\ref{theorem:IF} shows the quantitative impact of an outlier pair $(\mathbf{x}_t, \mathbf{x}_{t+1})$ on the learned parameter $\mathbf{w}$. Moreover, a corollary can be immediately obtained to make a precise comparison of the impacts of perturbations on current and next state features.

\begin{corollary}\label{corollary:IF} Given the same perturbation $\eta$ on either current or next state features, i.e., $\mathbf{x}_t$, and $\mathbf{x}_{t+1}$, at the step $t$, if we approximate $\eta \eta^\top \mathbf{x}_t$ and $\eta \eta^\top \mathbf{w}$ as $\mathbf{0}$ as $\eta$ is small enough, the following relationship between the resulting variations of influence function, $\Delta_{\mathbf{x}_t} \psi$ and $\Delta_{\mathbf{x}_{t+1}} \psi$, holds:
	\begin{equation}\begin{aligned}
			\gamma \Delta_{\mathbf{x}_t} \psi  + \Delta_{\mathbf{x}_{t+1}} \psi= 2 \gamma d_t \eta \mathbf{x}^{\top}_t (R_{t+1} - d_t^{\top}\mathbf{w}) .
	\end{aligned}\end{equation}		
\end{corollary}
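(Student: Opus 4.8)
\textbf{Proof proposal for Corollary~\ref{corollary:IF}.}

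The plan is to apply the influence function formula in Eq.~\ref{eq:IF} from Theorem~\ref{theorem:IF} twice: once with the perturbation applied to the current state feature $\mathbf{x}_t \leftarrow \mathbf{x}_t + \eta$, and once with it applied to the next state feature $\mathbf{x}_{t+1} \leftarrow \mathbf{x}_{t+1} + \eta$. In each case I would compute the perturbed version of $d_t = \mathbf{x}_t - \gamma \mathbf{x}_{t+1}$, substitute into Eq.~\ref{eq:IF}, expand, and then subtract the unperturbed $\psi_{\mathbf{w},F_\pi}(\mathbf{x}_t,\mathbf{x}_{t+1})$ to isolate $\Delta_{\mathbf{x}_t}\psi$ and $\Delta_{\mathbf{x}_{t+1}}\psi$ respectively. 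Throughout, I would treat the expectation and the matrix factor $\mathbb{E}(\mathbf{A}^\top\mathbf{A})^{-1}$ as a common prefactor that does not interact with the perturbation (or, following the corollary's framing, work with the per-sample integrand $d_t \mathbf{x}_t^\top \mathbf{x}_t (R_{t+1} - d_t^\top \mathbf{w})$ and keep the prefactor implicit), so the algebra reduces to tracking how the scalar--vector product $d_t\, \mathbf{x}_t^\top \mathbf{x}_t\,(R_{t+1} - d_t^\top \mathbf{w})$ changes.

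For Case 1 (perturb $\mathbf{x}_t$): the new $d_t' = d_t + \eta$ and the new $\mathbf{x}_t' = \mathbf{x}_t + \eta$, so the integrand becomes $(d_t+\eta)(\mathbf{x}_t+\eta)^\top(\mathbf{x}_t+\eta)\bigl(R_{t+1} - (d_t+\eta)^\top\mathbf{w}\bigr)$. Expanding and discarding the terms that carry $\eta\eta^\top\mathbf{x}_t$ or $\eta\eta^\top\mathbf{w}$ (which the corollary hypothesizes are negligible), and also the genuinely higher-order $O(\|\eta\|^2)$ and $O(\|\eta\|^3)$ leftovers, I expect $\Delta_{\mathbf{x}_t}\psi$ to be a first-order expression linear in $\eta$ built from $d_t$, $\eta$, $\mathbf{x}_t$, $R_{t+1}$ and $\mathbf{w}$. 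For Case 2 (perturb $\mathbf{x}_{t+1}$): here $\mathbf{x}_t$ itself is unchanged but $d_t' = d_t - \gamma\eta$, so the integrand becomes $(d_t - \gamma\eta)\,\mathbf{x}_t^\top\mathbf{x}_t\,\bigl(R_{t+1} - (d_t-\gamma\eta)^\top\mathbf{w}\bigr)$; expanding and again dropping the $\gamma^2\eta\eta^\top\mathbf{w}$ term gives $\Delta_{\mathbf{x}_{t+1}}\psi$ to first order. The final step is to form the combination $\gamma\,\Delta_{\mathbf{x}_t}\psi + \Delta_{\mathbf{x}_{t+1}}\psi$ and observe that the terms arising from the change in the residual factor $R_{t+1} - d_t^\top\mathbf{w}$ — which appear with opposite signs in the two cases because $d_t$ shifts by $+\eta$ versus $-\gamma\eta$ — cancel in this particular weighting, leaving exactly $2\gamma\, d_t\,\eta\,\mathbf{x}_t^\top\mathbf{x}_t\,(R_{t+1} - d_t^\top\mathbf{w})$, matching the claimed identity.

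The main obstacle I anticipate is purely bookkeeping: keeping consistent track of which quantities are scalars ($\mathbf{x}_t^\top\mathbf{x}_t$, $R_{t+1}$, $d_t^\top\mathbf{w}$, and in Case 1 the extra $\eta^\top\mathbf{x}_t$, $\mathbf{x}_t^\top\eta$ contributions from $(\mathbf{x}_t+\eta)^\top(\mathbf{x}_t+\eta)$) versus vectors ($d_t$, $\eta$), and being careful about which cross-terms are first order in $\eta$ and must be retained versus second order and may be dropped. In particular, in Case 1 the factor $\mathbf{x}_t^\top\mathbf{x}_t$ itself picks up a first-order correction $2\,\eta^\top\mathbf{x}_t$, so one must check whether that correction survives after multiplication by the other factors or gets absorbed into the discarded higher-order remainder; the corollary's stated approximation (dropping $\eta\eta^\top\mathbf{x}_t$ and $\eta\eta^\top\mathbf{w}$) is what makes the cancellation clean, so I would make explicit at each step exactly which of the hypothesized-negligible products each discarded term corresponds to. Once the linear-in-$\eta$ expressions are written out carefully and the prefactor $\gamma$ is distributed over Case 1, the cancellation is a one-line check.
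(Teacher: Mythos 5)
Your plan is essentially the paper's own proof: drop the common prefactor $\mathbb{E}(\mathbf{A}^{\top}\mathbf{A})^{-1}$ and work with the integrand $\psi_0 = d_t\,\mathbf{x}_t^{\top}\mathbf{x}_t\,(R_{t+1}-d_t^{\top}\mathbf{w})$, substitute $\mathbf{x}_t\leftarrow\mathbf{x}_t+\eta$ (so $d_t\leftarrow d_t+\eta$) and $\mathbf{x}_{t+1}\leftarrow\mathbf{x}_{t+1}+\eta$ (so $d_t\leftarrow d_t-\gamma\eta$), expand to first order using the stated approximations, and take the combination $\gamma\Delta_{\mathbf{x}_t}\psi+\Delta_{\mathbf{x}_{t+1}}\psi$; carried out carefully, this yields the identity, so the method is sound. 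One bookkeeping correction to your sketch of the outcome, though: under the $\gamma$-weighting there are \emph{two} cancelling pairs, namely the residual-change terms $\mp\gamma\,d_t\,\mathbf{x}_t^{\top}\mathbf{x}_t\,\eta^{\top}\mathbf{w}$ and the leading-vector-change terms $\pm\gamma\,\eta\,\mathbf{x}_t^{\top}\mathbf{x}_t\,(R_{t+1}-d_t^{\top}\mathbf{w})$ (the latter pair you did not mention), while the term that survives is exactly the one you flagged as uncertain: the first-order correction $2\eta^{\top}\mathbf{x}_t$ to $\mathbf{x}_t^{\top}\mathbf{x}_t$ in Case 1, giving $2\gamma\,d_t\,(\eta^{\top}\mathbf{x}_t)\,(R_{t+1}-d_t^{\top}\mathbf{w})$ — the paper's notation $d_t\,\eta\,\mathbf{x}_t^{\top}$ should be read as the scalar $\eta^{\top}\mathbf{x}_t$ multiplying the vector $d_t$. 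Your displayed surviving term carries a spurious extra factor of $\mathbf{x}_t$ (you wrote $\eta\,\mathbf{x}_t^{\top}\mathbf{x}_t$ where the scalar $\eta^{\top}\mathbf{x}_t$ belongs), and the remainder is not due to incomplete cancellation of the residual-factor terms; with that attribution fixed, the one-line check goes through as the paper does it.
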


We provide the proof of Corollary~\ref{corollary:IF} in Appendix~\ref{appendix:IF}. Under this equation, the sensitivity of noises on $\mathbf{x}_t$ and $\mathbf{x}_{t+1}$, measured by $\Delta_{\mathbf{x}_t} \psi$ and $\Delta_{\mathbf{x}_{t+1}} \psi$, present a trade-off relationship as their weighted sum is definite. However, there is not an ordered relationship between $\Delta_{\mathbf{x}_t} \psi$ and $\Delta_{\mathbf{x}_{t+1}} \psi$. In summary, we conclude that the sensitivity of current and next state features against perturbations is normally divergent, and the degree of sensitivity is heavily determined by the task. These conclusions are similar to those we derived in the TD convergence part. 

\begin{proof} We combine the proof of Theorem~\ref{theorem:IF} and Corollary~\ref{corollary:IF} together. The TD fixed point $\mathbf{w}_{\text{TD}}$ to the system satisfies $\mathbf{A} \mathbf{w}_{\text{TD}} = \mathbf{b}$. Thus, the TD convergence point, i.e., TD fixed point, can be attained by solving the following regression problem:
	\begin{equation}\begin{aligned}
			\min_{\mathbf{w}} \Vert \mathbf{b} - \mathbf{A} \mathbf{w} \Vert^2
	\end{aligned}\end{equation}	
	To derive the influence function, consider the contaminated distribution which puts a little more weight on the outlier pair $(\mathbf{x}_t, \mathbf{x}_{t+1})$:
	\begin{equation}\begin{aligned}
			\hat{\mathbf{w}} = \mathop{\arg\min}_{\mathbf{w}} (1-\epsilon) \mathbb{E}[(\mathbf{b} - \mathbf{A} \mathbf{w})^{\top}(\mathbf{b} - \mathbf{A} \mathbf{w})]  + 	\epsilon (y_b-x_A^{\top} \mathbf{w})^{\top} (y_b-x_A^{\top} \mathbf{w}),
	\end{aligned}\end{equation}	
	where $y_b=R_{t+1}\mathbf{x}_t$ and $x_b=d_t \mathbf{x}_t^{\top}  $. We take the first condition:
	\begin{equation}\begin{aligned}
			(1-\epsilon)\mathbb{E}(2\mathbf{A}^{\top}\mathbf{A}\mathbf{w}-2\mathbf{A}^{\top}\mathbf{b})-2\epsilon x_A (y_b - x_A^{\top} \mathbf{w})=0.
	\end{aligned}\end{equation}	
	Then we arrange this equality and obtain:
	\begin{equation}\begin{aligned}
			(1-\epsilon)\mathbb{E}(\mathbf{A}^{\top}\mathbf{A} + x_A x_A^{\top}) \mathbf{w}_\epsilon = (1-\epsilon)\mathbb{E}(\mathbf{A}^{\top}\mathbf{b})+ \epsilon x_A y_b.
	\end{aligned}\end{equation}	
	Then we take the gradient on $\epsilon$ and let $\epsilon=0$, then we have:
	\begin{equation}\begin{aligned}
			(-\mathbb{E}(\mathbf{A}^{\top}\mathbf{A})+x_A x_A^\top) \mathbf{w}_\epsilon + \mathbb{E}(\mathbf{A}^{\top}\mathbf{A}) \psi_{\mathbf{w}, F_\pi} = -\mathbb{E}(\mathbf{A}^{\top}\mathbf{b}) +x_A y_b.
	\end{aligned}\end{equation}		
	We know that under the least square estimation, the closed-form solution of $\mathbf{w}_\epsilon$ is $\mathbb{E}(\mathbf{A}^{\top}\mathbf{A})^{-1}\mathbb{E}(\mathbf{A}^{\top}\mathbf{b})$. Thus, after the simplicity, we finally attain:
	\begin{equation}\begin{aligned}
			\psi_{\mathbf{w}, F_\pi}(\mathbf{x}_t, \mathbf{x}_{t+1})& = \mathbb{E}(\mathbf{A}^{\top}\mathbf{A})^{-1} x_A (y_b - x_A^{\top} \mathbf{w})\\
			&= \mathbb{E}(\mathbf{A}^{\top}\mathbf{A})^{-1} d_t \mathbf{x}_t^{\top}\mathbf{x}_t (R_{t+1} - d_t^{\top}\mathbf{w}).
	\end{aligned}\end{equation}	
	Next, we prove the Corollary. We only need to focus on the item $  d_t \mathbf{x}_t^{\top}\mathbf{x}_t (R_{t+1} - d_t^{\top}\mathbf{w})$, which we denote as $\psi_0$. Then we use $\Delta_{x_t} \psi$ and $\Delta_{x_{t+1}} \psi$ to represent the change of $\psi$ after adding perturbations $\eta$ on $\mathbf{x}_t$ and $\mathbf{x}_{t+1}$, respectively. In particular, since we approximate $\eta \eta^\top \mathbf{x}_t$ and $\eta \eta^\top \mathbf{w}$ as $\mathbf{0}$, then we have that the change of influence function for the perturbation $\eta$ on the current state feature $\mathbf{x}_t$:
	\begin{equation}\begin{aligned}
			\Delta_{x_t} \psi&\approx(d_t + \eta) (\mathbf{x}_t^{\top}\mathbf{x}_t + 2\eta^\top \mathbf{x}_t) (R_{t+1} - d_t^{\top}\mathbf{w} - \eta^\top \mathbf{w}) -\psi_0\\
			&\approx - d_t \mathbf{x}_t^{\top}\mathbf{x}_t \eta^\top \mathbf{w} + 2d_t \eta^\top \mathbf{x}_t (R_{t+1} - d_t^{\top}\mathbf{w}) + \eta \cdot \mathbf{x}_t^{\top}\mathbf{x}_t (R_{t+1} - d_t^{\top}\mathbf{w})\\
			& = 2d_t \eta^\top \mathbf{x}_t (R_{t+1} - d_t^{\top}\mathbf{w}) - \frac{1}{\gamma} (\gamma d_t \mathbf{x}_t^{\top}\mathbf{x}_t \eta^\top \mathbf{w} - \gamma \eta \mathbf{x}_t^{\top}\mathbf{x}_t (R_{t+1} - d_t^{\top}\mathbf{w})).
	\end{aligned}\end{equation}	
	Then the influence function for the perturbation $\eta$ on the next state feature $\mathbf{x}_{t+1}$ is:
	\begin{equation}\begin{aligned}
			\Delta_{x_{t+1}} \psi&=(d_t -\gamma \eta) \mathbf{x}_t^{\top}\mathbf{x}_t (R_{t+1} - d_t^{\top}\mathbf{w} + \gamma \eta^\top \mathbf{w})- \psi_0\\
			&\approx \gamma d_t \mathbf{x}_t^{\top}\mathbf{x}_t \eta^\top \mathbf{w} - \gamma \eta \mathbf{x}_t^{\top}\mathbf{x}_t (R_{t+1} - d_t^{\top}\mathbf{w}).
	\end{aligned}\end{equation}	
	Finally, it is easy to observe that the following relationship holds:
	\begin{equation}\begin{aligned}
			\gamma \Delta_{x_t} \psi = 2 \gamma d_t \eta x^{\top}_t (R_{t+1} - d_t^{\top}\mathbf{w}) - \Delta_{x_{t+1}} \psi.
	\end{aligned}\end{equation}	
\end{proof}

\section{Experimental Setup}\label{appendix:setting}

\paragraph{Noise Strength.} We use Gaussian noise with different standard deviations. In particular, for a better presentation to compare the difference, we select proper standard deviations as 0.05, 0.1 in Cart Pole, 0.01, 0.0125 in Mountain Car, 0.01, 0.05 in Breakout and 0.05 in Qbert. For the adversarial noises, we select the perturbation sizes $\epsilon$ as 0.05, 0.1 in Cart Pole, 0.01, 0.1 in Mountain Car, 0.005, 0.01 in Breakout, and 0.005 in Qbert. 

\paragraph{Distributional Loss.} After a linear search, in the QR-DQN, We set $\kappa=1$ for the Huber quantile loss across all tasks due to its smoothness. 

\paragraph{Cart Pole} After a linear search, in the QR-DQN, we set the number of quantiles $N$ to be 20, and evaluate both DQN and QR-DQN on 200,000 training iterations.

\paragraph{Mountain Car} After a linear search, in the QR-DQN, we set the number of quantiles $N$ to be 2, and evaluate both DQN and QR-DQN on 100,000 training iterations. 

\paragraph{Breakout and Qbert} After a linear search, in the QR-DQN, we set the number of quantiles $N$ to be 200, and evaluate both DQN and QR-DQN on 12,000,000 training iterations.

\section{Extension Analysis of Perturbations on Current, Next and Both State Observation}\label{appendix:states}

We provide the results of robust performance with different perturbation states in Breakout under random and adversarial noisy state observations in Figure~\ref{Figure_breakout_rand} and \ref{Figure_breakout_adv}, respectively.

\begin{figure*}[htbp]
	\centering
	\includegraphics[width=1.0\textwidth,trim=0 0 0 0,clip]{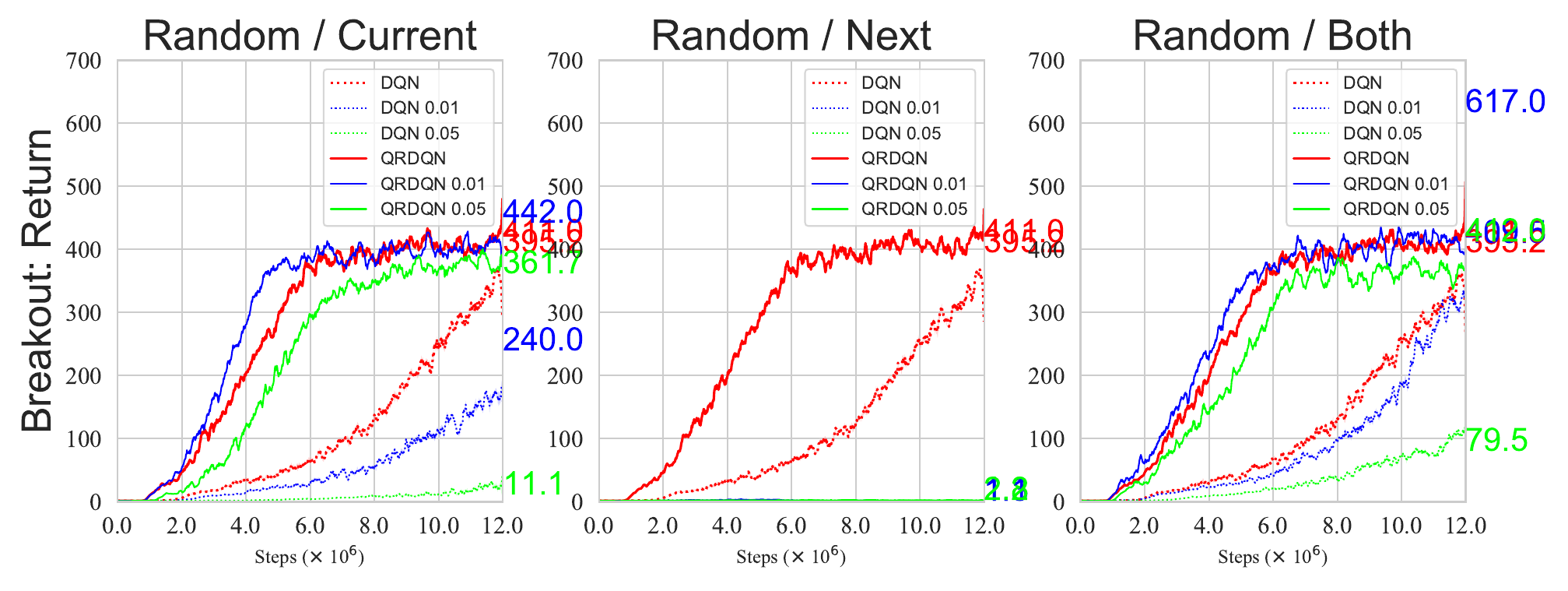}
	\caption{Average returns of DQN and QRDQN against random state observation noises on Breakout environment over 200 runs.}
	\label{Figure_breakout_rand}
\end{figure*}

In the random state observation setting as shown in Figure~\ref{Figure_breakout_rand}, it turns out that perturbations on current state observations is more robust than next state by comparing the first and second subfigures given the same perturbation strength. This empirical finding demonstrates the conclusion in Appendix~\ref{appendix:TD} and \ref{appendix:IF} that the sensitivity of current and next states are normally divergent. The result when noises are added on both states~(the right figure), seems to be most robust, which is consistent with the mildest TD convergence condition as shown in Theorem~\ref{theorem:TD}.

\begin{figure*}[htbp]
	\centering
	\includegraphics[width=1.0\textwidth,trim=0 0 0 0,clip]{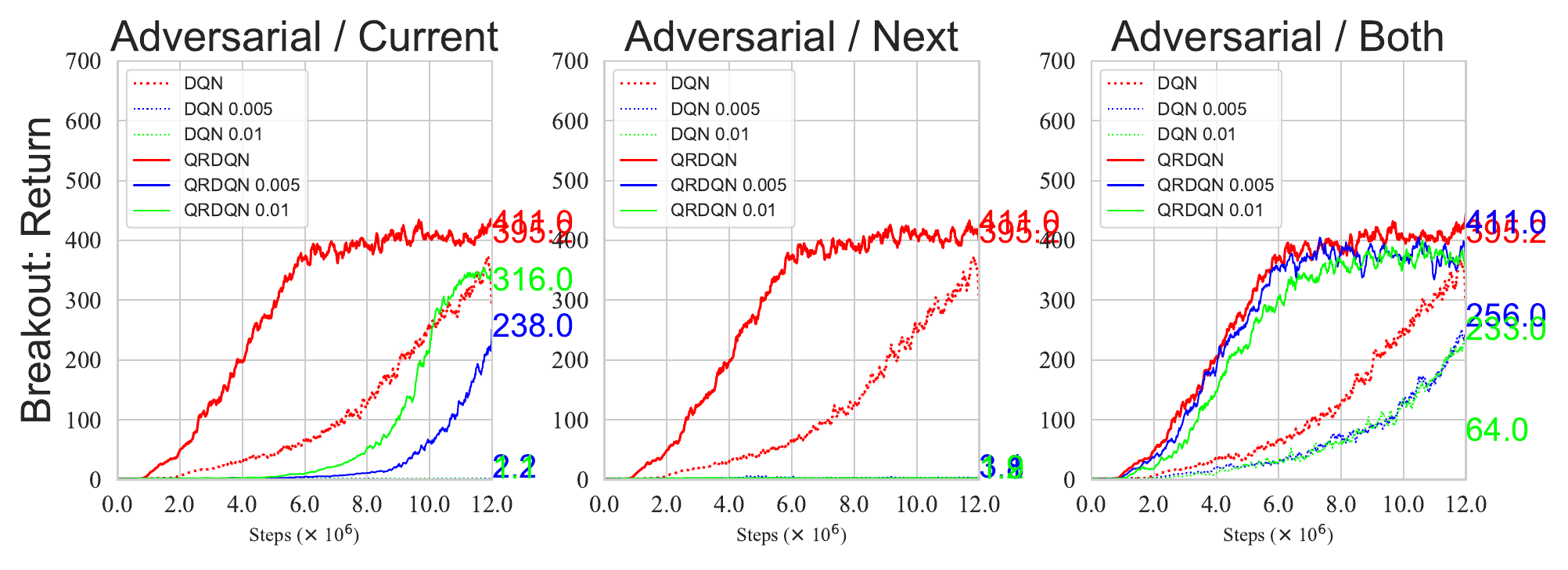}
	\caption{Average returns of DQN and QRDQN against adversarial state observation noises on Breakout environment over 200 runs.}
	\label{Figure_breakout_adv}
\end{figure*}

In the adversarial state observation setting as shown in Figure~\ref{Figure_breakout_adv}, it also turns out that perturbations on current state observations is more robust than next state by comparing the first and second subfigures given the same perturbation strength, and algorithms converge easiest. These empirical results also demonstrates the conclusion we made in Appendix~\ref{appendix:TD} and \ref{appendix:IF}.

Finally, we also provide learning curves on the typical four games, including Breakout, Qbert, Cartpole and Mountain car against \textbf{current state observations} in Figures~\ref{Figure_rand_current} and \ref{Figure_adv_current}, and against \textbf{both current and next state observation noises}, in Figures~\ref{Figure_SNDMP_rand} and \ref{Figure_SNDMP_adv}. 

\begin{figure*}[t!]
	\centering\includegraphics[width=0.9\textwidth,trim=0 0 0 0,clip]{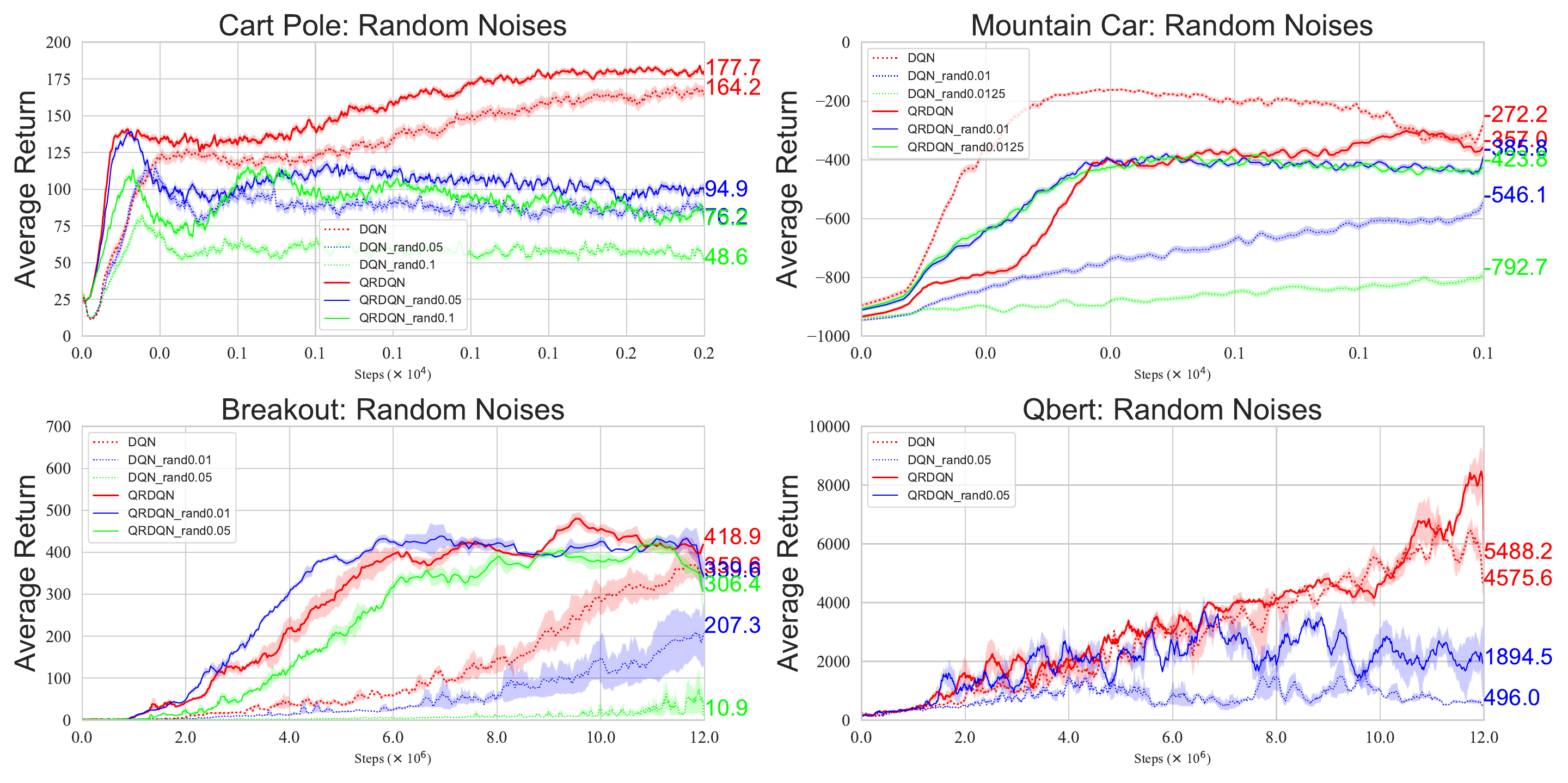}
	\caption{Average returns of DQN and QRDQN against \textbf{random} state observation noises across four games. \textbf{randX} in the legend indicates random state observations with the standard deviation \textbf{X}.}
	\label{Figure_rand_current}
\end{figure*}

\begin{figure*}[htbp]
	\centering\includegraphics[width=0.95\textwidth,trim=0 0 0 0,clip]{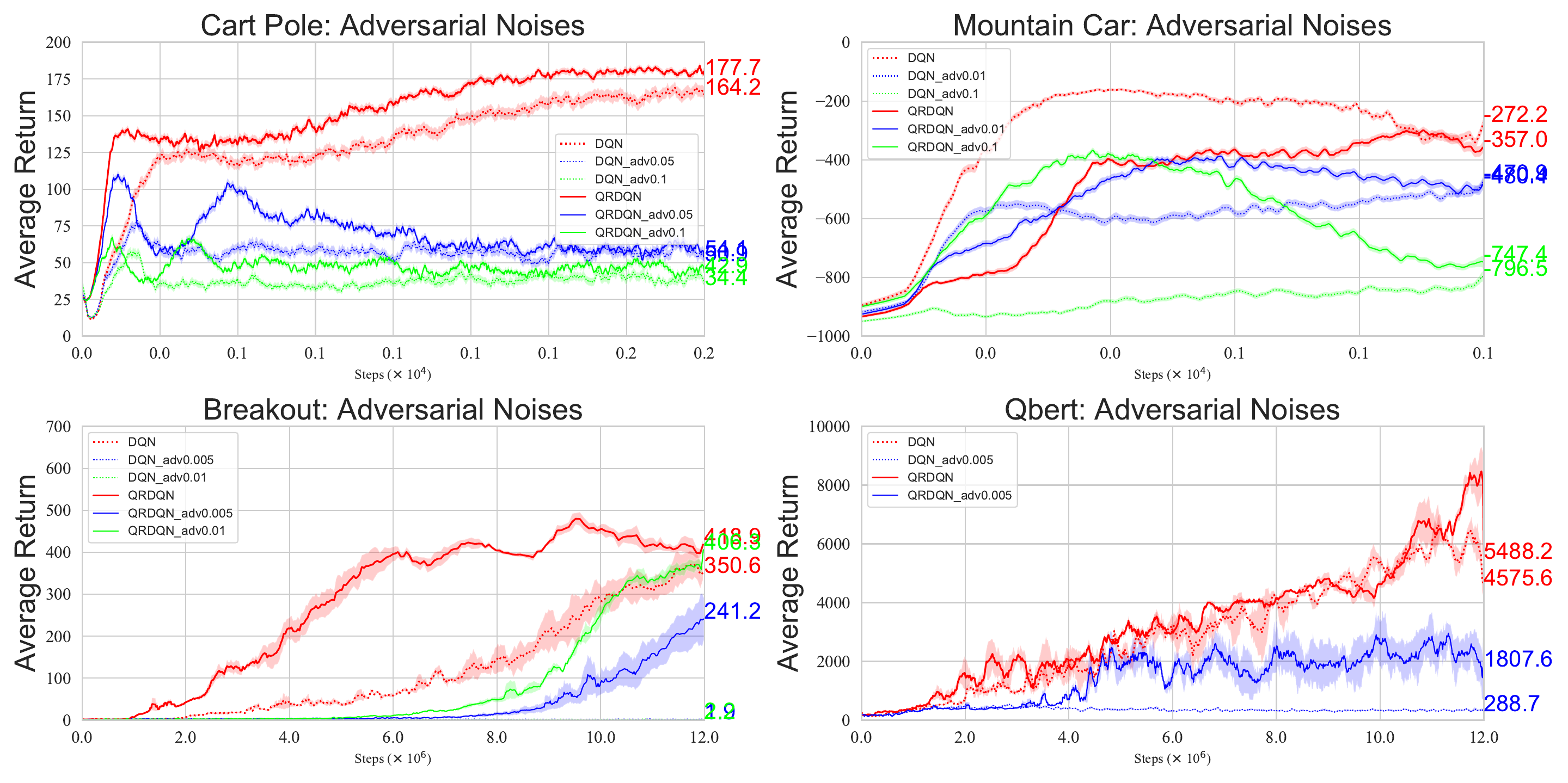}
	\caption{Average returns of DQN and QRDQN against \textbf{adversarial} state observation noises across four games. \textbf{advX} in the legend indicates random state observations with the perturbation size $\epsilon$ \textbf{X}.}
	\label{Figure_adv_current}
\end{figure*}

\begin{figure*}[htbp]
	\centering\includegraphics[width=1.0\textwidth,trim=0 0 0 0,clip]{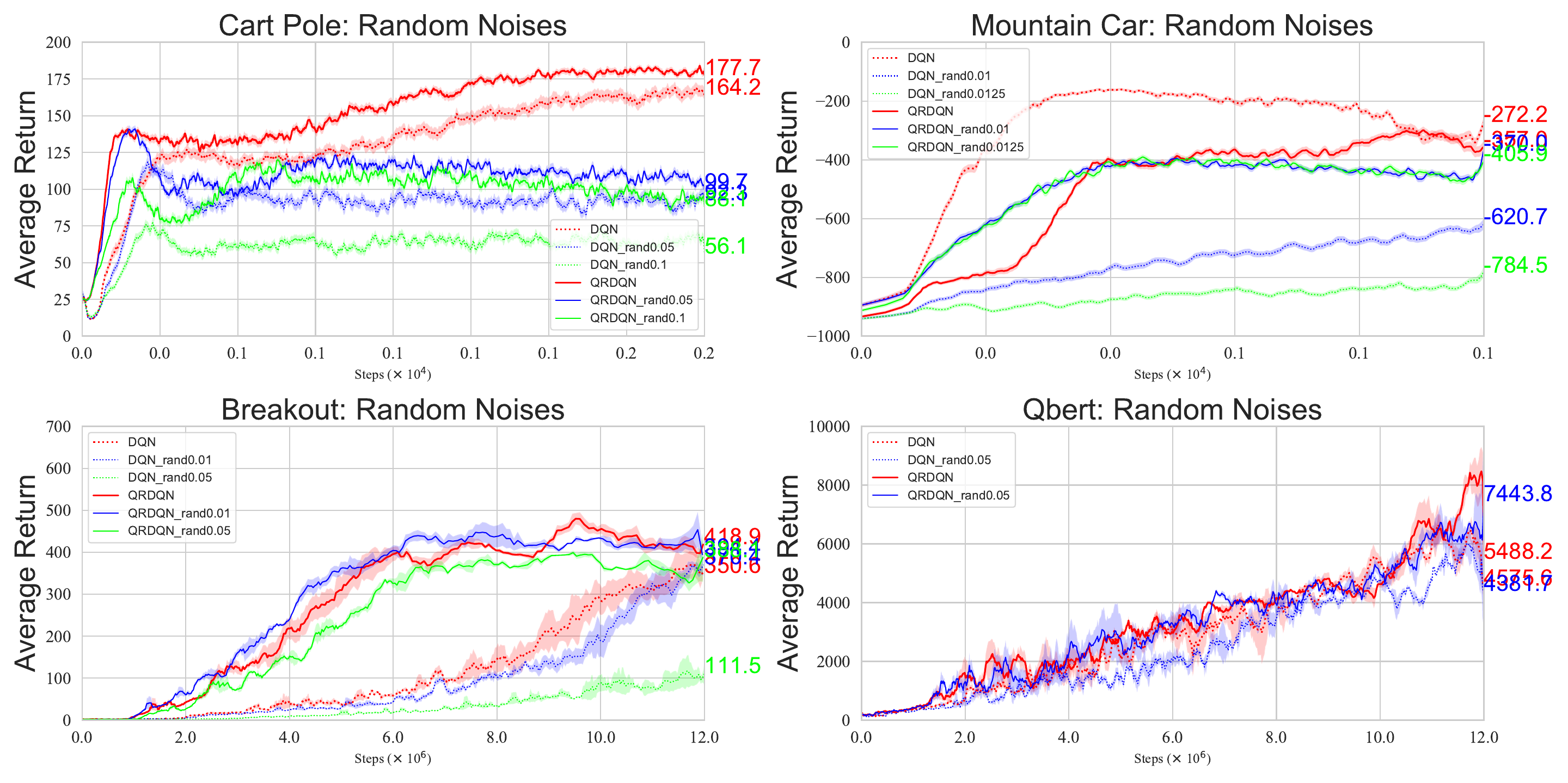}
	\caption{Average returns of DQN and QRDQN against \textbf{random} state observation noises \textbf{current and next states} across four games. \textbf{randX} in the legend indicates random state observations with the standard deviation \textbf{X}. Both QRDQN~(solid lines) and DQN~(dashed lines) converge and the convergence level is determined by the perturbation strength.}
	\label{Figure_SNDMP_rand}
\end{figure*}

\begin{figure*}[htbp]
	\centering\includegraphics[width=0.9\textwidth,trim=0 0 0 0,clip]{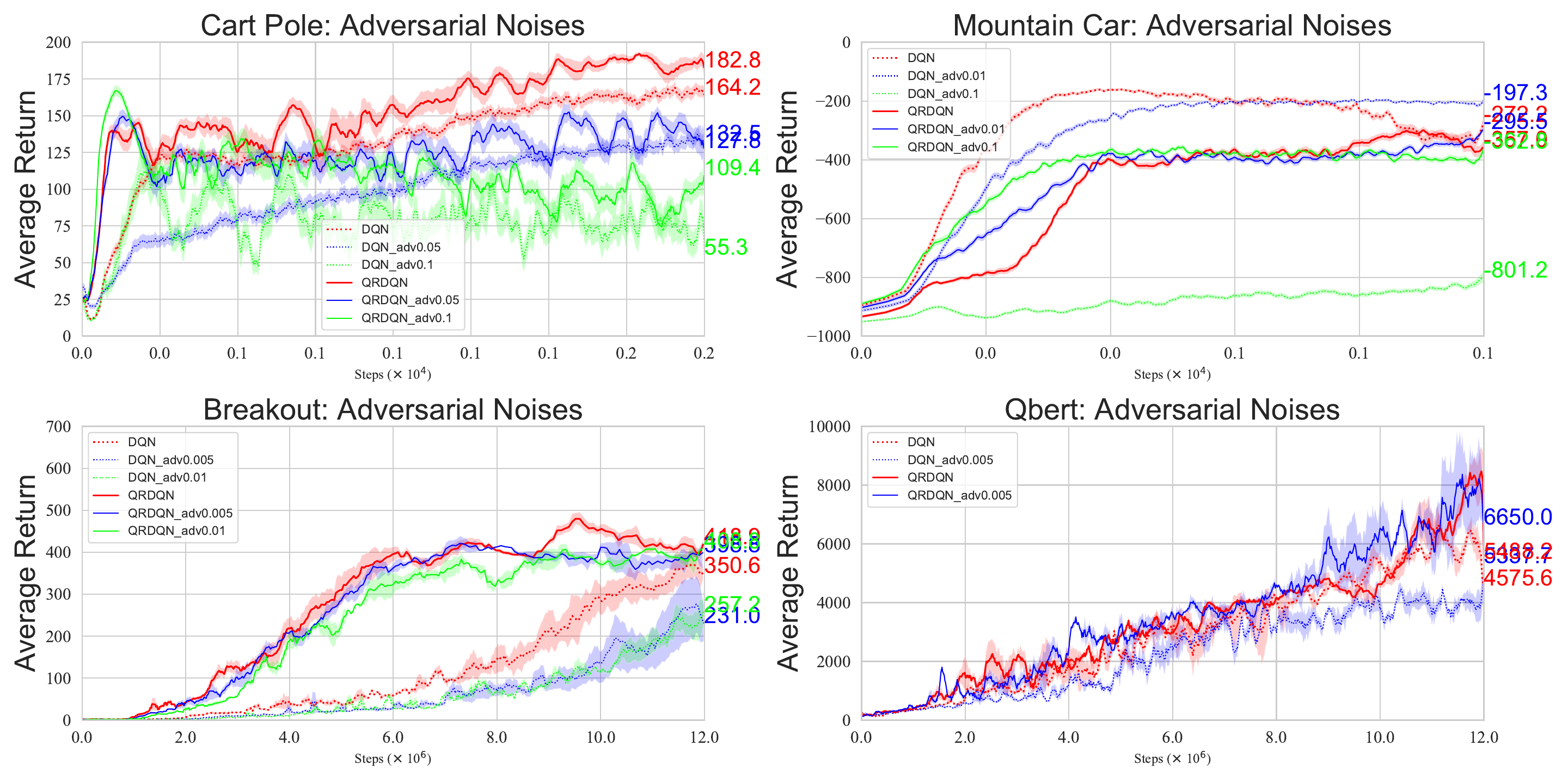}
	\caption{Average returns of DQN and QRDQN against \textbf{adversarial} state observation noises on \textbf{current and next states} across four games. \textbf{advX} in the legend indicates adversarial state observations with the perturbation size $\epsilon$ as \textbf{X}. Both QRDQN~(solid lines) and DQN~(dashed lines) converge to different optimums determined by the perturbation strength $\epsilon$.}
	\label{Figure_SNDMP_adv}
\end{figure*}

\clearpage
\section{Discussion about More Adversarial Attacks}\label{appendix:attacks}

We are investigating more advanced adversarial attacks to further demonstrate the robustness advantage of distributional RL algorithms. \cite{zhang2020robust} proposed Robust SARSA (RS) attack and Maximal Action Difference (MAD) attack, however, these two advanced attacked are specifically designed for PPO algorithm. Meanwhile, Stochastic gradient Langevin dynamics (SGLD) and convex relaxation attacks proposed by~\cite{zhang2020robust} are for DDPG algorithm. PGD attacks, serveing as the most natural attack for value-based RL algorithms, are leveraged to evaluate SA-DQN algorithm. We also probe the training robustness of distributional RL algorithms under the more advanced \textbf{MAD attack}~\cite{zhang2020robust} on Ant, where Figure~\ref{Figure_mujoco_mad} still suggests a similar robustness result of distributional RL.

\begin{figure*}[htbp]
	\centering
	\begin{subfigure}[t]{0.45\textwidth}
		\centering
		\includegraphics[width=\textwidth,trim=10 10 0 10,clip]{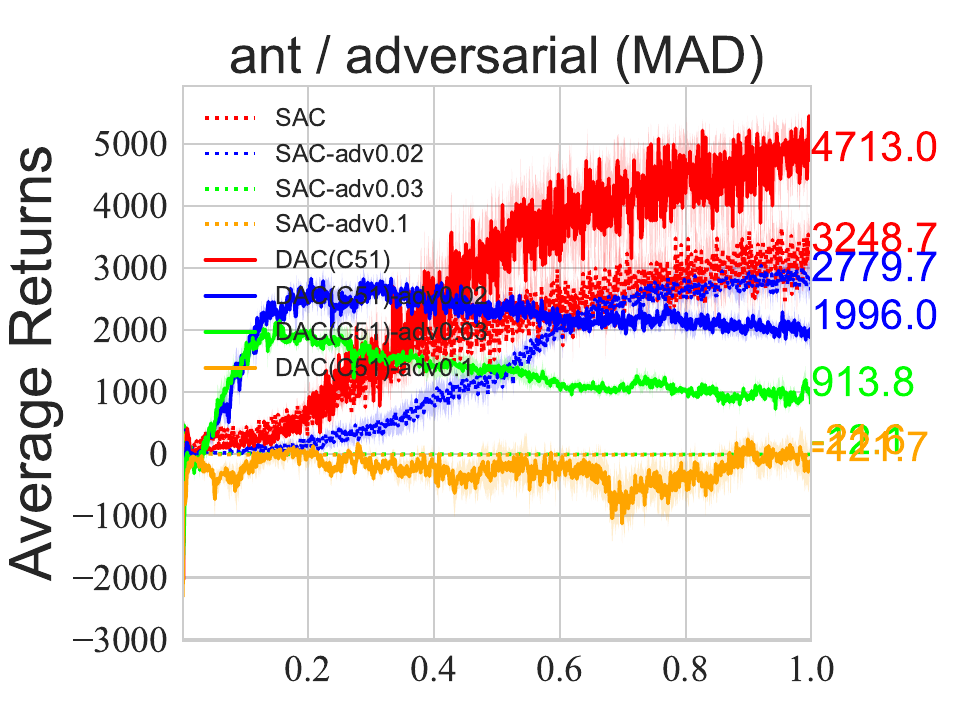}
	\end{subfigure}
	\begin{subfigure}[t]{0.45\textwidth}
		\centering
		\includegraphics[width=\textwidth,trim=0 10 0 10,clip]{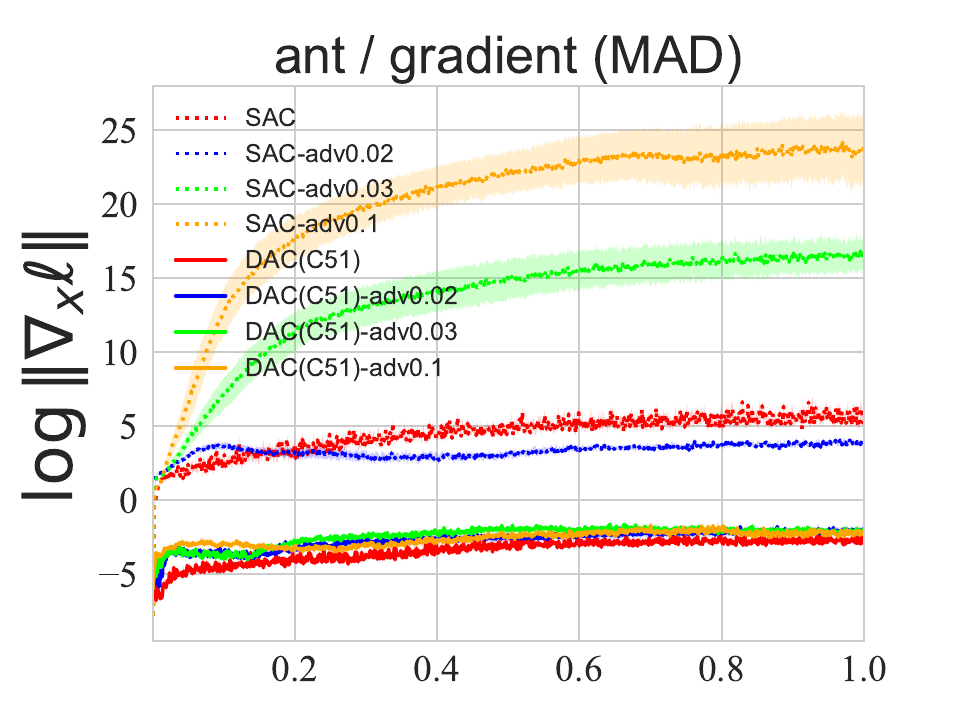}
	\end{subfigure}
	\caption{Robustness on MAD attack on Ant.}
	\label{Figure_mujoco_mad}
\end{figure*}

\section{Experiments on D4PG}\label{appendix:D4PG}

\begin{figure*}[htbp]
	\centering
	\begin{subfigure}[t]{0.45\textwidth}
		\centering
		\includegraphics[width=\textwidth,trim=0 0 0 0,clip]{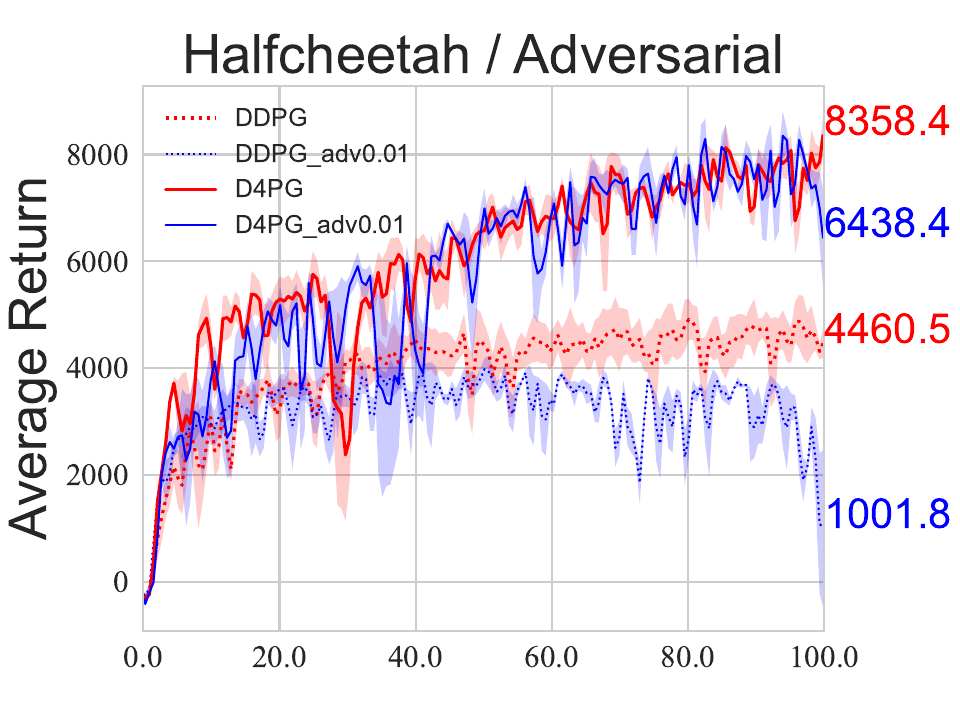}
	\end{subfigure}
	\caption{Average returns of DDPG and D4PG against \textbf{adversarial} state noises on Halfcheetah. }
	\label{Figure_mujoco_grad_newgame}
\end{figure*}

\end{document}